\def\eqref#1{equation~\ref{#1}}
\def\1{\bm{1}}
\DeclareMathAlphabet{\mathsfit}{\encodingdefault}{\sfdefault}{m}{sl}
\SetMathAlphabet{\mathsfit}{bold}{\encodingdefault}{\sfdefault}{bx}{n}
\newtheorem{lemma}{Lemma}[section]
\newtheorem{proposition}{Proposition}[section]
\newtheorem{corollary}{Corollary}[section]
\newtheorem{hypothesis}{Hypothesis}[section]
\newtheorem{theorem}{Theorem}[section]
\newtheorem*{lemma*}{Lemma}
\newtheorem*{corollary*}{Corollary}
\theoremstyle{definition}
\newtheorem{definition}{Definition}[section]
\title{Almost Bayesian: Dynamics of SGD Through Singular Learning Theory}
\author{%
  Max Hennick \\
  Department of Mathematics and Statistics\\
  University of New Brunswick \& TrojAI\\
  \texttt{mhennick@unb.ca} \\
  % examples of more authors
  \And
   Stijn De Baerdemacker \\
   Department of Mathematics and Statistics, Department of Chemistry \\
   University of New Brunswick
  % Address \\
  \texttt{stijn.debaerdemacker@unb.ca} \\
}
\begin{document}

\maketitle

\begin{abstract}
  The nature of the relationship between Bayesian sampling and stochastic gradient descent in neural networks has been a long-standing open question in the theory of deep learning. We shed light on this question by modeling the long runtime behaviour of SGD as diffusion on porous media. Using singular learning theory, we show that the late stage dynamics are strongly impacted by the degeneracies of the loss surface. From this we are able to show that under reasonable choices of hyperparameters for SGD, the local steady state distribution of SGD is effectively a tempered version of the Bayesian posterior over the weights which accounts for local accessibility constraints. We then empirically verify the porous diffusion picture across multiple models and datasets, and provide experimental evidence of the steady state-Bayesian posterior correspondence.
\end{abstract}

\section{Introduction}
One of the core problems in developing a scientific theory of deep learning models is giving a descriptive theory of how the internal model structure evolves during training as the model gains "knowledge" about its training distribution (\citep{McGrath_2022}, \citep{olsson2022context}) and how this evolution relates to the generalization ability of deep learning models. Classical methods for understanding model generalization such as the Bayesian Information Criterion \citep{bic} fail to give accurate descriptions of the generalization behavior of deep learning, due to its "singular" nature \citep{Wei_2023}.

This has lead recent research to utilize Watanabe's \textit{singular learning theory} (SLT) \citep{Watanabe2009AlgebraicGA} as the basis for studying deep learning models. The key result of singular learning theory is the \textit{widely applicable Bayesian information criterion} \citep{watanabe2012widelyapplicablebayesianinformation} which (broadly speaking) says that the generalization error of a model with parameter $w$ is controlled by the \textit{learning coefficient} $\lambda(w)$, which corresponds to the "complexity" of some local area around the parameter. Measuring how this quantity evolves over time has been proposed as a method to study the emergence of structure within neural networks (\citep{lau2024locallearningcoefficientsingularityaware}, \citep{wang2024loss}) and has given very promising results.

Despite this, it is not clear how the dynamical picture of SGD interacts with the purely Bayesian description of SLT. It has been shown that there is seemingly some relationship between Bayesian sampling of parameter space of neural networks and SGD, both experimentally \citep{mingard2020sgdbayesiansamplerwell}, and theoretically under assumptions of non-degeneracy of minima of the loss \citep{mandt2016variationalanalysisstochasticgradient} (which is known to be false in general). Here we extend this connection to the more general case by describing the late stage training dynamics of SGD using a fractional Fokker-Planck equation which can be solved explicitly under reasonable assumptions. We show that the steady-state solution of this equation is related to the purely Bayesian posterior by tempering probabilities based on accessibility constraints determined by the learning coefficient. Potential practical applications of the results presented here are discussed in appendix \ref{app:practical_applications}.

%By studying the corresponding diffusive process of neural network weights governed by SGD we are able to show that the weight movement is behaves diffusion on porous media. By then finding the steady-state of the corresponding diffusion processes, we are able to show how the purely Bayesian picture is related to the distribution of solutions found by SGD. We find that the distribution of solutions found by SGD is governed by both the local geometric properties of the loss and the choice of hyperparameters. We then conduct experiments to verify predictions made by our theory, showing that it makes accurate predictions of SGD behaviors in previously poorly understood regimes. For ease of reproducibility code is provided at \textbf{anonymized}.

\section{Related Work}
\subsection{Singular Learning Theory}
Our work relies upon results coming from singular learning theory (\citep{watanabe2012widelyapplicablebayesianinformation}, \citep{watanabe2022recentadvancesalgebraicgeometry}, \citep{watanabe2024reviewprospectalgebraicresearch}, \citep{Watanabe2009AlgebraicGA}), the known relationship between inference and thermodynamics \citep{LaMont_2019}, and the application of singular learning theory to the study of deep learning, referred to as \textit{developmental interpretability} (\citep{wang2024differentiationspecializationattentionheads}, \citep{wang2024loss}, \citep{chen2023dynamicalversusbayesianphase}). We make particular use of the estimation methods for the local learning coefficient introduced in \citep{lau2024locallearningcoefficientsingularityaware} using \citep{devinterpcode}. 

\subsection{Gradient Noise and SGD Dynamics}
The methods used here are related to the \textit{Stochastic Gradient Noise model} (SGN) of SGD (\citep{zhou2021theoreticallyunderstandingsgdgeneralizes}, \citep{battash2023revisitingnoisemodelstochastic}, \citep{nguyen2019exittimeanalysisstochastic}, \citep{simsekli2019tailindexanalysisstochasticgradient}, \citep{Mignacco_2022}) due to the relationship between the Fokker-Planck equation and the Langevin equation used in SGN. This framework has been used, for example, to model escape times from local minima \citep{xie2021diffusiontheorydeeplearning}.

 Other works have studied the diffusive-like dynamics of SGD \citep{fjellstrom2022deeplearningstochasticgradient}, and even modeled SGD as an Ornstein-Uhlenbeck process to relate the dynamics of SGD back to the purely Bayesian case \citep{mandt2016variationalanalysisstochasticgradient}. However, this framework requires that the minima of the loss be quadratic, which means it cannot accurately capture the behaviour of SGD in neural networks due to the degeneracy of local minima. Furthermore, other works have also found connections between SGD and fractal geometry (\citep{NEURIPS2021_fractalsgd}, \citep{haussdorff_heavy_tails}) by the use of iterated function systems and Feller processes. Although related to the results here, the formalisms used are significantly different and the exact relationship is not straightforward.

 The most closely related to the work here is \citep{chen2021anomalousdiffusiondynamicslearning} who show that many networks seem super-diffusive near initialization and decay into sub-diffusion over time. They also give a relationship to a type of fractal diffusion to explain this. However, they give no theoretical results, relying entirely on experimental results to draw conclusions. Our work instead focuses on a rigorous theoretical model that allows us to develop a theory about the long runtime nature of SGD which explains the observations made previously, and we provide experimental results to verify theoretical predictions.

%However, the Langevin equation used by these works usually relies upon an explicitly anisotropic noise distribution on short time scales which makes the Fokker-Planck equation generally intractable. The most closely related results to ours from this line of work is

\section{Fractional Dynamics of Deep Learning}\label{sec:fract_dynamic}
\subsection{Gradient Noise and the Fokker-Planck Equation}
Consider a neural network defined by some set of parameters $w \in W$ (where we assume $W$ is compact throughout) and let $\mathcal{X}$ be the set of tuples $(x_i, f(x_i))$ where $f$ is the oracle that describes our decision problem. Denote the loss function by $L: \mathcal{X} \times W \to \mathbb{R}$ and set $\mathcal{L}[\mathcal{X}, w] = \mathbb{E}_\mathcal{X}[L(x,w)]$. Letting $X_m \subset \mathcal{X}$ be a randomly sampled subset of possible inputs, the empirical loss on $X_m$ will then be denoted $\mathcal{L}_m[X_m, w] = \mathbb{E}_{X_m}[L(x,w)]$. For the purposes of the theoretical analysis, we will assume that we are working in the large batch size regime so that the estimation noise of the loss (and gradient) doesn't dominate the dynamics of the system.

\subsubsection{Gradient Noise and Sub-Diffusion}
There is extensive literature which attempts to capture the dynamics of SGD by decomposing the weight updates (under some abuse of notation) into the form
\begin{equation}
    \frac{d w}{dt} = -\gamma\nabla \mathcal{L}(w_{t-1}) + \Sigma_{w_{t-1}}
\end{equation}
where $\mathcal{L}$ is the population loss, $t$ is the timestep, $\gamma$ is the learning rate, and $\Sigma_{w_{t-1}}$ is a random vector (which we will assume in this work is an anisotropic Gaussian). This is what is generally referred to as a \textit{Langevin stochastic differential equation}. Systems governed by such SDEs have a displacement $R(t) \propto t^{\frac{1}{2}}$, meaning they diffuse like Brownian motion.

 However, most works which examine the weight dynamics don't agree with this model. It has been found that networks trained under SGD can behave super-diffusively early in training, becoming sub-diffusive as training continues \citep{chen2021anomalousdiffusiondynamicslearning}. Our experiments agree with this, finding that the displacement of neural network weights after long run times are described well by a power law like $R(t) \propto t^{\frac{1}{\nu}}$ for $\nu \geq 2$ \citep{BOUCHAUD1990127} (an example of which can be seen in figure \ref{fig:weight_movement}). Similarly, it has been observed that the weight movement of SGD (with momentum and weight decay) can have weight displacement that scales logarithmically like $R(t) \propto \ln t$ \citep{hoffer2018trainlongergeneralizebetter}. This behaviour cannot be captured by the traditional Langevin equation and requires the introduction of a \textit{subordination} term. Such non-Brownian diffusion is collectively referred to as \textit{anomalous diffusion}.

\begin{figure}[H]
\centering
\includegraphics[width=12cm,height=12cm, keepaspectratio]{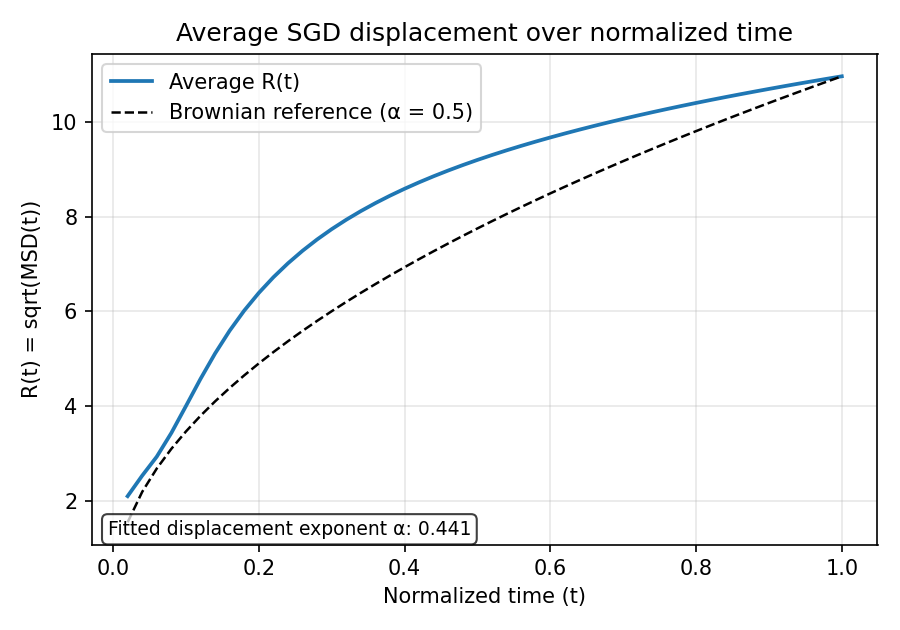}
\caption{Mean weight displacement of a collection of fully connected neural networks trained using SGD on a randomly generated Moons dataset \citep{scikit-learn}, compared with expected displacement in the case of Brownian motion. It can be seen that this displays anomalous diffusion corresponding to early super-diffusion followed by late stage sub-diffusion.}
\label{fig:weight_movement}
\end{figure}
%\subsubsection{Fractional Fokker-Planck}
To tackle this problem, we move into a formalism which is dual to the SDE picture, being the Fokker-Planck equation. Intuitively, the SDE picture describes the stochastic evolution of a single run while the Fokker-Planck equation is what describes the deterministic evolution of the probability distribution over parameter space over time determined by the SDE.

We now give the Fokker-Planck equation (FPE) in weight space (that is, $\nabla = \nabla_w$):
\label{FPEq}
\begin{equation}
    \frac{\partial p(w,t)}{\partial t} = \nabla \cdot (D(w,t) \nabla p(w,t) - \gamma p(w,t)\nabla \mathcal{L}(w))
\end{equation}
where $p$ is a probability density function (density of states), $D$ is the diffusion coefficient, $\gamma$ is a scalar (usually called friction), and $\mathcal{L}$ is a loss function which in a physical sense acts as a potential energy. %This equation corresponds to Langevin dynamics as previously described. %However, it has been observed that the mean squared displacement of neural network weights (when trained using SGD) behaves like an anomalous diffusion process \citet{hoffer2018trainlongergeneralizebetter}, which cannot be captured by the normal FPE.

While the standard FP equation describes the behaviour of standard Brownian motion, one can hande the sub-diffusive case by introducing the \textit{(Caputo) fractional derivative operator}\citep{Diethelm2019} $\mathcal{D}^{\alpha}_t$ where $0<\alpha<1$ is a real number. Letting $f$ be some arbitrary (differentiable) function of $t$ the Caputo fractional derivative operator is defined as
\begin{equation}
    \mathcal{D}^\alpha_t f(t) = \frac{1}{\Gamma(1-\alpha)} \int_0^t \frac{f'(t)}{(t-\tau)^\alpha} d\tau
\end{equation}

%The fractional derivative is a non-local operator, and can be conceptualized as defining a transformation between a function and its first derivative. Normal integer derivatives are localized at a particular point, whereas the fractional derivative is weighted by the ``memory kernel" $(t-\tau)^\alpha$ which relates the current behavior of the function to previous behaviors of the function, which decays in time as a power law. The Gamma function in the fractional derivative is effectively a normalization function. 

We now define the (time) fractional Fokker-Planck equation (FFPE) for SGD\footnote{Since this is a continuous time formulation, it is more accurate to call it a stochastic gradient flow, however the former is known to be reasonably well-approximated by the latter \citep{sgd_sde}.} as:
\label{FFPE}
\begin{equation}
    \mathcal{D}^\alpha_t\ p(w,t) = \nabla \cdot (D(w,t) \nabla p(w,t) - \gamma p(w,t)\nabla \mathcal{L}_m[w])
\end{equation}
Where the $X_m$ is dropped from the loss expression for simplicity. We note here that this equation itself does not directly describe ultra-slow diffusion (that is, where the displacement $R(t) \propto \ln t$). However, ultra-slow diffusion appears ``in the limit" of power law sub-diffusion \citep{Kochubei_2008}. A discussion of this as well as the role of the fractional derivative are given in in the appendix.

One might now try to modify this to a ``time-space fractional" Fokker-Planck equation to account for the potential super-diffusive behaviour early in training. However, we are interested in studying the steady states of the system, and under some very mild assumptions (namely that the probability distribution doesn't lose mass over training) the steady state of the system does not depend on this early stage of training, and can be captured by solving the given time fractional FPE (\citep{FFPE_inv_ss}, \citep{anom_diff_thermal}\footnote{The super-diffusive component matters for studying things like relaxation and crossover time.}). However, in our case, we still run into difficulty since the diffusion coefficient is a location-dependent inhomogeneous diffusion tensor (that is, different dimensions have distinct diffusion coefficients) instead of a single scalar. Luckily, as we will discuss in the next section, we are able to approximate the diffusion tensor as a single scalar function late in training.

\subsection{Fractal Dimensions and Subdiffusion}\label{sec:diff_coeff}

\subsubsection{Singular Learning Theory and Fractal Dimensions}\label{sec:slt_df}
In order to capture the local geometric structure that impacts the diffusive process we make use of singular learning theory \citep{Watanabe2009AlgebraicGA} via the local learning coefficient (LLC) \citep{lau2024locallearningcoefficientsingularityaware}. We give a brief introduction to these ideas here, but a more substantial introduction is given in appendix \ref{app:slt}.

Consider our loss function to be the Kullback-Leibler divergence\footnote{We can just as well use the log loss (as it only differs by an additive constant) but using the KL-divergence simplifies the analysis.} $\mathcal{K}_m[w]$. Consider then the ball $B_r(w^*)$ of radius $r$ about some "true parameter" $w^*$ such that $\mathcal{K}[w^*] = 0$. Letting $\epsilon$ be some arbitrarily small constant, and denote the set of parameters which have loss $K_m[w] < \epsilon$ within the ball $B_r(w^*)$ of radius $r$ as $B_r(w^*, \epsilon)$. Consider then the \textit{singular integral}
\begin{equation}
    V(\epsilon) = \int_{B_r(w^*, \epsilon)} \rho(w) dw
\end{equation}
where $\rho(w)$ is some arbitrary choice of prior distribution on the parameter space. %It is common to suppress the dependence of $\lambda(w^*)$ on $r$ in the notation. We give a brief discussion of this relationship in appendix \ref{app:slt}.
Now letting $0 < a < 1$ be some arbitrary constant, the \textit{local learning coefficient} \cite{lau2024locallearningcoefficientsingularityaware} is defined as
\label{glc}
\begin{equation}
    \lambda(w^*) = \lim_{\epsilon\to0} \frac{\log \frac{V(a\epsilon)}{V(\epsilon)}}{\log(a)}
\end{equation}
We then have that asymptotically as $\epsilon \to 0$ (under some mild assumptions):
\label{llc}
\begin{equation}
    V(\epsilon) \propto \epsilon^{\lambda(w^*)}
\end{equation}
In the diffusion picture, the LLC behaves as a localized \textit{mass (Minkowski-Bouligand) fractal dimension} which determines the geometry of (potentially degenerate) near critical points. The nature of this relationship is discussed in greater depth in appendix \ref{app:mass_dim}.

\subsection{Laws of Fractal Diffusion and SGD}
\subsubsection{The Spectral Dimension}
While the LLC captures the geometry of the loss, we all need to capture the dynamics of SGD on this geometry. To this end we utilize a second fractal dimension which describes the trajectory of particles under a potential called the \textit{spectral dimension} $d_s$ (\citep{Mill_n_2021}, \citep{BOUCHAUD1990127}). We start with the definition in the "homogeneous" case (e.g when the fractal dimension of the medium is the same everywhere) and then adapt it to our multifractal case. If we consider the LLC as being the scaling exponent for the volume of "good parameters" in a particular area, the spectral dimension determines then the volume of states that the diffusive process over that area can actually reach over some period of time (in our case, the volume of states SGD can actually reach in that area). We define this dimension below. 
\begin{definition}[Spectral Dimension \citep{BOUCHAUD1990127}]\label{spectral_dim}
Let $V_s(t)$ be the total volume of occupied states which result from running a diffusive process on porous media from some initial distribution. The spectral dimension $d_s$ is then defined as: 
\begin{equation}
    V_s(t) \sim t^{\frac{d_s}{2}}
\end{equation}
\end{definition}
%Note that by definition, the spectral dimension indirectly reflects a lot of information about the system. For example, it must capture the degree to which noise spreads out particles moving through the system.

In non-homogeneous systems one can have that the spectral dimension changes on different timescales $\{t_1, ..., t_m\}$ where we have different scaling exponents $d_s(t_i)$ such that 
\begin{equation}
    V_s(t) = t^{\frac{d_s(t_i)}{2}}
\end{equation}
if $t$ belongs to timescale $t_i$. If the variation between these different dimensions is too large, the theoretical framing becomes more difficult. Luckily, we find that for SGD this spectral dimension is well-captured by a single constant over training. Thus for vanilla SGD we can use what is known as the asymptotic spectral dimension defined as:
\begin{equation}
    V_s(t) \sim t^{\frac{d_s^\infty}{2}} \text{ as } t \to \infty
\end{equation}
and simply take $d_s = d_s^\infty$(\citep{ben-Avraham_Havlin_2000}, \citep{PALADIN1987147}). More information and intuition about the spectral dimension is provided in appendix \ref{app:spectral_dim}

\subsubsection{Weight Displacement and Fractal Dimensions}
We would now like to figure out the relationship between the local fractal dimension (the local learning coefficient) and the spectral dimension. %The definition of being in a sub-diffusive regime is that the displacement $R(t)$ of a typical particle should scale as a power law like
\begin{definition}[Walk Dimension (\citep{PALADIN1987147}, \citep{BOUCHAUD1990127})]\label{walk_dim}
Let $R(t)$ be the displacement of a particle at time $t$. The \textit{walk dimension} is defined by

\begin{equation}\label{eq:disp}
    R(t) \sim t^{\frac{1}{d_\text{walk}}}
\end{equation}
with $d_\text{walk}$ being the \textit{walk dimension} with $d_\text{walk}>2$ for sub-diffusion. %In the homogeneous case $d_\text{walk}$ is constant, and is given by 
%\begin{equation}
%    d_\text{walk} = \frac{2\lambda}{d_s}
%\end{equation}
\end{definition}
We note here that these values are defined almost identically even when the process displays initially super-diffusive dynamics (details in appendix \ref{app:early_superdiffusion}).

It is known that in particular regimes the walk dimension takes on a particular form. This is known as the \textit{Alexander-Orbach (AO) relation} and relates the walk dimension to the fractal dimension of the medium (the LLC for us) and the spectral dimension. While originally stated in the context of homogeneous media this relation is known to hold locally for porous media which are homogeneous on a sufficiently small scale \citep{HAMBLY_KIGAMI_KUMAGAI_2002}. Restating these results in our framing gives:
\begin{theorem}\label{def:non_hom_walk}
    The walk dimension at a point $w_t$ on the loss surface can be given as
    \begin{equation}
        d_\text{walk}(t) = \frac{2\lambda(w_t)}{d_s}
    \end{equation}
    near critical points.
\end{theorem}
The idea that neural networks trained by SGD are close to some critical point is a direct result of the prevalence degenerate saddle points of the loss surface (\citep{dauphin2014identifyingattackingsaddlepoint}, \citep{ADVANI2020428}, \citep{Fukumizu2000Local}, \citep{choromanska2015losssurfacesmultilayernetworks}). In cases where there are no nearby saddle points (which is more common in early training), this does relation does not need to hold as the diffusion is dominated by the gradient behaviour. This is the sense in which the relation is local. However, as noted, so long as this behaviour is largely isolated to early training it does not impact the theoretical results.
%This has been verified empirically on many different fractals, and proven rigorously in the framework of heat kernel scaling (\citep{Grigor_yan_2012}, \citep{HAMBLY_KIGAMI_KUMAGAI_2002}, \citep{Sokolov_2016}). The LLC plays the role of the mass (pore dimension) normally used in the definition as they capture the same structure (see the appendix \ref{app:mass_dim}). 

%To work with the inhomogeneous case, we start by considering a particular instance of a diffusing particle.% If we suppose that the fractal dimension (LLC) does not exhibit large, frequent fluctuations so that over some time interval $[t_1, t_2]$ the displacement of any particular particle over that interval is given by (\citep{PALADIN1987147}):
%\begin{equation}
%    R(t) \sim t^{\frac{d_s}{2\lambda(w)}}
%\end{equation}
%\begin{definition}[Non-Homogeneous Walk Dimension]\label{def:non_hom_walk}
%  The walk dimension at a point $w_t$ in a non-homogeneous fractal media can be given as:
%  \begin{equation}
%    d_\text{walk}(t) = \frac{2\lambda(w_t)}{d_s}
%\end{equation}
%\end{definition}

\subsubsection{Diffusion Coefficients and Local Behaviour}
We would like to use these fractal dimensions to define a diffusion coefficient. Importantly, we find that the diffusion coefficient is reasonably approximated by a scalar function. We also should expect that late in training, the localized dynamics nearby degenerate points should be directly proportional to the volume of low loss parameters. We state the results informally below, with the formal results and proofs in appendix \ref{app:proofs}.

First we state the following theorem:
\begin{theorem}[Small Scale Dynamics are LLC Dependent]
    Let $w^* \in W$ be a point such that the Hessian of the loss $H(w^*)$ is positive semidefinite. Let $\mathcal{W} = B(r, w^*)$ be a small area about $w^*$. If the diffusion coefficient $D$ along degenerate directions is isotropic then for fixed error tolerance $\varepsilon$ the first passage time $T(\varepsilon)$ through $\mathcal{W}$ is $\propto \frac{\epsilon^{\lambda(w^*)}}{DC}$ where $C$ is the "capacitance" of the escape set.
\end{theorem}

An important thing to note about the above theorem is that this is what would be considered a "pore scale" model of the diffusion, and is determined by the small scale dynamics and as such is less useful for experimentally capturing the behaviour. For experimental purposes we develop a more coarse-grained theory which relies on the following result:

\begin{lemma}[Diffusion Coefficient Approximation (informal)]
    For reasonable choices of the learning rate in the large batch size regime, the diffusion tensor can be approximated by a scalar function for long runtimes.
\end{lemma}
Since the steady state is determined by the long-runtime dynamics, we can study the FFPE with a scalar diffusion coefficient.

To study the coarse-grained diffusive behaviour use a physics-inspired scalar diffusion coefficient for porous media that captures the essential behaviors of the diffusion at some choice of measurement scale called the \textit{characteristic length scale} $\xi$. Under some assumptions we have the following:
\begin{lemma}\label{lem:character_length}
    Letting $\xi$ be some characteristic length scale, the diffusion coefficient can be approximated as $D_\xi = \xi^{2-d_\text{walk}}$.
\end{lemma}
A thing to note is that the choice of $\xi$ is effectively how far we are zooming out and averaging over the local dynamics, which gives a scaling law, not an exact relation. A general practice is to pick a value of $\xi$ which is large enough to average out the fluctuations in an area but not so large that it starts to ignore large scale changes in structure. The effect of choice of $\xi$ is shown in section \ref{sec:exp_results}. One may also notice that this implies that the diffusion coefficient is higher for a small LLC which seemingly contradicts the pore-scale diffusion derived earlier where low LLC was slower. However, this is explained through the spectral dimension (as we shall see) as it is bounded above by the LLC, meaning that a small LLC is only faster if the dynamics allow very free movement inside of the domain. 

Combining lemma \ref{lem:character_length} with the definition of the walk dimension given earlier we get:
\begin{corollary}[Fractal Effective Diffusion Coefficient]\label{def:effect_diff}
Let $\xi$ be a choice of the characteristic length scale. One can then define the effective (local) diffusion coefficient for length scale $\xi$ as
\begin{equation}
    D_\xi(w) = \xi^{2-\frac{2\lambda(w_t)}{d_s}}
\end{equation}
\end{corollary}

\subsection{Stationary States of the SGD Fokker-Planck Equation}
We now present theoretical results about the diffusive process with proofs in appendix \ref{app:proofs}. In appendix \ref{app:proofs} we provide a brief discussion of impacts when particular assumptions about the system are not met and how the results here can be extended.

Assuming some fixed scale $\xi$, using the effective diffusion coefficient, we can actually find the local steady-state solutions for the SGD Fractional Fokker-Planck equation (if it exists):
\begin{lemma}\label{res:stat_state}
Consider a subset $\mathcal{W} \subset W$ such that the effective diffusion coefficient $D_\xi$ is (approximately) constant on $\mathcal{W}$. Suppose then that there exists steady-state solutions of the SGD-FFPE on this subset with true parameter(s) $w^*$ so $\mathcal{D}^\alpha_t p(w^*,t) = 0$. The steady-state $p_{s}(w|X_m) = p_{s}(w)$ distribution is then given by $p_{s}(w) \propto e^{\frac{-\gamma\mathcal{L}_m[w]}{D_\xi}}$.
\end{lemma}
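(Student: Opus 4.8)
The plan is to reduce the stationary problem for the fractional equation to an ordinary first-order PDE for $p_s$ and then integrate it.

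\textbf{Reduction to the stationary transport equation.} I would first substitute the effective diffusion coefficient of Definition~\ref{def:effect_diff} into the SGD fractional Fokker--Planck equation; on $\mathcal{W}$ the hypothesis that $D_\xi$ is (approximately) constant lets me pull it out of the divergence. I would then invoke the elementary property of the Caputo operator that, for $0<\alpha<1$, $\mathcal{D}^\alpha_t f \equiv 0$ holds exactly when $f$ does not depend on $t$ (see \cite{Diethelm2019}). Hence a steady state $p_s(w)$ is time-independent, and the equation collapses on $\mathcal{W}$ to $\nabla\cdot\bigl(D_\xi\nabla p_s - \gamma p_s\nabla\mathcal{L}_m[w]\bigr)=0$. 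It is worth stressing that this is precisely the stationarity condition of the \emph{ordinary} Fokker--Planck equation: the fractional (subordination / continuous-time-random-walk) structure only reparametrizes time, so it controls the relaxation toward equilibrium but not the equilibrium itself, which is why the exponent $\alpha$ never enters $p_s$.

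\textbf{Vanishing of the stationary current.} The real content of the lemma is to upgrade ``the stationary probability current $J$ (the bracketed flux above) is divergence-free on $\mathcal{W}$'' to ``$J\equiv 0$.'' In a single weight coordinate this is immediate: $J$ is then constant, and the no-flux (reflecting) boundary condition---the natural one here since $W$ is compact and total probability is conserved---forces that constant to vanish. In higher dimensions a divergence-free field need not vanish; one could a priori have a solenoidal, circulating stationary current, i.e. a genuine non-equilibrium steady state. I expect this to be the main obstacle, and it is removed by the fact that the drift is the gradient of the scalar potential $\mathcal{L}_m$: the driving force is conservative, the dynamics obey detailed balance, and a conservative drift on a compact domain with a no-flux boundary supports no circulating stationary flux, so $J\equiv 0$ on $\mathcal{W}$.

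\textbf{Integration.} Setting $J\equiv 0$ makes $D_\xi\nabla p_s$ a fixed scalar multiple of $p_s\nabla\mathcal{L}_m$; dividing by $p_s>0$ gives $\nabla\log p_s = c\,\nabla\mathcal{L}_m$ for a constant $c$, so on a connected region $\log p_s = c\,\mathcal{L}_m[w] + \text{const}$. Fixing the sign convention for the drift so that probability concentrates near minima of the loss pins down $c = -\gamma/D_\xi$, and normalizing yields $p_s(w)\propto e^{-\gamma\mathcal{L}_m[w]/D_\xi}$. Two routine points close the argument: one restricts to a connected component of $\mathcal{W}$ so the locally constant prefactor is a true constant, and the normalizer $Z=\int_{\mathcal{W}} e^{-\gamma\mathcal{L}_m/D_\xi}\,dw$ is finite because $\mathcal{W}$ is bounded and $\mathcal{L}_m$ continuous---so the expression is a bona fide density. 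Existence of a stationary state is assumed in the statement; what the argument establishes is that any such state must have this Gibbs form.
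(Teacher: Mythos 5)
Your proposal is correct and follows essentially the same route as the paper's own proof: the Caputo derivative of a time-independent density vanishes, so the stationary problem collapses to that of the ordinary Fokker--Planck equation with constant $D_\xi$, whose equilibrium is the Boltzmann distribution $p_s(w)\propto e^{-\gamma\mathcal{L}_m[w]/D_\xi}$. You are in fact more careful than the paper at the one nontrivial step---upgrading ``divergence-free stationary current'' to ``zero current'' via the conservative drift, detailed balance, and no-flux boundary on the compact domain---which the paper's proof passes over by directly asserting the Boltzmann form.
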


Note that the above holds even if $D_\xi$ does not have the form given in definition \ref{def:non_hom_walk} so long as it is simply a scalar. However, if it does have the form, due to the definition of $D_\xi$, the above condition that it be constant is actually simply saying that $\lambda(w)$ be locally constant in $\mathcal{W}$ which tends to be the case away from phase transitions \citep{wang2025differentiation}. We can also get from this a relationship with the Bayesian posterior perspective of singular learning theory.

\begin{corollary}\label{cor:steady_state_dist}
Letting $\gamma = 1$ for simplicity, if $\mathcal{L}$ is the log-loss and $w \in \mathcal{W}$ then 
\begin{equation}
    p_{s}(w)^{mD_\xi} \propto p(X_m|w)
\end{equation}
so
\begin{equation}
    p(w|X_m) = \frac{\rho(w)p_{s}(w)^{mD_\xi} }{Z_{mD_\xi}}
\end{equation}
where $Z_{mD_\xi}$ is the partition function and $\rho$ is an arbitrary choice of prior.
\end{corollary}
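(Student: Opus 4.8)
The plan is to combine the stationary-state formula from Lemma~\ref{res:stat_state} with the standard identification of the log-loss as a negative log-likelihood, and then invoke Bayes' rule. First I would unpack the empirical loss: since $\mathcal{L}_m[w] = \mathbb{E}_{X_m}[L(x,w)] = \frac{1}{m}\sum_{i=1}^{m} L(x_i,w)$ and $L$ is the log-loss $L(x,w) = -\log p(x|w)$, the i.i.d.\ assumption on the sample $X_m$ gives $p(X_m|w) = \prod_{i=1}^{m} p(x_i|w)$, hence $m\,\mathcal{L}_m[w] = -\log p(X_m|w)$, i.e.\ $p(X_m|w) = e^{-m\mathcal{L}_m[w]}$.

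Next, by Lemma~\ref{res:stat_state} with $\gamma = 1$ we have $p_{s}(w) \propto e^{-\mathcal{L}_m[w]/D_\xi}$ on $\mathcal{W}$, and since $D_\xi$ (equivalently $\lambda(w)$) is constant on $\mathcal{W}$, raising to the power $mD_\xi$ is unambiguous, giving $p_{s}(w)^{mD_\xi} \propto e^{-m\mathcal{L}_m[w]} = p(X_m|w)$; the proportionality constant here is just the $mD_\xi$-th power of the location-independent normalizer hidden in the $\propto$ of the lemma. Applying Bayes' theorem,
\begin{equation*}
p(w|X_m) = \frac{\rho(w)\,p(X_m|w)}{\int_{\mathcal{W}} \rho(w')\,p(X_m|w')\,dw'},
\end{equation*}
and substituting $p(X_m|w) \propto p_{s}(w)^{mD_\xi}$ then yields $p(w|X_m) = \rho(w)\,p_{s}(w)^{mD_\xi}/Z_{mD_\xi}$ with $Z_{mD_\xi} = \int_{\mathcal{W}} \rho(w')\,p_{s}(w')^{mD_\xi}\,dw'$ the partition function.

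There is no hard analytic step; the main thing to be careful about is the bookkeeping of proportionality constants and stating explicitly that the identities hold on $\mathcal{W}$, where the exponent $mD_\xi$ is constant. I would also flag that the log-loss is used here rather than the KL-divergence $\mathcal{K}_m$ of Section~\ref{sec:diff_coeff}: those differ by a $w$-independent additive constant, which would only rescale the (immaterial) proportionality constant and leave the normalized posterior unchanged. The content of the corollary is entirely in this reinterpretation of the stationary distribution as a tempered likelihood raised to the power $mD_\xi$.
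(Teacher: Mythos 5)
Your argument is correct and follows essentially the same route as the paper's own proof: identify $e^{-m\mathcal{L}_m[w]}$ with the likelihood $p(X_m|w)$, raise the stationary distribution from Lemma~\ref{res:stat_state} to the power $mD_\xi$, and apply Bayes' theorem. If anything, your version is slightly more careful than the paper's (which omits the $\log$ in its displayed formula for the empirical log-loss and glosses over the proportionality constants), and your remark that the log-loss and KL-divergence differ only by a $w$-independent constant is a worthwhile clarification.
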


%Before continuing, we can describe why one should expect a result like the above to hold for SGD. It is known that if we describe the force acting on a diffusive system as $F = \nabla V$ for some potential $V$, that this description is an idealized realization where the system ignores a large amount of its true complexity, as it allows the system to hop between nearby configurations ignoring the intermediary states \citet{BOUCHAUD1990127}. By raising the equilibrium distribution $p_s(w)$ to $mD_\xi$ for large $m$ we effectively increase the ``resolution" of the system by increasing the separation between states in configuration space.

This explains the observed relationships between Bayesian sampling and SGD seen in \citep{mingard2020sgdbayesiansamplerwell}. We can see that SGD effectively scales the likelihood of certain states of the underlying purely Bayesian distribution at the measurement scale $\xi$ based on how accessible they are to the model under the optimization process. That is, the distribution of solutions found by SGD from some initial distribution concentrate more heavily in particular areas than the Bayesian posterior since SGD simply cannot reasonably reach those areas.

Another important aspect of the local learning coefficient is that it can be considered the quantity that ``bounds" the movement of network weights. For notational simplicity let $w(t)$ be the parameters of the system at time $t$, so we can formally state the above as:
\begin{lemma}\label{lem:spect_ineq}
Suppose the loss function $\mathcal{L}$ is non-convex and non-constant on $W$. Then with spectral dimension $d_s$ as $t \to \infty$ with fractal dimension $\lambda(w(t))$ on $\mathcal{W} \subset W$, the inequality $d_s \leq \lambda(w(t))$ holds (in the small learning rate regime).
\end{lemma}
In the above lemma the timescale condition is used to account for the fact that at early times such sub-diffusive processes can appear nearly linear. Given the above, we get the following corollary:
\begin{corollary}\label{cor:av_spect}
    For time $t$ as $t \to \infty$, we have $d_s \leq \bar{\lambda}(w(t))$ where
    \begin{equation}
        \bar{\lambda}(w(t)) = \lim_{\tau \to \infty} \frac{1}{\tau} \int_0^\tau \lambda(w(t)) dt
    \end{equation}
\end{corollary}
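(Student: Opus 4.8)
The plan is to obtain Corollary~\ref{cor:av_spect} as a direct Cesàro-averaging consequence of Lemma~\ref{lem:spect_ineq}. Lemma~\ref{lem:spect_ineq} supplies, in the large-time regime and along the portion of the trajectory lying in $\mathcal{W}$, a \emph{pointwise-in-time} lower bound $d_s \le \lambda(w(t))$; concretely, there is a time $T_0$ such that $d_s \le \lambda(w(t))$ for all $t \ge T_0$. The whole proof is then to integrate this inequality against the uniform measure on $[0,\tau]$, divide by $\tau$, and let $\tau\to\infty$: the left-hand side is constant and averages to $d_s$, while the right-hand side averages to $\bar{\lambda}(w(t))$ by the very definition given in the statement.

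First I would record the minimal regularity needed to make the time average meaningful: since $W$ is compact and $\lambda(\cdot)$ is bounded (it lies between $0$ and $|w|/2$), and since $t\mapsto w(t)$ is continuous (we work with gradient flow), the map $t\mapsto\lambda(w(t))$ is bounded and measurable, so $\int_0^\tau \lambda(w(t))\,dt$ is finite for every $\tau$. Next I would split $\frac{1}{\tau}\int_0^\tau \lambda(w(t))\,dt = \frac{1}{\tau}\int_0^{T_0}\lambda(w(t))\,dt + \frac{1}{\tau}\int_{T_0}^{\tau}\lambda(w(t))\,dt$. The first term is at most $\frac{T_0}{\tau}\sup_{w}\lambda(w)$, which $\to 0$ as $\tau\to\infty$. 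For the second term, Lemma~\ref{lem:spect_ineq} gives $\lambda(w(t))\ge d_s$ on $[T_0,\tau]$, so $\frac{1}{\tau}\int_{T_0}^{\tau}\lambda(w(t))\,dt \ge \frac{\tau-T_0}{\tau}\,d_s \to d_s$. Combining the two pieces yields $\liminf_{\tau\to\infty}\frac{1}{\tau}\int_0^\tau\lambda(w(t))\,dt \ge d_s$, and since the limit defining $\bar{\lambda}(w(t))$ is assumed to exist, $\bar{\lambda}(w(t))\ge d_s$, as claimed. (If one prefers not to assume convergence of the time average a priori, the same argument gives the conclusion with $\bar{\lambda}$ read as a $\liminf$.)

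The step I expect to carry the real content is not the averaging estimate above but the invocation of Lemma~\ref{lem:spect_ineq} together with Hypothesis~\ref{hypo:local_linearize}: one must justify that the pointwise bound $d_s\le\lambda(w(t))$ really does hold for \emph{all} large $t$ — i.e.\ that for late times the trajectory stays in a region where the fractal-dimension reading of $\lambda$ is valid and the subdiffusive (rather than merely linear-looking, early-time) scaling has set in — rather than only in some averaged or generic sense. Granting that, the remainder is the routine Cesàro computation, whose only mild technicality is the contribution of the finite initial window $[0,T_0]$, handled by boundedness of $\lambda$ on the compact $W$.
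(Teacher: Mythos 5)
Your proof is correct and follows essentially the same route as the paper's: split the time average at a finite threshold $T_0$ (the paper's $\tau_0$) beyond which Lemma~\ref{lem:spect_ineq} holds pointwise, bound the initial segment using the boundedness of $\lambda$, and let the Cesàro average of the tail converge to $d_s$. Your version is in fact slightly cleaner — the explicit $\liminf$ fallback and the measurability remark tighten a step the paper leaves implicit, and you avoid a small algebraic slip in the paper's final displayed bound.
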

Notice that since small $\lambda(w)$ implies greater local volume, but larger $d_s$ implies that the volume spreads faster over time, large local volumes trap the spread of SGD, slowing it down. This aligns with previous research examining the eigenvalues of the Hessian of the loss \citep{Sagun2016Eigenvalues}. In the next section we will show that the above result holds experimentally as well as examine other properties of our fractal diffusion theory of SGD.

%Intuitively one would also expect that better subspaces of parameter space would have a higher diffusion coefficient. The diffusion coefficient in a subspace of a fractal media is large when the geometry of the subspace does not impede the movement of particles where the local density of states begins to become uniform (since all states are equally acceptable). One can see that this is related to the idea that SGD prefers flat minima.

\section{Experimental Results}\label{sec:exp_results}
\subsection{Diffusive Behaviour}
Here we present experimental results to validate the diffusive theory across multiple model architectures and tasks. Namely we look at small language models trained on the TinyStories dataset\citep{eldan2023tinystoriessmalllanguagemodels}, vision models trained on Tiny Imagenet \citep{Le2015TinyIV}, as well as extensive ablations on the MNIST dataset \citep{mnist} with fully connected architectures with ReLU activations and batch normalization. More extensive experimental details can be found in appendix \ref{app:exp_details}.

To compute the LLC we utilize the estimator provided by \citep{devinterpcode}. To compute the spectral dimension $d_s$ we first compute the value $\log(R(t))$ where $R(t)$ is the total weight displacement at time $t$. We then find $d_s$ by solving the linear regression problem
\begin{equation}\label{eq:spect_est}
    \log(R(t)) = \frac{d_s}{2\lambda(w)} \log(t) + c
\end{equation}
where $c$ is simply an offset term.

%This dataset/architecture was chosen for multiple reasons. First is due to the volume of experiments required to test the theory on a robust set of hyperparameters and model sizes and vanilla SGD is able to perform well on it. Additional experiments with other model architectures and datasets are provided in appendix \ref{app:regimes}. Since sub-diffusive weight behavior is well-documented, we focus on experimental verification of lemma \ref{lem:spect_ineq} and corollary \ref{cor:av_spect}. Corollary \ref{cor:steady_state_dist} is also investigated, however discussion of this is left for appendix \ref{app:form_of_diff}.%We also that the diffusion coefficient moves towards larger values, which aligns with our predictions from the steady-state solution.%Other auxiliary experiments were conducted on additional datasets which agreed with our findings on MNIST. We also primarily use SGD with $\gamma = 0.001$ and 0 weight decay. However, extensive investigation was performed varying these values, as well as replacing SGD with AdamW. Additional experimental results are presented in appendix \ref{app:add_exp}.
 %We also estimate the empirical generalization error for every 100 steps, estimating the final generalization error from the average of the previous 10 estimates to keep consistency with our LLC estimation.
\begin{figure}[h]
\centering
\begin{subfigure}{.5\textwidth}
  \centering
  \includegraphics[width=7cm,height=6cm]{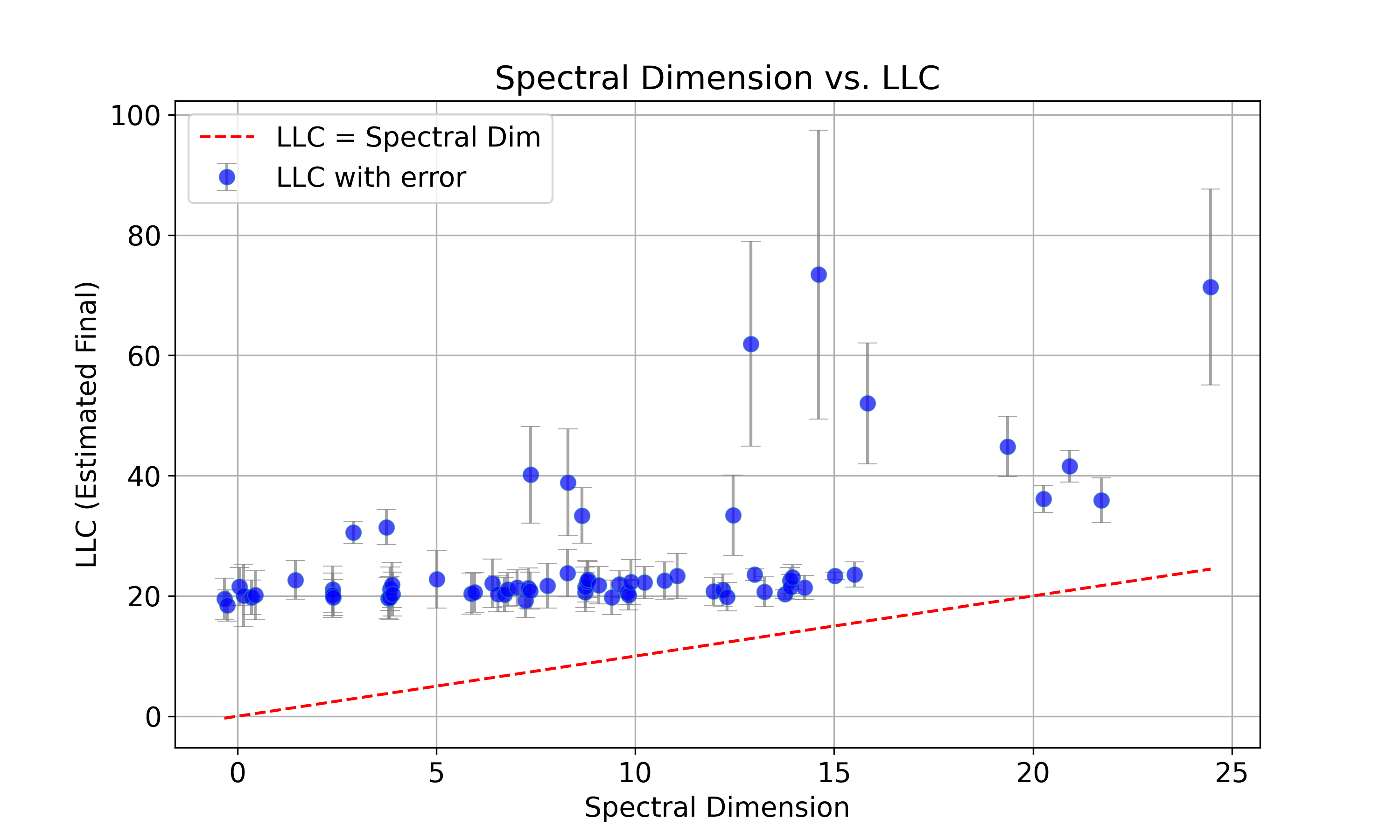}
  \caption{Visualization of lemma \ref{lem:spect_ineq} (MNIST)}
\end{subfigure}%
\begin{subfigure}{.5\textwidth}
  \centering
  \includegraphics[width=6cm,height=6cm, keepaspectratio]{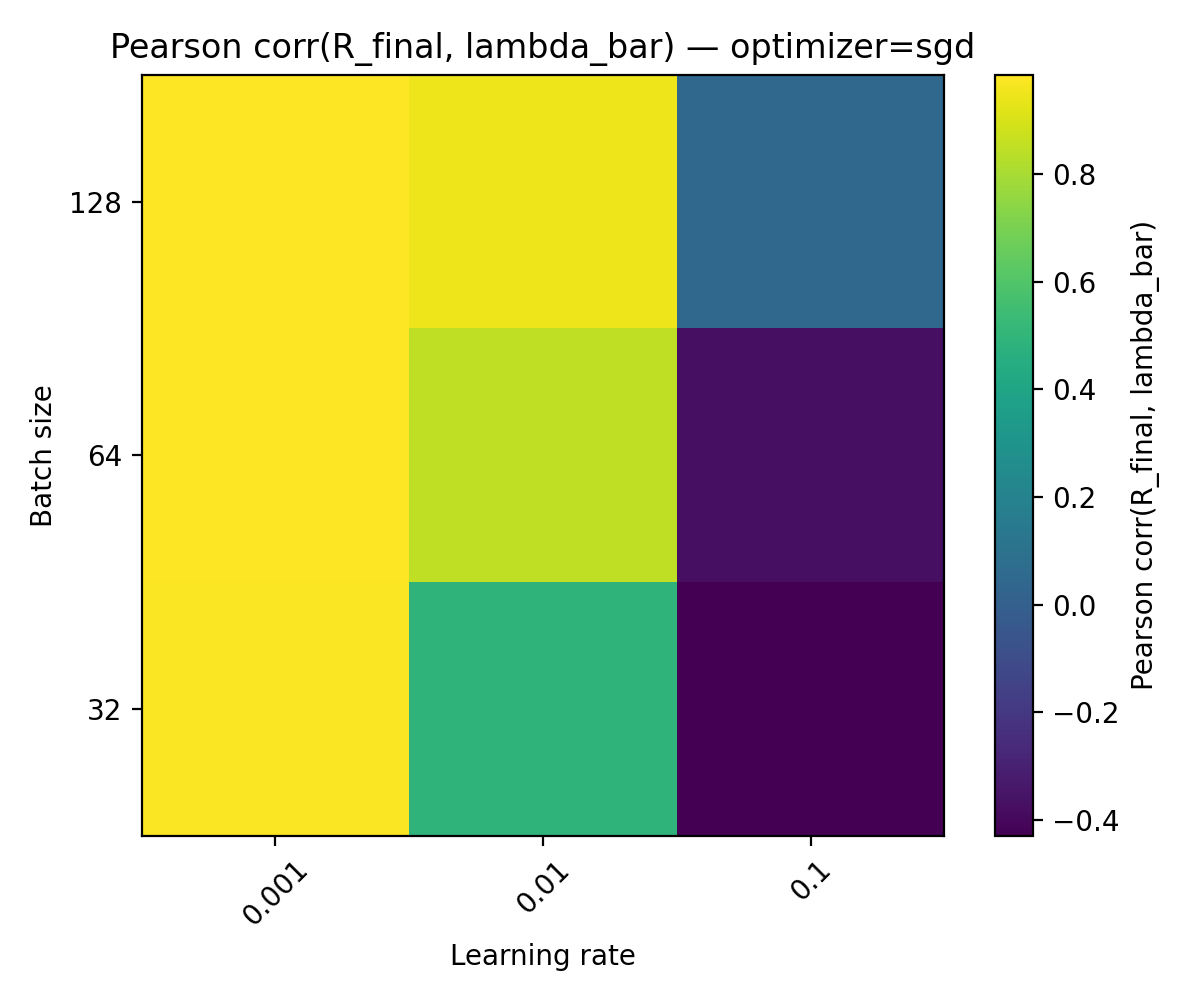}
  \caption{Correlation between learning coefficient (average) and total weight displacement (MNIST).}
\end{subfigure}
\caption{In a) we check that the result of lemma \ref{lem:spect_ineq} holds. In b) we check that independent of our choice of diffusion model, the total displacement and average learning rate are strongly correlated in the large batch, low learning rate regime.}
\label{fig:spect_ineq}
\end{figure}

%  \begin{figure}[H]
% \centering
% \includegraphics[width=12cm,height=12cm, keepaspectratio]{spectral_vs_llc.png}
% \caption{The final LLC vs. the spectral dimension. Note that none of these values fall below the line denoting the inequality of lemma \ref{lem:spect_ineq}.}
% \label{fig:spect_ineq}
% \end{figure}

 Using this setup, we are able to experimentally test the result of lemma \ref{lem:spect_ineq} and corollary \ref{cor:av_spect}, which can be seen in figure \ref{fig:spect_ineq} for an extensive collection of various models over MNIST, as well as various vision and language models in table \ref{tab:model_results}. We also check the accuracy of the sub-diffusion model.

\begin{table}[h]
    \centering
    \begin{tabular}{c c c c c}
        \hline
        Model name & $\lambda$ & $d_s$ & $\alpha$ & $r^2$ \\
        \hline
        TinyStories-1M & 32 & 21.422 & 0.33 & 0.98 \\
        TinyLlama-15M & 76.1 & 48.3 & 0.32 & 0.98 \\
        TinyStories-33M & 39.3 & 38.7 & 0.49 & 0.98 \\
        ResNet18 & 72.05 & 0.57 & 0.004 & $\approx$ 1 \\
        ResNet34 & 73.5 & 0.62 & 0.004 & $\approx$ 1 \\
        VGG16 & 159.7 & 0.14 & 0.001 & $\approx$ 1 \\
        \hline
    \end{tabular}
    \caption{Results for different models.}
    \label{tab:model_results}
\end{table}

We find that in general, the sub-diffusive prediction is very accurate for most models tested which are trained to convergence. In particular we note that despite our theory not explicitly accounting for adaptive optimizers and learning rate schedulers, the dynamics vision models fine-tuned using an initial adaptive optimizer, followed by a low learning rate SGD are well-predicted by the theory. Furthermore, by taking pretrained language models which have already been trained to convergence in the weights and then continuing training on their initial pretraining dataset agrees with the predictions of the theory. More results are available in appendices \ref{app:regimes} and \ref{app:ablations}.

\subsection{Posterior Concentration}
In order to check the results of lemma \ref{res:stat_state} and corollary \ref{cor:steady_state_dist} we train a large number of identical fully connected networks on a generated moons dataset \citep{scikit-learn} using SGD. To compare the distribution of solutions found via SGD vs. the (local) Bayesian posterior, we use SGLD \citep{sgld} to approximate the Bayesian posterior. We then identify clusters of solutions, and identify the concentrations of SGD and Bayesian solutions within each cluster. To select the scale $\xi$ for tempering, we check how the choice of $\xi$ impacts the KL-divergence between the empirical SGD distribution and the theoretical SGD distribution (figure \ref{fig:xi}). We can see in figure \ref{fig:llc_hist} that the solutions found by SGD do tend to concentrate around lower LLC areas. Figure \ref{fig:cluster_conc} and table \ref{tab:statistical_measures} shows how the tempering of the distribution of SGD solutions effectively agrees with approximate Bayesian posterior of SGLD. 

\begin{figure}[h]
\centering
\includegraphics[width=8cm,height=8cm, keepaspectratio]{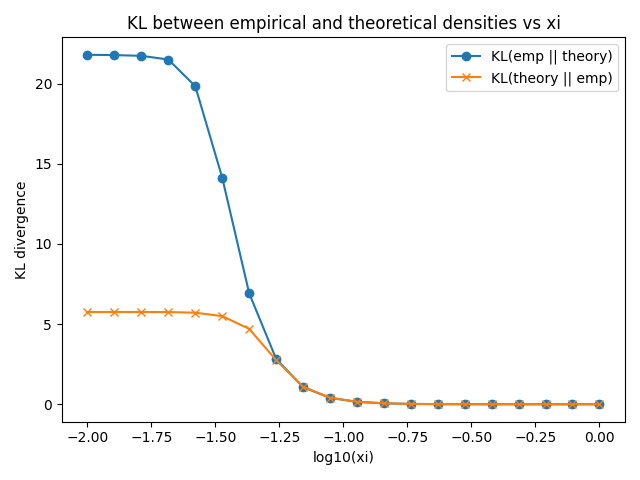}
\caption{KL-divergences between the empirical vs. theoretical distribution for different choices of $\xi$.}
\label{fig:xi}
\end{figure}

\begin{figure}[h]
\centering
\begin{subfigure}{.5\textwidth}
  \centering
  \includegraphics[width=6cm,height=6cm, keepaspectratio]{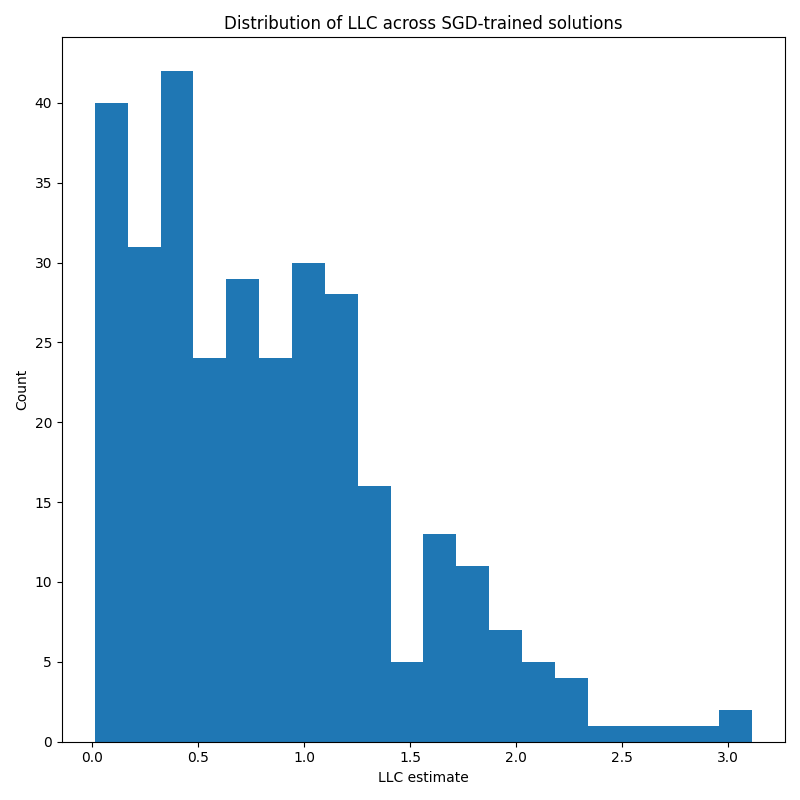}
  \caption{LLC Histogram}
  \label{fig:llc_hist}
\end{subfigure}%
\begin{subfigure}{.5\textwidth}
  \centering
  \includegraphics[width=6cm,height=6cm, keepaspectratio]{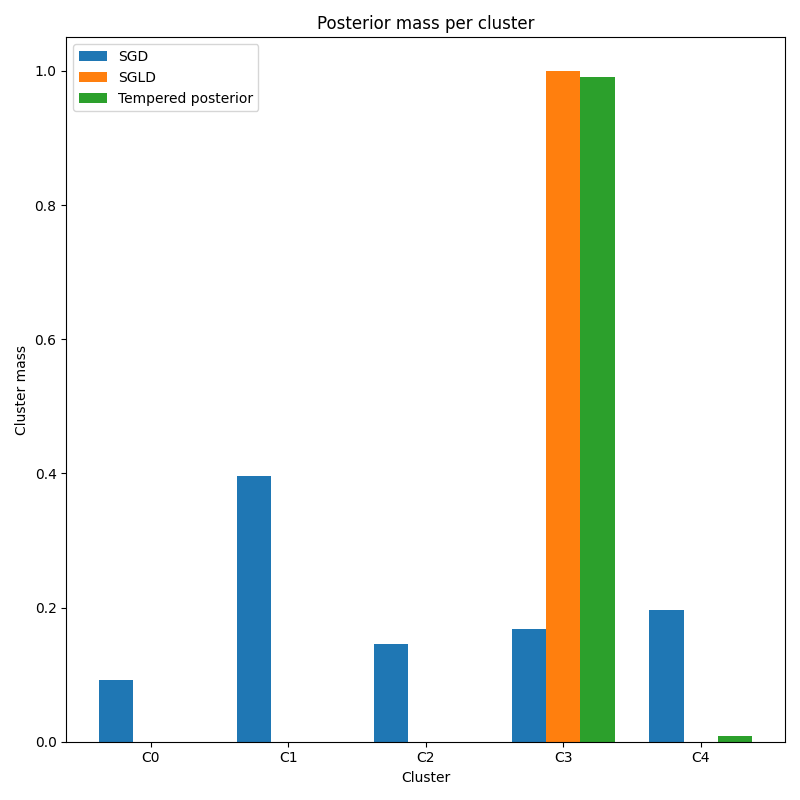}
  \caption{The cluster concentrations}
  \label{fig:cluster_conc}
\end{subfigure}
\caption{a) Shows the histogram of local learning coefficients of solutions found by SGD. Notice that as predicted by the theoretical results, they tend to concentrate near lower LLC values (better generalizing solutions). b) The probability concentrations of solutions found by SGD (blue), the approximate Bayesian posterior (orange), and the tempered SGD distribution (green) for each cluster. Notice that despite SGD itself preferring the cluster C1, after tempering ($\xi = 0.5$), the tempered SGD steady state distribution almost entirely agrees with the Bayesian posterior. Statistical measures can be seen in table \ref{tab:statistical_measures}.}
\label{fig:test}
\end{figure}

\begin{table}[h]
\centering
\begin{tabular}{l c}
\hline
\textbf{Metric} & \textbf{Value} \\
\hline
$\mathcal{K}(\text{Bayes}\| \text{Tempered SGD})$ & $0.009$ \\
$\text{Wass}(\text{Bayes}, \text{Tempered SGD})$ & $0.002$ \\
$\text{JS}(\text{Bayes}, \text{Tempered SGD})$ & $0.003$ \\
\hline
\end{tabular}
\caption{The KL divergence, the Wasserstein distance, and the Jensen-Shannon divergence for the approximated Bayesian posterior and the tempered SGD distribution.}
\label{tab:statistical_measures}
\end{table}

\section{Discussion}

\subsection{Limitations}
While we believe our theory is useful and tends to capture dynamics of most optimizers empirically, it does not explicitly take into account complex dynamics of adaptive optimizers like Adam \citep{kingma2017adammethodstochasticoptimization} as adaptive optimizers can exhibit multiple spectral dimensions over the course of training meaning the theory here is incomplete and should occur as a "special case" of a more general theory. Some experimental results and discussions around this can be seen in appendices \ref{app:regimes} and \ref{app:ablations}. %While experimentally we find that the theory still makes broadly accurate predictions in many cases for adaptive optimizers, they can exhibit significantly more complex and varying dynamics over the course of training, meaning the theory here is incomplete and should occur as a ``special case" of a more general theory. Of particular note, We also note that our choice of experimental models and datasets was made largely based on whether or not a model could perform reasonably well on the data given vanilla SGD without momentum, weight decay, etc. That is, we test the theory in the regime we expect it to hold in. While not a strict limitation, we do not explicitly test the limits of this theory, which we leave as an avenue for future work.

Another limitation to consider is that we assume the existence of an approximate steady state. While this is a common practice in the study of SGD (\citep{pesme2020convergencediagnosticbasedstepsizes}, \citep{pmlr-v48-mandt16}, \citep{mandt2018stochasticgradientdescentapproximate}). In general, SGD iterates do not converge to exact equilibria, but under standard assumptions and suitable learning-rate schedules (or a sufficiently small learning rate) they approach the set of stationary points and attain iterates with small gradient norm. One can in theory have instances where no approximate steady state exists since label noise can in theory produce a non-equilibrium flow through states, so SGD might have a non-equilibrium steady state with probability flow driven by said noise. While examining such situations is outside the scope of this work, it is an important avenue of future work to examine a) the time to equilibrate of SGD and b) if it does not equilibrate, can we quantify its non-equilibrium steady state?

%Our theory is also not meant to capture microscale structures, and is instead defined to capture the macroscale behavior of SGD. In the study of diffusion this is sometimes referred to as ``Darcy scale", which is used in fields of engineering to study large scale systems where capturing the fine grained ``pore scale" structure is either not tractable or not useful.
%Finally we note that our theory is not ``complete" in that it does not capture the most extreme choices of hyperparameters of SGD as one can select hyperparameters which cause the noise to significantly impact the dynamics (appendix \ref{app:add_exp}). 

\subsection{Conclusion and Avenues for Future Work}
Here we have argued that the long runtime dynamics of SGD are captured by taking the corresponding Fokker-Planck equation to describe diffusion on a porous geometry. This porous geometry corresponds is described by the learning coefficient, drawing a direct relationship between the dynamics of SGD to Bayesian statistics via singular learning theory. Our experimental results validate this claim.

We believe our theory helps provide insight into the learning process and adds to the groundwork needed to build a foundational theory of learning dynamics. Our theory says that the learning process is governed partially by the model's behavioral phases as described by the learning coefficient. This opens up a framework for studying emergence and phase transitions during training by considering properties of the dynamical system. Adapting this framework explicitly to adaptive optimizers and checking how this impacts the diffusive structure is an important avenue for future work. %and emergent abilities in deep learning. The relationship with singular learning theory opens up the ability to talk meaningfully about phase transitions during the learning process, as SGD is backed by the Bayesian theory. Furthermore, we hope that this theory can be expanded upon for a more rigorous understanding of how different optimizers choose different minima.

%\begin{ack}
%Use unnumbered first level headings for the acknowledgments. All acknowledgments
%go at the end of the paper before the list of references. Moreover, you are required to declare
%funding (financial activities supporting the submitted work) and competing interests (related financial activities outside the submitted work).
%More information about this disclosure can be found at: \url{https://neurips.cc/Conferences/2025/PaperInformation/FundingDisclosure}.

%Do {\bf not} include this section in the anonymized submission, only in the final paper. You can use the \texttt{ack} environment provided in the style file to automatically hide this section in the anonymized submission.
%\end{ack}

%\section*{References}
\subsubsection{Reproducibility Statement}
To encourage reproducibility we provide source code for the experiments included along with extensive documentation in appendices \ref{app:params} and \ref{app:regimes}.

\bibliography{citations}

@misc{hoffer2018trainlongergeneralizebetter,
      title={Train longer, generalize better: closing the generalization gap in large batch training of neural networks}, 
      author={Elad Hoffer and Itay Hubara and Daniel Soudry},
      year={2018},
      eprint={1705.08741},
      archivePrefix={arXiv},
      primaryClass={stat.ML},
      url={https://arxiv.org/abs/1705.08741}, 
}

@article{BOUCHAUD1990127,
title = {Anomalous diffusion in disordered media: Statistical mechanisms, models and physical applications},
journal = {Physics Reports},
volume = {195},
number = {4},
pages = {127-293},
year = {1990},
issn = {0370-1573},
doi = {https://doi.org/10.1016/0370-1573(90)90099-N},
url = {https://www.sciencedirect.com/science/article/pii/037015739090099N},
author = {Jean-Philippe Bouchaud and Antoine Georges},
abstract = {The subject of this paper is the evolution of Brownian particles in disordered environments. The “Ariadne's clew” we follow is understanding of the general statistical mechanisms which may generate “anomalous” (non-Brownian) diffusion laws; this allows us to develop simple arguments to obtain a qualitative (but often quite accurate) picture of most situations. Several analytical techniques-such as the Green function formalism and renormalization group methods-are also exposed. Care is devoted to the problem of sample to sample fluctuations, particularly acute here. We consider the specific effects of a bias on anomalous diffusion, and discuss the generalizations of Einstein's relation in the presence of disorder. An effort is made to illustrate the theoretical models by describing many physical situations where anomalous diffusion laws have been-or could be-observed.}
}

@misc{zhou2021theoreticallyunderstandingsgdgeneralizes,
      title={Towards Theoretically Understanding Why SGD Generalizes Better Than ADAM in Deep Learning}, 
      author={Pan Zhou and Jiashi Feng and Chao Ma and Caiming Xiong and Steven Hoi and Weinan E},
      year={2021},
      eprint={2010.05627},
      archivePrefix={arXiv},
      primaryClass={cs.LG},
      url={https://arxiv.org/abs/2010.05627}, 
}

@book{Watanabe2009AlgebraicGA,
  title={Algebraic Geometry and Statistical Learning Theory},
  author={Sumio Watanabe},
  year={2009},
  url={https://api.semanticscholar.org/CorpusID:220969981},
annote = {One of the key works for our studies, this book is effectively a collection of more than 15 years of work which lays the foundation for singular learning theory. While many of the most important results for us are given by subsequent works by Watanabe, this book is the backbone the entire field of SLT and is an important reference for understanding almost all subsequent work.}
}

@INPROCEEDINGS{fractal_unified_approach,
  author={Kinsner, W.},
  booktitle={Fourth IEEE Conference on Cognitive Informatics, 2005. (ICCI 2005).}, 
  title={A unified approach to fractal dimensions}, 
  year={2005},
  volume={},
  number={},
  pages={58-72},
  keywords={Fractals;Morphology;Entropy;Taxonomy;Aggregates;Dielectrics;Speech processing;Radio transmitters;Image coding;Vector quantization},
  doi={10.1109/COGINF.2005.1532616}}

@misc{lau2024locallearningcoefficientsingularityaware,
      title={The Local Learning Coefficient: A Singularity-Aware Complexity Measure}, 
      author={Edmund Lau and Zach Furman and George Wang and Daniel Murfet and Susan Wei},
      year={2024},
      eprint={2308.12108},
      archivePrefix={arXiv},
      primaryClass={stat.ML},
      url={https://arxiv.org/abs/2308.12108}, 
}

@misc{mingard2020sgdbayesiansamplerwell,
      title={Is SGD a Bayesian sampler? Well, almost}, 
      author={Chris Mingard and Guillermo Valle-Pérez and Joar Skalse and Ard A. Louis},
      year={2020},
      eprint={2006.15191},
      archivePrefix={arXiv},
      primaryClass={cs.LG},
      url={https://arxiv.org/abs/2006.15191}, 
}

@misc{watanabe2012widelyapplicablebayesianinformation,
      title={A Widely Applicable Bayesian Information Criterion}, 
      author={Sumio Watanabe},
      year={2012},
      eprint={1208.6338},
      archivePrefix={arXiv},
      primaryClass={cs.LG},
      url={https://arxiv.org/abs/1208.6338},
annote = {A famous result in classical statistical learning theory is the Bayesian information criterion, which gives a way of comparing "regular" statistical models. The BIC effectively says that given a collection of models trained to predict the same thing, the best model is the "simplest" one which makes accuracte predictions. The proof of this fails in the case of models which are singular, requiring techniques to handle the singularities. This paper derives what is called the "widely applicable Bayesian information criteria" which generalizes the original result. In some sense, this also simplifies some earlier works of Watanabe, and makes certain quantities introduced in \cite{Watanabe2009AlgebraicGA} much more interpretable. This work builds on previous work, effectively solidifying and expanding upon previous results given by Watanabe in previous works.}
}

@misc{chen2023dynamicalversusbayesianphase,
      title={Dynamical versus Bayesian Phase Transitions in a Toy Model of Superposition}, 
      author={Zhongtian Chen and Edmund Lau and Jake Mendel and Susan Wei and Daniel Murfet},
      year={2023},
      eprint={2310.06301},
      archivePrefix={arXiv},
      primaryClass={cs.LG},
      url={https://arxiv.org/abs/2310.06301}, 
}

@misc{watanabe2024reviewprospectalgebraicresearch,
      title={Review and Prospect of Algebraic Research in Equivalent Framework between Statistical Mechanics and Machine Learning Theory}, 
      author={Sumio Watanabe},
      year={2024},
      eprint={2406.10234},
      archivePrefix={arXiv},
      primaryClass={cond-mat.stat-mech},
      url={https://arxiv.org/abs/2406.10234},
annote = {This is largely a review paper for theoretical physicists interested in the theory of machine learning, which introduces singular machine learning models as effectively singular random Hamiltonians. This is important for us as singular Hamiltonian systems are effectively non-homogeneous systems which is a key component of our work.}
}

@misc{watanabe2022recentadvancesalgebraicgeometry,
      title={Recent Advances in Algebraic Geometry and Bayesian Statistics}, 
      author={Sumio Watanabe},
      year={2022},
      eprint={2211.10049},
      archivePrefix={arXiv},
      primaryClass={math.ST},
      url={https://arxiv.org/abs/2211.10049}, 
annote = {Another review piece by Watanabe which takes a more geometric approach than that of \cite{watanabe2024reviewprospectalgebraicresearch}. This work focuses heavily on the properties of the singularties of the parameter space of a model from an algebraic perspective by using the resolution of singularities which allows for values that would normally be undefined to become well defined. }
}

@misc{wang2024differentiationspecializationattentionheads,
      title={Differentiation and Specialization of Attention Heads via the Refined Local Learning Coefficient}, 
      author={George Wang and Jesse Hoogland and Stan van Wingerden and Zach Furman and Daniel Murfet},
      year={2024},
      eprint={2410.02984},
      archivePrefix={arXiv},
      primaryClass={cs.LG},
      url={https://arxiv.org/abs/2410.02984}, 
}

@inproceedings{
wang2024loss,
title={Loss landscape geometry reveals stagewise development of transformers},
author={George Wang and Matthew Farrugia-Roberts and Jesse Hoogland and Liam Carroll and Susan Wei and Daniel Murfet},
booktitle={High-dimensional Learning Dynamics 2024: The Emergence of Structure and Reasoning},
year={2024},
url={https://openreview.net/forum?id=2JabyZjM5H}
}

@misc{devinterpcode,
  title = {DevInterp},
  author = {van Wingerden, Stan and Hoogland, Jesse and Wang, George and Zhou, William},
  year = {2024},
  howpublished = {\url{https://github.com/timaeus-research/devinterp}},
}

@misc{simsekli2019tailindexanalysisstochasticgradient,
      title={A Tail-Index Analysis of Stochastic Gradient Noise in Deep Neural Networks}, 
      author={Umut Simsekli and Levent Sagun and Mert Gurbuzbalaban},
      year={2019},
      eprint={1901.06053},
      archivePrefix={arXiv},
      primaryClass={cs.LG},
      url={https://arxiv.org/abs/1901.06053}, 
annote = {Another key work for our research paradigm where they show experimentally that gradient noise is non-Gaussian, which aligns with our observations. Furthermore they show that the jumps induced by the non-Gaussian noise model (which is governed by a Levy SDE similar to other works) move the parameter space to wider basins via metastability theory.}
}

@misc{battash2023revisitingnoisemodelstochastic,
      title={Revisiting the Noise Model of Stochastic Gradient Descent}, 
      author={Barak Battash and Ofir Lindenbaum},
      year={2023},
      eprint={2303.02749},
      archivePrefix={arXiv},
      primaryClass={cs.LG},
      url={https://arxiv.org/abs/2303.02749}, 
}

@misc{nguyen2019exittimeanalysisstochastic,
      title={First Exit Time Analysis of Stochastic Gradient Descent Under Heavy-Tailed Gradient Noise}, 
      author={Thanh Huy Nguyen and Umut Simsekli and Mert Gurbuzbalaban and Gael Richard},
      year={2019},
      eprint={1906.09069},
      archivePrefix={arXiv},
      primaryClass={stat.ML},
      url={https://arxiv.org/abs/1906.09069}, 
}

@article{LaMont_2019,
   title={Correspondence between thermodynamics and inference},
   volume={99},
   ISSN={2470-0053},
   url={http://dx.doi.org/10.1103/PhysRevE.99.052140},
   DOI={10.1103/physreve.99.052140},
   number={5},
   journal={Physical Review E},
   publisher={American Physical Society (APS)},
   author={LaMont, Colin H. and Wiggins, Paul A.},
   year={2019},
   month=may }

@misc{xie2021diffusiontheorydeeplearning,
      title={A Diffusion Theory For Deep Learning Dynamics: Stochastic Gradient Descent Exponentially Favors Flat Minima}, 
      author={Zeke Xie and Issei Sato and Masashi Sugiyama},
      year={2021},
      eprint={2002.03495},
      archivePrefix={arXiv},
      primaryClass={cs.LG},
      url={https://arxiv.org/abs/2002.03495}, 
}

@Article{Liang2024,
author={Liang, Yong Shun
and Su, Wei Yi},
title={A Geometric Based Connection between Fractional Calculus and Fractal Functions},
journal={Acta Mathematica Sinica, English Series},
year={2024},
month={Feb},
day={01},
volume={40},
number={2},
pages={537-567},
abstract={Establishing the accurate relationship between fractional calculus and fractals is an important research content of fractional calculus theory. In the present paper, we investigate the relationship between fractional calculus and fractal functions, based only on fractal dimension considerations. Fractal dimension of the Riemann--Liouville fractional integral of continuous functions seems no more than fractal dimension of functions themselves. Meanwhile fractal dimension of the Riemann--Liouville fractional differential of continuous functions seems no less than fractal dimension of functions themselves when they exist. After further discussion, fractal dimension of the Riemann--Liouville fractional integral is at least linearly decreasing and fractal dimension of the Riemann--Liouville fractional differential is at most linearly increasing for the H{\"o}lder continuous functions. Investigation about other fractional calculus, such as the Weyl-Marchaud fractional derivative and the Weyl fractional integral has also been given elementary. This work is helpful to reveal the mechanism of fractional calculus on continuous functions. At the same time, it provides some theoretical basis for the rationality of the definition of fractional calculus. This is also helpful to reveal and explain the internal relationship between fractional calculus and fractals from the perspective of geometry.},
issn={1439-7617},
doi={10.1007/s10114-023-1663-3},
url={https://doi.org/10.1007/s10114-023-1663-3}
}

@article{fractals_fractional_calc,
author = {TATOM, FRANK B.},
title = {THE RELATIONSHIP BETWEEN FRACTIONAL CALCULUS AND FRACTALS},
journal = {Fractals},
volume = {03},
number = {01},
pages = {217-229},
year = {1995},
doi = {10.1142/S0218348X95000175},

URL = { 
    
        https://doi.org/10.1142/S0218348X95000175
    
    

},
eprint = { 
    
        https://doi.org/10.1142/S0218348X95000175
    
    

}
,
    abstract = { The general relationship between fractional calculus and fractals is explored. Based on prior investigations dealing with random fractal processes, the fractal dimension of the function is shown to be a linear function of the order of fractional integro-differentiation. Emphasis is placed on the proper application of fractional calculus to the function of the random fractal, as opposed to the trail. For fractional Brownian motion, the basic relations between the spectral decay exponent, Hurst exponent, fractal dimension of the function and the trail, and the order of the fractional integro-differentiation are developed. Based on an understanding of fractional calculus applied to random fractal functions, consideration is given to an analogous application to deterministic or nonrandom fractals. The concept of expressing each coordinate of a deterministic fractal curve as a “pseudo-time” series is investigated. Fractional integro-differentiation of such series is numerically carried out for the case of quadric Koch curves. The resulting time series produces self-similar patterns with fractal dimensions which are linear functions of the order of the fractional integro-differentiation. These curves are assigned the name, fractional Koch curves. The general conclusion is reached that fractional calculus can be used to precisely change or control the fractal dimension of any random or deterministic fractal with coordinates which can be expressed as functions of one independent variable, which is typically time (or pseudo-time). }
}

@article{Songping2004ONTF,
  title={ON THE FRACTIONAL CALCULUS FUNCTIONS OF A TYPE OF WEIERSTRASS FUNCTION},
  author={Zhou Songping},
  journal={Chinese Annals of Mathematics,series A},
  year={2004},
  url={https://api.semanticscholar.org/CorpusID:123419022}
}

@article{Wei_2023,
   title={Deep Learning Is Singular, and That’s Good},
   volume={34},
   ISSN={2162-2388},
   url={http://dx.doi.org/10.1109/TNNLS.2022.3167409},
   DOI={10.1109/tnnls.2022.3167409},
   number={12},
   journal={IEEE Transactions on Neural Networks and Learning Systems},
   publisher={Institute of Electrical and Electronics Engineers (IEEE)},
   author={Wei, Susan and Murfet, Daniel and Gong, Mingming and Li, Hui and Gell-Redman, Jesse and Quella, Thomas},
   year={2023},
   month=dec, pages={10473–10486} }

@article{McGrath_2022,
   title={Acquisition of chess knowledge in AlphaZero},
   volume={119},
   ISSN={1091-6490},
   url={http://dx.doi.org/10.1073/pnas.2206625119},
   DOI={10.1073/pnas.2206625119},
   number={47},
   journal={Proceedings of the National Academy of Sciences},
   publisher={Proceedings of the National Academy of Sciences},
   author={McGrath, Thomas and Kapishnikov, Andrei and Tomašev, Nenad and Pearce, Adam and Wattenberg, Martin and Hassabis, Demis and Kim, Been and Paquet, Ulrich and Kramnik, Vladimir},
   year={2022},
   month=nov }

@article{olsson2022context,
   title={In-context Learning and Induction Heads},
   author={Olsson, Catherine and Elhage, Nelson and Nanda, Neel and Joseph, Nicholas and DasSarma, Nova and Henighan, Tom and Mann, Ben and Askell, Amanda and Bai, Yuntao and Chen, Anna and Conerly, Tom and Drain, Dawn and Ganguli, Deep and Hatfield-Dodds, Zac and Hernandez, Danny and Johnston, Scott and Jones, Andy and Kernion, Jackson and Lovitt, Liane and Ndousse, Kamal and Amodei, Dario and Brown, Tom and Clark, Jack and Kaplan, Jared and McCandlish, Sam and Olah, Chris},
   year={2022},
   journal={Transformer Circuits Thread},
   note={https://transformer-circuits.pub/2022/in-context-learning-and-induction-heads/index.html}
}

@ARTICLE{bic,
       author = {{Schwarz}, Gideon},
        title = "{Estimating the Dimension of a Model}",
      journal = {Annals of Statistics},
         year = 1978,
        month = jul,
       volume = {6},
       number = {2},
        pages = {461-464},
       adsurl = {https://ui.adsabs.harvard.edu/abs/1978AnSta...6..461S},
      adsnote = {Provided by the SAO/NASA Astrophysics Data System}
}

@book{homogenization,
    author = {Cioranescu, Doina and Donato, Patrizia},
    title = {An Introduction to Homogenization},
    publisher = {Oxford University Press},
    year = {1999},
    month = {11},
    abstract = {This book provides an introduction to the mathematical theory of homogenization, which describes the replacement of a real composite material by a fictitious homogeneous one. The aim of the theory is to describe the macroscopic properties of the composite by taking into account the properties of the microscopic structure. The first four chapters cover variational methods for partial differential equations, which is the natural framework of homogenization theory. The text then discusses the homogenization of several kinds of second order boundary value problems. Particular attention is given to the classical examples of the steady and non-steady heat equations, the wave equation and the linearized system of elasticity. All topics are illustrated by figures and numerous examples. },
    isbn = {9780198565543},
    doi = {10.1093/oso/9780198565543.001.0001},
    url = {https://doi.org/10.1093/oso/9780198565543.001.0001},
}

@article{ADVANI2020428,
title = {High-dimensional dynamics of generalization error in neural networks},
journal = {Neural Networks},
volume = {132},
pages = {428-446},
year = {2020},
issn = {0893-6080},
doi = {https://doi.org/10.1016/j.neunet.2020.08.022},
url = {https://www.sciencedirect.com/science/article/pii/S0893608020303117},
author = {Madhu S. Advani and Andrew M. Saxe and Haim Sompolinsky},
keywords = {Neural networks, Generalization error, Random matrix theory},
abstract = {We perform an analysis of the average generalization dynamics of large neural networks trained using gradient descent. We study the practically-relevant “high-dimensional” regime where the number of free parameters in the network is on the order of or even larger than the number of examples in the dataset. Using random matrix theory and exact solutions in linear models, we derive the generalization error and training error dynamics of learning and analyze how they depend on the dimensionality of data and signal to noise ratio of the learning problem. We find that the dynamics of gradient descent learning naturally protect against overtraining and overfitting in large networks. Overtraining is worst at intermediate network sizes, when the effective number of free parameters equals the number of samples, and thus can be reduced by making a network smaller or larger. Additionally, in the high-dimensional regime, low generalization error requires starting with small initial weights. We then turn to non-linear neural networks, and show that making networks very large does not harm their generalization performance. On the contrary, it can in fact reduce overtraining, even without early stopping or regularization of any sort. We identify two novel phenomena underlying this behavior in overcomplete models: first, there is a frozen subspace of the weights in which no learning occurs under gradient descent; and second, the statistical properties of the high-dimensional regime yield better-conditioned input correlations which protect against overtraining. We demonstrate that standard application of theories such as Rademacher complexity are inaccurate in predicting the generalization performance of deep neural networks, and derive an alternative bound which incorporates the frozen subspace and conditioning effects and qualitatively matches the behavior observed in simulation.}
}

@misc{dauphin2014identifyingattackingsaddlepoint,
      title={Identifying and attacking the saddle point problem in high-dimensional non-convex optimization}, 
      author={Yann Dauphin and Razvan Pascanu and Caglar Gulcehre and Kyunghyun Cho and Surya Ganguli and Yoshua Bengio},
      year={2014},
      eprint={1406.2572},
      archivePrefix={arXiv},
      primaryClass={cs.LG},
      url={https://arxiv.org/abs/1406.2572}, 
}

@article{Fukumizu2000Local,
title={Local minima and plateaus in hierarchical structures of multilayer perceptrons},
author={K. Fukumizu and S. Amari},
journal={Neural networks : the official journal of the International Neural Network Society},
year={2000},
volume={13 3},
pages={ 317-27 },
doi={10.1016/S0893-6080(00)00009-5}
}

@misc{choromanska2015losssurfacesmultilayernetworks,
      title={The Loss Surfaces of Multilayer Networks}, 
      author={Anna Choromanska and Mikael Henaff and Michael Mathieu and Gérard Ben Arous and Yann LeCun},
      year={2015},
      eprint={1412.0233},
      archivePrefix={arXiv},
      primaryClass={cs.LG},
      url={https://arxiv.org/abs/1412.0233}, 
}

@article{Mill_n_2021,
   title={Complex networks with tuneable spectral dimension as a universality playground},
   volume={3},
   ISSN={2643-1564},
   url={http://dx.doi.org/10.1103/PhysRevResearch.3.023015},
   DOI={10.1103/physrevresearch.3.023015},
   number={2},
   journal={Physical Review Research},
   publisher={American Physical Society (APS)},
   author={Millán, Ana P. and Gori, Giacomo and Battiston, Federico and Enss, Tilman and Defenu, Nicolò},
   year={2021},
   month=apr }

@book{ben-Avraham_Havlin_2000, place={Cambridge}, title={Diffusion and Reactions in Fractals and Disordered Systems}, publisher={Cambridge University Press}, author={ben-Avraham, Daniel and Havlin, Shlomo}, year={2000}}

@article{PALADIN1987147,
title = {Anomalous scaling laws in multifractal objects},
journal = {Physics Reports},
volume = {156},
number = {4},
pages = {147-225},
year = {1987},
issn = {0370-1573},
doi = {https://doi.org/10.1016/0370-1573(87)90110-4},
url = {https://www.sciencedirect.com/science/article/pii/0370157387901104},
author = {Giovanni Paladin and Angelo Vulpiani},
abstract = {Anomalous scaling laws appear in a wide class of phenomena where global dilation invariance fails. In this case, the description of scaling properties requires the introduction of an infinite set of exponents. Numerical and experimental evidence indicates that this description is relevant in the theory of dynamical systems, of fully developed turbulence, in the statistical mechanics of disordered systems, and in some condensed matter problems. We describe anomalous scaling in terms of multifractal objects. They are defined by a measure whose scaling properties are characterized by a family of singularities, which are identified by a scaling exponent. Singularities corresponding to the same exponent are distributed on fractal set. The multifractal object arises as the superposition of these sets, whose fractal dimensions are related to the anomalous scaling exponents via a Legendre transformation. It is thus possible to reconstruct the probability distribution of the singularity exponents. We review the application of this formalism to the description of chaotic attractors in dissipative systems, of the energy dissipating set in fully developed turbulence, of some probability distributions in condensed matter problems. Moreover, a simple extension of the method allows us to treat from the same point of view temporal intermittency in chaotic systems and sample to sample fluctuations in disordered systems. We stress the phenomenological nature of the approach and discuss the few cases in which it was possible to reach a more fundamental understanding of anomalous scaling. We point out the need of a theory which should explain its origin and pave the way to a microscopic calculation of the probability distribution of the singularities.}
}

@misc{carroll2023distilling,
  author = {Carroll, Liam},
  title = {Distilling Singular Learning Theory},
  year = {2023},
  url = {https://www.lesswrong.com/s/czrXjvCLsqGepybHC},
  note = {Accessed: 2025-01-20}
}

@article{tubes,
 ISSN = {00029327, 10806377},
 URL = {http://www.jstor.org/stable/2371513},
 author = {Hermann Weyl},
 journal = {American Journal of Mathematics},
 number = {2},
 pages = {461--472},
 publisher = {Johns Hopkins University Press},
 title = {On the Volume of Tubes},
 urldate = {2025-01-20},
 volume = {61},
 year = {1939}
}

@article{Sagun2016Eigenvalues,
title={Eigenvalues of the Hessian in Deep Learning: Singularity and Beyond},
author={Levent Sagun and L. Bottou and Yann LeCun},
journal={arXiv: Learning},
year={2016},
doi={}
}

@misc{Diethelm2019,
  author    = {Kai Diethelm},
  title     = {General theory of Caputo-type fractional differential equations},
  booktitle = {Handbook of Fractional Calculus with Applications, Vol. 2: Fractional Differential Equations},
  editor    = {Anatoli Kochubei and Yuri Luchko},
  doi       = {10.1515/9783110571660-001},
  year      = {2019},
}

@misc{fjellstrom2022deeplearningstochasticgradient,
      title={Deep learning, stochastic gradient descent and diffusion maps}, 
      author={Carmina Fjellström and Kaj Nyström},
      year={2022},
      eprint={2204.01365},
      archivePrefix={arXiv},
      primaryClass={stat.ML},
      url={https://arxiv.org/abs/2204.01365}, 
}

@article{Mignacco_2022,
   title={The effective noise of stochastic gradient descent},
   volume={2022},
   ISSN={1742-5468},
   url={http://dx.doi.org/10.1088/1742-5468/ac841d},
   DOI={10.1088/1742-5468/ac841d},
   number={8},
   journal={Journal of Statistical Mechanics: Theory and Experiment},
   publisher={IOP Publishing},
   author={Mignacco, Francesca and Urbani, Pierfrancesco},
   year={2022},
   month=aug, pages={083405} 
}

@inproceedings{NEURIPS2021_fractalsgd,
 author = {Camuto, Alexander and Deligiannidis, George and Erdogdu, Murat A and Gurbuzbalaban, Mert and Simsekli, Umut and Zhu, Lingjiong },
 booktitle = {Advances in Neural Information Processing Systems},
 editor = {M. Ranzato and A. Beygelzimer and Y. Dauphin and P.S. Liang and J. Wortman Vaughan},
 pages = {18774--18788},
 publisher = {Curran Associates, Inc.},
 title = {Fractal Structure and Generalization Properties of Stochastic Optimization Algorithms},
 url = {https://proceedings.neurips.cc/paper_files/paper/2021/file/9bdb8b1faffa4b3d41779bb495d79fb9-Paper.pdf},
 volume = {34},
 year = {2021}
}

@misc{mandt2016variationalanalysisstochasticgradient,
      title={A Variational Analysis of Stochastic Gradient Algorithms}, 
      author={Stephan Mandt and Matthew D. Hoffman and David M. Blei},
      year={2016},
      eprint={1602.02666},
      archivePrefix={arXiv},
      primaryClass={stat.ML},
      url={https://arxiv.org/abs/1602.02666}, 
}

@article{haussdorff_heavy_tails,
   title={Hausdorff dimension, heavy tails, and generalization in neural networks*},
   volume={2021},
   ISSN={1742-5468},
   url={http://dx.doi.org/10.1088/1742-5468/ac3ae7},
   DOI={10.1088/1742-5468/ac3ae7},
   number={12},
   journal={Journal of Statistical Mechanics: Theory and Experiment},
   publisher={IOP Publishing},
   author={Şimşekli, Umut and Sener, Ozan and Deligiannidis, George and Erdogdu, Murat A},
   year={2021},
   month=dec, pages={124014} }

@inproceedings{
wang2025differentiation,
title={Differentiation and Specialization of Attention Heads via the Refined Local Learning Coefficient},
author={George Wang and Jesse Hoogland and Stan van Wingerden and Zach Furman and Daniel Murfet},
booktitle={The Thirteenth International Conference on Learning Representations},
year={2025},
url={https://openreview.net/forum?id=SUc1UOWndp}
}

@ARTICLE{mnist,
  author={Deng, Li},
  journal={IEEE Signal Processing Magazine}, 
  title={The MNIST Database of Handwritten Digit Images for Machine Learning Research [Best of the Web]}, 
  year={2012},
  volume={29},
  number={6},
  pages={141-142},
  keywords={Machine learning},
  doi={10.1109/MSP.2012.2211477}}

@misc{kingma2017adammethodstochasticoptimization,
      title={Adam: A Method for Stochastic Optimization}, 
      author={Diederik P. Kingma and Jimmy Ba},
      year={2017},
      eprint={1412.6980},
      archivePrefix={arXiv},
      primaryClass={cs.LG},
      url={https://arxiv.org/abs/1412.6980}, 
}

@article{Kochubei_2008,
   title={Distributed order calculus and equations of ultraslow diffusion},
   volume={340},
   ISSN={0022-247X},
   url={http://dx.doi.org/10.1016/j.jmaa.2007.08.024},
   DOI={10.1016/j.jmaa.2007.08.024},
   number={1},
   journal={Journal of Mathematical Analysis and Applications},
   publisher={Elsevier BV},
   author={Kochubei, Anatoly N.},
   year={2008},
   month=apr, pages={252–281} }

@misc{smith2018bayesianperspectivegeneralizationstochastic,
      title={A Bayesian Perspective on Generalization and Stochastic Gradient Descent}, 
      author={Samuel L. Smith and Quoc V. Le},
      year={2018},
      eprint={1710.06451},
      archivePrefix={arXiv},
      primaryClass={cs.LG},
      url={https://arxiv.org/abs/1710.06451}, 
}

@misc{chen2021anomalousdiffusiondynamicslearning,
      title={Anomalous diffusion dynamics of learning in deep neural networks}, 
      author={Guozhang Chen and Cheng Kevin Qu and Pulin Gong},
      year={2021},
      eprint={2009.10588},
      archivePrefix={arXiv},
      primaryClass={cs.LG},
      url={https://arxiv.org/abs/2009.10588}, 
}

@article{FFPE_inv_ss,
  title = {Fractional Fokker-Planck equation, solution, and application},
  author = {Barkai, E.},
  journal = {Phys. Rev. E},
  volume = {63},
  issue = {4},
  pages = {046118},
  numpages = {17},
  year = {2001},
  month = {Mar},
  publisher = {American Physical Society},
  doi = {10.1103/PhysRevE.63.046118},
  url = {https://link.aps.org/doi/10.1103/PhysRevE.63.046118}
}

@article{anom_diff_thermal,
  title = {Anomalous Diffusion and Relaxation Close to Thermal Equilibrium: A Fractional Fokker-Planck Equation Approach},
  author = {Metzler, Ralf and Barkai, Eli and Klafter, Joseph},
  journal = {Phys. Rev. Lett.},
  volume = {82},
  issue = {18},
  pages = {3563--3567},
  numpages = {0},
  year = {1999},
  month = {May},
  publisher = {American Physical Society},
  doi = {10.1103/PhysRevLett.82.3563},
  url = {https://link.aps.org/doi/10.1103/PhysRevLett.82.3563}
}

@article{osti_5454405,
  author       = {Tyler, S W and Wheatcraft, S W},
  title        = {Fractal processes in soil water retention},
  doi          = {10.1029/WR026i005p01047},
  url          = {https://www.osti.gov/biblio/5454405},
  journal      = {Water Resources Research; (United States)},
  issn         = {ISSN WRERA},
  volume       = {26:5},
  place        = {United States},
  year         = {1990},
  month        = {05}}

@article{HAMBLY_KIGAMI_KUMAGAI_2002, title={Multifractal formalisms for the local spectral and walk dimensions}, volume={132}, DOI={10.1017/S0305004101005618}, number={3}, journal={Mathematical Proceedings of the Cambridge Philosophical Society}, author={Hambly, B. M. and Kigami, Jun and Kumagai, Takashi}, year={2002}, pages={555–571}}

@article{scikit-learn,
  title={Scikit-learn: Machine Learning in {P}ython},
  author={Pedregosa, F. and Varoquaux, G. and Gramfort, A. and Michel, V.
          and Thirion, B. and Grisel, O. and Blondel, M. and Prettenhofer, P.
          and Weiss, R. and Dubourg, V. and Vanderplas, J. and Passos, A. and
          Cournapeau, D. and Brucher, M. and Perrot, M. and Duchesnay, E.},
  journal={Journal of Machine Learning Research},
  volume={12},
  pages={2825--2830},
  year={2011}
}

@inproceedings{sgld,
author = {Welling, Max and Teh, Yee Whye},
title = {Bayesian learning via stochastic gradient langevin dynamics},
year = {2011},
isbn = {9781450306195},
publisher = {Omnipress},
address = {Madison, WI, USA},
abstract = {In this paper we propose a new framework for learning from large scale datasets based on iterative learning from small mini-batches. By adding the right amount of noise to a standard stochastic gradient optimization algorithm we show that the iterates will converge to samples from the true posterior distribution as we anneal the stepsize. This seamless transition between optimization and Bayesian posterior sampling provides an inbuilt protection against overfitting. We also propose a practical method for Monte Carlo estimates of posterior statistics which monitors a "sampling threshold" and collects samples after it has been surpassed. We apply the method to three models: a mixture of Gaussians, logistic regression and ICA with natural gradients.},
booktitle = {Proceedings of the 28th International Conference on International Conference on Machine Learning},
pages = {681–688},
numpages = {8},
location = {Bellevue, Washington, USA},
series = {ICML'11}
}

@article{sgd_sde,
  author  = {Qianxiao Li and Cheng Tai and Weinan E},
  title   = {Stochastic Modified Equations and Dynamics of Stochastic Gradient Algorithms I: Mathematical Foundations},
  journal = {Journal of Machine Learning Research},
  year    = {2019},
  volume  = {20},
  number  = {40},
  pages   = {1--47},
  url     = {http://jmlr.org/papers/v20/17-526.html}
}

@book{Redner_2001, place={Cambridge}, title={A Guide to First-Passage Processes}, publisher={Cambridge University Press}, author={Redner, Sidney}, year={2001}}

@article{potential_theory,
author = {Baxter, John},
title = {Classical Potential Theory and Its Probabilistic Counterpart (J. L. Doob)},
journal = {SIAM Review},
volume = {27},
number = {3},
pages = {460-462},
year = {1985},
doi = {10.1137/1027124}}

@misc{eldan2023tinystoriessmalllanguagemodels,
      title={TinyStories: How Small Can Language Models Be and Still Speak Coherent English?}, 
      author={Ronen Eldan and Yuanzhi Li},
      year={2023},
      eprint={2305.07759},
      archivePrefix={arXiv},
      primaryClass={cs.CL},
      url={https://arxiv.org/abs/2305.07759}, 
}

@misc{Le2015TinyIV,
  title={Tiny ImageNet Visual Recognition Challenge},
  author={Ya Le and Xuan S. Yang},
  year={2015},
  url={https://api.semanticscholar.org/CorpusID:16664790}
}

@misc{huggingfaceRoneneldanTinyStories1MHugging,
	author = {},
	title = {roneneldan/{T}iny{S}tories-1{M} · {H}ugging {F}ace --- huggingface.co},
	howpublished = {\url{https://huggingface.co/roneneldan/TinyStories-1M}},
	year = {},
	note = {[Accessed 16-11-2025]},
}

@misc{huggingfaceNickyprotinyllama15MHugging,
	author = {},
	title = {nickypro/tinyllama-15{M} · {H}ugging {F}ace --- huggingface.co},
	howpublished = {\url{https://huggingface.co/nickypro/tinyllama-15M}},
	year = {},
	note = {[Accessed 16-11-2025]},
}

@misc{huggingfaceRoneneldanTinyStories33MHugging,
	author = {},
	title = {roneneldan/{T}iny{S}tories-33{M} · {H}ugging {F}ace --- huggingface.co},
	howpublished = {\url{https://huggingface.co/roneneldan/TinyStories-33M}},
	year = {},
	note = {[Accessed 16-11-2025]},
}

@misc{simonyan2015deepconvolutionalnetworkslargescale,
      title={Very Deep Convolutional Networks for Large-Scale Image Recognition}, 
      author={Karen Simonyan and Andrew Zisserman},
      year={2015},
      eprint={1409.1556},
      archivePrefix={arXiv},
      primaryClass={cs.CV},
      url={https://arxiv.org/abs/1409.1556}, 
}

@misc{he2015deepresiduallearningimage,
      title={Deep Residual Learning for Image Recognition}, 
      author={Kaiming He and Xiangyu Zhang and Shaoqing Ren and Jian Sun},
      year={2015},
      eprint={1512.03385},
      archivePrefix={arXiv},
      primaryClass={cs.CV},
      url={https://arxiv.org/abs/1512.03385}, 
}

@misc{pesme2020convergencediagnosticbasedstepsizes,
      title={On Convergence-Diagnostic based Step Sizes for Stochastic Gradient Descent}, 
      author={Scott Pesme and Aymeric Dieuleveut and Nicolas Flammarion},
      year={2020},
      eprint={2007.00534},
      archivePrefix={arXiv},
      primaryClass={cs.LG},
      url={https://arxiv.org/abs/2007.00534}, 
}

@InProceedings{pmlr-v48-mandt16,
  title = 	 {A Variational Analysis of Stochastic Gradient Algorithms},
  author = 	 {Mandt, Stephan and Hoffman, Matthew and Blei, David},
  booktitle = 	 {Proceedings of The 33rd International Conference on Machine Learning},
  pages = 	 {354--363},
  year = 	 {2016},
  editor = 	 {Balcan, Maria Florina and Weinberger, Kilian Q.},
  volume = 	 {48},
  series = 	 {Proceedings of Machine Learning Research},
  address = 	 {New York, New York, USA},
  month = 	 {20--22 Jun},
  publisher =    {PMLR},
  pdf = 	 {http://proceedings.mlr.press/v48/mandt16.pdf},
  url = 	 {https://proceedings.mlr.press/v48/mandt16.html},
  abstract = 	 {Stochastic Gradient Descent (SGD) is an important algorithm in machine learning. With constant learning rates, it is a stochastic process that, after an initial phase of convergence, generates samples from a stationary distribution. We show that SGD with constant rates can be effectively used as an approximate posterior inference algorithm for probabilistic modeling. Specifically, we show how to adjust the tuning parameters of SGD such as to match the resulting stationary distribution to the posterior. This analysis rests on interpreting SGD as a continuous-time stochastic process and then minimizing the Kullback-Leibler divergence between its stationary distribution and the target posterior. (This is in the spirit of variational inference.) In more detail, we model SGD as a multivariate Ornstein-Uhlenbeck process and then use properties of this process to derive the optimal parameters. This theoretical framework also connects SGD to modern scalable inference algorithms; we analyze the recently proposed stochastic gradient Fisher scoring under this perspective. We demonstrate that SGD with properly chosen constant rates gives a new way to optimize hyperparameters in probabilistic models.}
}

@misc{mandt2018stochasticgradientdescentapproximate,
      title={Stochastic Gradient Descent as Approximate Bayesian Inference}, 
      author={Stephan Mandt and Matthew D. Hoffman and David M. Blei},
      year={2018},
      eprint={1704.04289},
      archivePrefix={arXiv},
      primaryClass={stat.ML},
      url={https://arxiv.org/abs/1704.04289}, 
}
\bibliographystyle{iclr2026_conference}

\appendix
\section{Singular Learning Theory Basics}\label{app:slt}
Here we give an informal introduction to singular learning theory. For more in-depth but still accessible introduction, we recommend the \textit{Distilling Singular Learning Theory} series of blog posts \citep{carroll2023distilling}, along with the the seminal work by Watanabe \citep{Watanabe2009AlgebraicGA}. For us, it's mostly important to understand the problems that singular learning theory solves. To do this, we first must consider a classical idea in machine learning, the \textit{Bayesian Information Criterion}. The BIC is used to determine which model from a set of different models is likely to generalize the best. Let $a_w$ be a model with $d$ free parameters $w$ in the collection of models, trained over $m$ datapoints and denote the minimum loss achievable by $a_w$ as $L_n(a_w^0)$. The BIC says that we should select the model from our collection of models which minimizes the following:
\begin{equation}
    \text{BIC} := nL_n(a_w^0) + \frac{d}{2} \log n
\end{equation}
This more-or-less says that we should choose the simplest model that fits our data.

The caveat about the BIC however is it makes the assumption that the models we care about are ``regular statistical models". There are two key things that are required for a statistical model to be regular. First, the model must be \textit{identifiable}, which effectively means that any set of parameters for $a$ are unique in that if $a_{w_1}(x) = a_{w_2}(x)$ then $w_1 = w_2$. Second, the Fisher Information matrix near the true parameters $a_w^0$ must be positive definite. This condition is easiest to understand if we assume the loss is the KL-divergence (or log loss), as it corresponds to saying that the Hessian of the loss $H(L_n(a_w^0))$ is non-degenerate, having only non-zero eigenvalues.

This fact is key for how one derives the BIC. While the formal derivation of the BIC is straightforward, it is time consuming and there's a much simpler way to intuitively see why it matters. First, the non-degeneracy of $H(L_n(a_w^0))$ means the geometry of the loss surface is a paraboloid about $a_w^0$. We want to then measure how many configurations of $a$ have a loss less than $\epsilon$, so we want to measure the volume of a paraboloid of height $\epsilon$ in our parameter space. The nice thing about a parabaloid is that its volume is half of that of the cylinder that encloses it. This can be computed straightforwardly from the $d$-dimensional volume of tubes formula\citep{tubes}:

\begin{equation}
    \frac{V_d(2\epsilon)^{\frac{d}{2}}}{\sqrt{\text{det}{(H(L_m(a_w^0)))}}}
\end{equation}

Here $V_d$ is the volume of the $d$-sphere. This formula is effectively where the $\frac{d}{2}$ comes from in the BIC. Now, one might notice that if we are considering potentially degenerate local minima, this formula cannot be applied since the degeneracy of the Hessian means the determinant is 0. In this case, the BIC is not well defined either. Models with degenerate minima are called singular models. In some sense, most model classes are singular. Neural networks for instance are highly singular and generally admit many equivalent parametrizations for computing the same function. Singular learning theory attempts to handle this problem by finding a method for computing the volume of degenerate local minima.

This is done by considering the \textit{singular integral}\citep{Watanabe2009AlgebraicGA}:
\begin{equation}
    V(\epsilon) = \int_{\{w\in W|L(w)<\epsilon\}} \rho(w) dw
\end{equation}
where $\rho(w)$ is a prior distribution over the parameter space so $\rho(w)dw$ behaves as a measure, and $L$ is the population loss. Unlike the quadratic minima case, there is no straightforward volume formula one can use here. However, as is shown in \citep{Watanabe2009AlgebraicGA}, as $\epsilon \to 0$ asymptotically this integral is:
\begin{equation}
    V(\epsilon) = c_1 \epsilon^\lambda(- \log \epsilon)^{m-1} + o(\epsilon^\lambda(- \log \epsilon)^{m-1})
\end{equation}
where $\lambda$ is \textit{learning coefficient} and $m$ the \textit{multiplicity}. One can define this integral at level sets which are non-true parameters by shifting the value in the integrand like $\{w\in W|0<L(w) - \delta <\epsilon\}$.

In short, one can show that the volume about a local minima scales with the height $\epsilon$ according to the learning coefficient $\lambda$ so the volume of the degenerate minima scales $\propto \epsilon^\lambda$ as $\epsilon\to 0$ \citep{Watanabe2009AlgebraicGA}. This can be used to derive the \textit{Widely Applicable Bayesian Information Criterion} \citep{watanabe2012widelyapplicablebayesianinformation} which is given by:

\begin{equation}
    \text{WBIC} := nL_n(a_w^0) + \lambda \log n
\end{equation}

The natural interpretation of $\lambda$ is as the ``effective dimension" of a model. We note here as well that it is relatively common to treat the multiplicity as taking the value $m=1$ to simplify working with singular models as the relative contributions for most applications are negligible in their effects.

\subsection{The Local Learning Coefficient}
In the above, we discussed the global learning coefficient of \citep{Watanabe2009AlgebraicGA}. A local version of this was defined in \citep{lau2024locallearningcoefficientsingularityaware}. This is given straightforwardly by simply restricting from the singular integral over the whole set of $\epsilon$-true parameters to some local neighborhood of some parameter of interest $w^*$. One normally considers a ball of some radius $r$ about said parameter $B_r(w^*)$ and then defines the local singular integral as
\begin{equation}
    V_{w^*}(\epsilon) = \int_{\{w\in B_r(w^*)|L(w)<\epsilon\}} d \mu(w) 
\end{equation}
where $d \mu(w)$ is the standard Lebesgue measure. The \textit{local learning coefficient} and \textit{local multiplicity} are of the same form as the global case:
\begin{equation}
    V_{w^*}(\epsilon) \approx \epsilon^{\lambda(w^*)}(- \log \epsilon)^{m(w^*)-1}
\end{equation}

When looking at the local learning coefficient, one must make a choice of scale $r$. However, we note that increasing $r$ cannot increase the value of $\lambda(w^*)$. This comes from a fundamental property of the learning coefficient which says that the learning coefficient in an area is the smallest of all possible local learning coefficients as $r \to 0$. The reasoning for this is non-trivial, and the interested reader is referred to \citep{Watanabe2009AlgebraicGA} for details. 

We now briefly explain how the local learning coefficient is computed in practice. Suppose that we define some distribution over $B_r(w^*)$. The local learning coefficient estimator of \citep{lau2024locallearningcoefficientsingularityaware} is given as
\begin{equation}
    \hat{\lambda}(w^*) = \frac{n}{\log n}[\mathbb{E}_{w|B_r(w^*)}(L_n(w)) -L_n(w^*)]
\end{equation}
This accords with our intuition that if $w^*$ is simple/flat then perturbing the value of $w^*$ should not change the loss. For in-depth experimental results for the accuracy of this estimator we refer readers to \citep{lau2024locallearningcoefficientsingularityaware} and \citep{wang2024differentiationspecializationattentionheads}.

\subsubsection{Dependence of the LLC on $r$}
While this question is addressed more formally in appendix B of \citep{lau2024locallearningcoefficientsingularityaware}, we address this point here informally. Under the standard assumptions of Singular Learning Theory \citep{Watanabe2009AlgebraicGA}, the local learning coefficient on a ball of radius $r$ around a true parameter $w^*$ converges to some fixed "pointwise" learning coefficient. This is because as $r$ becomes small, the behaviour of the loss $L(w)$ for some $w \in B_r(w^*)$ is dominated by the leading order of the singular expansion about $w^*$. Shrinking $r$ cannot change the monomial exponents obtained after resolution of singularities; those exponents are geometric invariants of the germ of $L$ at the point.

\section{Fokker-Planck Equations and Fractal Dimensions}
\subsection{The Mass Dimension}\label{app:mass_dim}
In section \ref{sec:slt_df} we claim that the local learning coefficient is effectively a mass dimension. To see this, let's start by considering a large collection of (non-interacting) particles diffusing through an arbitrary fractal media. An important thing to note here is that diffusion on fractal media is actually a special case of the more general ``diffusion on porous media" where the volume of the pores scales like a fractal dimension. Keeping notational consistency, we are interested in the valid states in some ball $B(w^*)$. To measure this, we need something called the ``characteristic linear dimension" which we can scale asymptotically. In porous media, this is something like the ``pore diameter" (since particles can occupy any point in a pore). For consistency again, we denote this value as $\epsilon$. %If we want to understand the diffusive behavior in some local area, we need to figure out how many ``valid states"\footnote{Sometimes it is useful in the study of diffusion to talk about the volume of so called \textit{accessible states} in an area since experimentally if one wishes to measure the fractal dimension of a volume by the amount of fluid it takes to fill the volume, states that are ``cut off" from where one injects the fluid won't be accounted for but for us it's reasonable to ignore this and simply talk about states a particle could in theory occupy.} a particle can occupy in the area. 

The mass dimension is then the fractal dimension that determines the relative volume of the pores to the total volume as we restrict the diameter of the pores by taking $\epsilon \to 0$. One way to see what this is doing is to consider the mass dimension of an empty sphere (that is, the whole thing is a pore and nothing is there to impede a particle). As we take $\epsilon$ to 0, we end up with every possible point being a pore, so the relative volume is 1.

So we can imagine that for some ball centered about a reference point $B(w^*)$, and we care about the volume of states a particle could exist in within this ball, usually denoted as $M(\epsilon)$. The relative volume is given identically to the learning coefficient case like
\begin{equation}
    \frac{M(\epsilon)}{M(B(w^*))}
\end{equation}
We get the fractal dimension which determines the relative volume $d_f(w^*)$ as:
\label{mass_dim}
\begin{equation}
    M(\epsilon) \propto \epsilon^{d_f(w^*)}
\end{equation}
as $\epsilon \to 0$ asymptotically. One can see that this coincides with the definition of the local learning coefficient(\citep{fractal_unified_approach}, \citep{BOUCHAUD1990127}). From a fractal geometric viewpoint, the normal mass dimension is computed as a ``Minkowski sausage" which can be thought of as similar how the volume of a ``sausage casing" wrapping a pore scales as you decrease the radius. The learning coefficient is similar, except it uses a different ``gauge function" since our pores are not tubes, but are instead like basins, so we look at how the volume of water in the basin changes as we decrease the height of the water. This fractal dimension has been used to model the diffusion of water through ground soil of different types \citep{osti_5454405}. They find diffusion is much slower and has a larger fractal dimension through clay-like media where there are very few channels to move through, where more sand-like media has faster diffusion and a lower fractal dimension. This is conceptually the same as the results given here.
%we care about how many states within this area can be occupied by the particle (referred to usually as \textit{accessible states}) in an area since experimentally if one wishes to measure the fractal dimension of a volume by the amount of fluid it takes to fill the volume, states that are ``cut off" from where one injects the fluid won't be accounted for but for us it's reasonable to ignore this and simply talk about states a particle could in theory occupy.  

%a fraction for the relative volume of possible states a particle diffusing on a fractal can occupy with respect to some characteristic linear dimension (radius, length scale, etc) which for consistency we denote $\epsilon$.
\subsection{The Spectral Dimension}\label{app:spectral_dim}
The spectral dimension is easiest to understand if you think “how fast does diffusion manage to explore new places?” rather than “what is the geometric dimension of the space?”. In the main text we introduce the spectral dimension as the scaling exponent of the ``volume of visited/occupied states". That is, $V_s(t)$ simply denotes the volume of states which have been visited by the process at time $t$. The spectral dimension is then:
\begin{equation}
    V_s(t) \sim t^{\frac{d_s}{2}}
\end{equation}
Intuitively $d_s$ says how fast diffusion fills out the medium you are diffusing on, and in that sense it is the “dimension that diffusion sees”.

In this work, the medium is (a region of) parameter space, and the diffusive process is the long-run stochastic motion of SGD (modeled via the Fokker–Planck equation). The spectral dimension then measures how quickly SGD can spread over the set of weight configurations that are dynamically accessible from some initial condition. To see this a bit more formally we will discuss the spectral dimension of normal Brownian motion.

\subsubsection{Spectral vs. Geometric Dimension}
On a flat surface like $\mathbb{R}^d$, a random walk driven by Brownian motion has a displacement $R(t) \propto t^{\frac{1}{2}}$ so the region which gets explored after time $t$ can be seen straightforwardly to have a volume which simply scales with the displacement, since the diffusion is uniform in $d$ dimensions. That is we have:
\begin{equation}
    V_s(t) \propto R(t)^d
\end{equation}
so
\begin{equation}
    V_s(t) \propto t^{\frac{d}{2}}
\end{equation}
One can see though that by introducing obstructions into this free space changes the diffusion rate. Moving through a medium with many obstructions changes the geometry that diffusion experiences, which can be very different from the naive ambient dimension. Narrow channels, dead ends, bottlenecks, and local degeneracies all slow down or redirect the random walk. The process might live in a very high-dimensional ambient space, but only a much smaller effective set of directions is actually accessible on the timescales we care about. In such cases one typically has that $d_s \neq d$. In our setting, the ambient dimension is the number of parameters, while the local “mass dimension” of low-loss regions is given by the local learning coefficient $\lambda(w)$. Consider this in the same context as the Brownian motion example. Since the learning coefficient captures the volume of low loss states, a diffusive process on those states can only ever access at most that many states, bounding the spectral dimension. So the spectral dimension literally captures the number of states SGD can actually visit over some time frame.

\subsubsection{Why SGD Has a Spectral Dimension}
For simplicity we are going to imagine a very localized picture of SGD here. Suppose we initialize in some area $\mathcal{W}$ which has learning coefficient $\lambda$ which approximates the volume of low loss points in $\mathcal{W}$. The spectral dimension $d_s$ of this area tells us how efficiently the SGD-induced diffusion spreads into that volume over time. The walk dimension $d_w = \frac{2\lambda}{d_s}$ ties the two together and governs the displacement scaling.

From the perspective of diffusion theory, $d_s$ is therefore the right quantity to describe the effective dimensionality of SGD dynamics.. It is “spectral” because, in principle, it could be read off from the spectrum of the Fokker–Planck operator governing SGD; in practice, we estimate it through the observed power-law scaling of displacement, which is equivalent information in the regime we study.

\subsection{Impact of Early super-diffusive Dynamics on Diffusion Exponents}\label{app:early_superdiffusion}
Given the displacement equation
\begin{equation}
    R(t) \sim t^{\frac{1}{d_\text{walk}(t)}}
\end{equation}
we can give a rather straightforward way to incorporate the super-diffusive component. If a given trajectory goes from super-diffusive to sub-diffusive, there exists a crossover time $t_c$ where the dynamics change. Given this, let $r(t)$ be the super-diffusive component such that for $t>t_c$ we have $r(t) = r(t_c)$. Letting $I_{t_c}(t) = 1$ for $t \geq t_c$ we can then write the displacement as
\begin{equation}
    R(t) \sim I_{t_c}(t)(t-t_c)^{\frac{1}{d_\text{walk}(t)}} + r(t)
\end{equation}
A similar trick works for the volume. This means that one can account for the early super-diffusive behaviour without directly impacting the exponents if one accounts for the crossover time. In general, we find experimentally in many cases that the initial super-diffusive regime is short enough that the dynamics are still well approximated by the entirely sub-diffusive equations.

%\subsection{Interpreting the Fokker-Planck Equation}
%The Fokker-Planck equation is one of the most heavily studied equations in physics. For those looking to learn more formally about the FPE and diffusion in general, the book (\citep{Porter-Easterling}) represents a good introduction. An intuitive way we can interpret the FPE is to imagine putting a large number of non-interacting (or weakly interacting) particles into an environment and tracking how they move over time. If we consider each position as a state, this would tell us how likely a given state is as the system evolves over time. This evolution of states is exactly what is captured by the Fokker-Planck equation. Furthermore, it is effectively a deterministic version of the Langevin equation, which is the (type of) stochastic differential equation used in the gradient noise literature.

%One might then assume that we can describe SGD\footnote{While we work with the continuous-time FPE (meaning we are really looking at gradient flow), numerical simulations of the FPE are done by discretizing the FPE, and it is generally the case that the continuous time FPE can be recovered as the small timestep limit of the discrete FPE. In the case of deep learning, this is well-supported by previous research \citet{elkabetz2021continuous}.} 

\section{Towards Practical Applications}\label{app:practical_applications}
While the results here are largely theoretical, we believe they provide important avenues and insights for practical applications. We discuss some of these below.

\subsection{Transfer Learning And Robustness}
\subsubsection{Parameter Choice for Transfer Learning}
In transfer learning, you start from a pretrained minimum and fine-tune with SGD on a new task. The value of $\lambda$ at initialization tells you how wide that basin is. If one maintains a record of the weight displacement from pretraining, one can estimate how the effect the new data distribution has on $d_s$ during the initial steps. These tell you how “wide” and “connected” the accessible region is, which can inform how aggressively to tune the learning rate and batch size. For example if the spectral dimension is low, but the loss is high, you are likely stuck in a wide flat basin, so one might increase the learning rate or decrease the batch size.

\subsubsection{Robust Model Selection}
Our theory indicates that model parameters with a low $\lambda$ but a high relative spectral dimension $d_s$ represent models which had more movement within the same large basin. Selecting for such models might result in more robust generalizing models as the minima they exist in are ``flat"

\subsection{Learning Rate Schedulers and Optimizers}
\subsubsection{Designing Learning Rate Schedulers}
Warmup and decay can be viewed as shaping $d_s$ over time. This suggests the potential for structural schedule design: e.g. maintain higher $d_s$ early (more exploration), then lower $d_s$ later (stronger localization).

\subsubsection{Evaluating Optimizers}
Another application is evaluating optimizers for particular structural properties. That is, one can look at how the spectral dimension or the learning coefficient change over time and compare these with SGD to better understand how the optimizer impacts generalization behaviour.

\subsection{Approximate Bayesian Inference}
One can potentially apply the theory here to calibrating uncertainty in SGD. In practice, “Bayesian” approximations often assume Langevin dynamics with quadratic minima. Our theory gives a way to correct for degeneracy and accessibility so that posterior variances and predictive intervals reflect the actual dynamics of SGD, not an idealized model.

\section{Proofs}\label{app:proofs}
\subsection{LLC Determines Pore Scale Dynamics Near Degenerate Points}
The proof of this relies on results from the theory of first passage times, homogenization, and porous diffusion, namely the estimation of mean first passage times of porous media. However, the components needed are straightforward to state. First we need the following definition\citep{potential_theory}:
\begin{definition}
Let $K \subset \mathbb{R}^d$. The Newtonian capacity $\text{Cap}(K)$ is defined as
\begin{equation}
    \inf \{ \int_{\mathbb{R}^d} |\nabla u|^2 dx : u \in C_c^\infty, u \geq 1 \text{ on } K \}
\end{equation}
\end{definition}
Intuitively: if $K$ is “big” or “accessible,” you can spread charge thinly and keep fields weak which makes them low energy, large capacity; if it’s tiny or shielded, fields must be intense → high energy, small capacity. Next we give a known result of first passage times in porous media\citep{Redner_2001}.

\begin{theorem}\label{thm:escape_time}
    Let $\Omega\subset \mathbb{R}^d$ be a porous domain and let $dX_t$ be a brownian process on $\Omega$ and let $\Gamma$ be the set of all absorbing walls in $\Omega$. If we consider then $\Gamma_\delta$ to be the subset of absorbing walls with area less than $\delta$ then the mean first passage time of a Brownian particle through $\Omega$ is (asymptotically) proportional to
    \begin{equation}
        \bar{T} \propto \frac{|\Omega|}{D \text{Cap}(\Gamma_\delta)}
    \end{equation}
as $\delta \to 0$ where $D$ is the diffusion coefficient of the Brownian process.
\end{theorem}
Intuitively this says that for isotropic noise the time spent in some domain is proportional to how big the domain is and how large the escape walls are. We can use this to get the mean first passage time for SGD around degenerate saddle points under some simplifying assumptions.

\begin{theorem}
    Let $w^* \in W$ be a point such that the Hessian of the loss $H(w^*)$ is positive semidefinite. Let $\mathcal{W} = B(r, w^*)$ be a small area about $w^*$ with local learning coefficient $\lambda(w^*)$ with $\mathcal{W_{\varepsilon}} = \{ w \in \mathcal{W} | \mathcal{L}(w) < \varepsilon \}$. assume we have small istropic noise $D$ and that there is a reflective boundary at height $\varepsilon$ along the wall, and some escape set (absorbing boundary) $\Gamma(\mathcal{W})$ then the time it takes to traverse distance  with error tolerance $\varepsilon$ is inversely proportional to the LLC.
\end{theorem}
\begin{proof}
    First note that in the above picture, we effectively have Brownian motion along a submanifold $\mathcal{W_{\varepsilon}} = \{ w \in \mathcal{W} | \mathcal{L}(w) < \varepsilon \}$, which we can treat as diffusion through a porous media where the pores have height $\varepsilon$. Then from theorem \ref{thm:escape_time} we know that asymptotically as $\delta \to 0$ we should have
    \begin{equation}
        \bar{T}(\mathcal{W}_\varepsilon) \propto \frac{|\mathcal{W}_\varepsilon|}{D \text{Cap}(\Gamma_\delta(\mathcal{W}_\varepsilon))}
    \end{equation}
    and since we can take $|\mathcal{W}_\varepsilon| = V(\varepsilon)$. This gives the desired result.
\end{proof}

While the above result requires isotropic noise and zero gradient, analogous results likely hold under weak anisotropy and weak gradients by considering potential driven Brownian motion on a submanifold which is tilted to behave like a porous media.

\subsection{Diffusion Coefficient has a Scalar Approximation}
First we would like to prove that the diffusion coefficient is well-approximated by a constant. To do this we prove a handful of results. In the following let $\gamma \ll 1$ be a small learning rate and let $n$ denote the batch size. Letting $\mathcal{D}^\alpha_t$ be the Caputo fractional derivative, take 
    \begin{equation}
    \mathcal{D}^\alpha_tw_t = -\gamma\nabla \mathcal{L}_n(w_{t-1}) + \Sigma_{w_{t-1}}
\end{equation}
to be the overdamped Langevin equation with 
\begin{equation}
    \Sigma_{w_{t-1}} = \sqrt{2D(w_{t-1})} dW_t
\end{equation}
where $dW_t$ is a $d$-dimensional Wiener process. Let $T$ be the time to equilibration for an instance of the system. Then as $t \to T$, assume for almost all eigenvalues $e_i$ of the Hessian $H(w_{t-1})$ of $\mathcal{L}_n(w_{t-1})$ we have $e_i \ll 1$ are $\approx 0$ (which has been shown in \citep{Sagun2016Eigenvalues}). Furthermore, we start with the assumption that the full diffusion tensor is proportional to the Hessian so $D(w_{t-1}) \propto H(w_{t-1})$ which has been shown previously (both experimentally and rigorously under particular assumptions, see \citep{xie2021diffusiontheorydeeplearning} and \citep{smith2018bayesianperspectivegeneralizationstochastic}). We start with the following:
\begin{lemma}[Effective Diffusion Tensor is Low Rank]
     As $t \to T$, assuming $\mathcal{L}$ is in $C^2$, the diffusion tensor is well-approximated by an effective diffusion tensor $D_{\text{eff}}(w)$ with rank $d_\text{eff} \ll d$ with $w \in \mathbb{R}^d$.
\end{lemma}
\begin{proof}
     Since $D(w_{t-1}) \propto H(w_{t-1})$ and we know that almost all eigenvalues are $\approx 0$, the result follows almost immediately from the Eckart–Young–Mirsky theorem. That is, approximating via the truncated eigendecomposition
     \begin{equation}
         D_{k} = \sum^k_i e_i q_i q_i^T
     \end{equation}
     and considering the ordering $|e_1| \geq |e_2| \geq ... |e_n|$ we get the approximation error
     \begin{equation}
         \|D - D_k\| = \sum_{i>k} e_i^2
     \end{equation}
     and if $e_i \approx 0$ for all $i>k$, then $\sum_{i>k} e_i^2 \approx 0$.
\end{proof}

\begin{lemma}
    As $t \to T$, taking $D(w_t) \approx \frac{\gamma}{n}H(w_t)$ \citep{xie2021diffusiontheorydeeplearning} for batch size $n$ and learning rate $\gamma$. For any $\epsilon$ there exists some choice of $\gamma$ and $n$ such that there is a scalar value $a$ with
    \begin{equation}
        \| D(w_t) - aI\| < \epsilon
    \end{equation}
\end{lemma}
\begin{proof}
    Since $D(w_t)$ is symmetric, it can be rewritten as $Q \Lambda Q^T = D(w_t)$ where $\Lambda = \text{diag}(e_1, ..., e_n)$. We can then take
    \begin{equation}
        D(w_t) - aI = Q (\Lambda-aI) Q^T
    \end{equation}
    and by the unitary invariance of the Frobenius norm we get:
    \begin{equation}
        \|D(w_t) - aI\| = \|(\Lambda-aI)\|
    \end{equation}
    which is
    \begin{equation}
        \sum_i^d (e_i - a)^2
    \end{equation}
    Then since for almost all $i$, $e_i=0$ we have
    \begin{equation}
         = ca^2 + \sum_j (e_j - a)^2
    \end{equation}
    where the sum is over all non-zero eigenvalues and $c$ is the number of $0$ eigenvalues. Let $a^* = \text{argmin}_{a\in\mathbb{R}} ca^2 + \sum_j (e_j - a)^2$. Notice that since $e_j$ is an eigenvalue of $\frac{\gamma}{n}H(w_t)$ we can rewrite it $e_j = \frac{\gamma}{n}e'_j$ where $e'_j$ is the corresponding eigenvalue in the unscaled Hessian.

    Letting $e'_1$ be the largest unscaled eigenvalue, notice that as $\gamma \to 0$ and/or $n \to \infty$ that the value for $e_1$ dominates the sum, and all the other $e_j$ go to $0$, so the sum is then
    \begin{equation}
        \approx (d-1)a^2 + (e_1-a)^2
    \end{equation}
    so setting $a = e_1$ we get
    \begin{equation}
        \|(\Lambda-aI)\| \approx (d-1)e_1^2
    \end{equation}
        \begin{equation}
        = (d-1)(\frac{\gamma}{n}e'_j)^2
    \end{equation}
    and since the learning rate and the batch size can be made arbitrarily small/large, our result follows.
\end{proof}

We now prove the general form of the scalar diffusion coefficient at some effective scale. This is a well-known result within the diffusion literature \citep{BOUCHAUD1990127} but we include it here for completeness. We prove it for the homogeneous case. The inhomogeneous case follows from application of this to a restricted sub-domain.
\begin{lemma}
    Let $\xi$ be some choice of length scale and $d_{\text{walk}}$ be the walk dimension. The diffusion coefficient can be approximated by a scalar as $D_\xi = \xi^{2-d_{\text{walk}}}$.
\end{lemma}
\begin{proof}
    The effective diffusivity is defined as $D_\xi = \frac{\text{length}^2}{\text{length traversal time}}$. Since $R(t) \sim t^{\frac{1}{d_{\text{walk}}}}$ we get that $R(t)^{d_{\text{walk}}} \sim t$ so setting $R(t) = \xi$ and rewriting $t(\xi)$ as the time $t$ such that $R(t) = \xi$, we get $t \sim \xi^{d_{\text{walk}}}$ we can write
    \begin{equation}
        D_\xi = \frac{\xi^2}{\xi^{d_{\text{walk}}}}
    \end{equation}
    \begin{equation}
        =\xi^{2-d_{\text{walk}}}
    \end{equation}
    as desired.
\end{proof}

\subsection{Steady States}
Here we will give proofs of the results given in section \ref{sec:fract_dynamic}.
\begin{lemma*}
Consider a subset of the parameter space $\mathcal{W} \subset W$ such that the effective diffusion coefficient $D_\xi$ is (approximately) constant on $\mathcal{W}$. Suppose then that there exists steady state solutions on this subset $w^*$ so $\frac{\partial p(w^*,t)}{\partial t} = 0$. The steady-state distribution is then given by $p_{s}(w) \propto e^{\frac{-\gamma\mathcal{L}_m[w]}{D_\xi}}$.
\end{lemma*}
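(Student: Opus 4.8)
The plan is to observe that a time‑independent density is annihilated by the fractional operator, so stationarity collapses to an ordinary spatial (elliptic) equation with constant coefficients, and then to solve that equation by the classical zero‑current argument. First I would note that if $p_s=p_s(w)$ carries no $t$‑dependence then $\partial_t p_s\equiv 0$, so the Caputo operator gives $\mathcal{D}^\alpha_t p_s=\frac{1}{\Gamma(1-\alpha)}\int_0^t (t-\tau)^{-\alpha}\,\partial_t p_s(\tau)\,d\tau=0$; hence the steady‑state requirement $\mathcal{D}^\alpha_t p_s=0$ (equivalently $\partial_t p_s=0$, as stated in the appendix form) is exactly that the spatial divergence on the right‑hand side of the SGD‑FFPE vanishes on $\mathcal{W}$. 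Since $D_\xi$ is (approximately) constant on $\mathcal{W}$ — which, by Definition \ref{def:effect_diff}, is nothing but the statement that $\lambda(w)$ is constant there — it pulls through the divergence, leaving a homogeneous, constant‑coefficient second‑order linear equation for $p_s$ on $\mathcal{W}$.

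Next I would rewrite this equation as $\nabla\cdot\mathbf{J}=0$, where $\mathbf{J}$ denotes the probability current of the diffusion (the transport term corresponding to the gradient‑descent drift). The step I expect to be the main obstacle is upgrading $\nabla\cdot\mathbf{J}=0$ to $\mathbf{J}\equiv 0$: a divergence‑free field need not vanish in general. I would justify it from two ingredients. First, the diffusion is confined — we work inside the compact $W$, with the reflecting / no‑flux boundary condition that is natural for a conserved density on a bounded region, so the normal component of $\mathbf{J}$ vanishes on $\partial\mathcal{W}$. Second, the drift is a gradient field (it is built from $\nabla\mathcal{L}_m$), so the diffusion is reversible and supports no persistent circulating current. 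In one dimension this is transparent: $\mathbf{J}'=0$ forces $\mathbf{J}$ constant, and a nonzero constant flux is incompatible with $p_s$ being a normalizable density on a bounded interval with $\mathcal{L}_m$ bounded; in higher dimensions it is the standard characterization of the (essentially unique) stationary measure of a gradient diffusion, which also supplies the uniqueness implicit in the word ``the'' in the conclusion.

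Finally, with $\mathbf{J}\equiv 0$ and $D_\xi$ constant, the condition reduces to the separable first‑order relation $D_\xi\,\nabla\log p_s=-\gamma\,\nabla\mathcal{L}_m[w]$. Integrating along paths in the connected set $\mathcal{W}$ yields $\log p_s(w)=-\gamma\,\mathcal{L}_m[w]/D_\xi+c$, hence $p_s(w)\propto e^{-\gamma\mathcal{L}_m[w]/D_\xi}$; the constant is fixed by $\int_{\mathcal{W}}p_s\,dw=1$, which is finite because $\mathcal{W}$ is bounded and $\mathcal{L}_m$ is bounded below on it. This is the asserted stationary distribution, and written in this form it makes the diffusion‑coefficient/inverse‑temperature identification used in the following corollary immediate.
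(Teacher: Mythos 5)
Your proposal is correct and follows essentially the same route as the paper's proof: the stationarity condition kills the Caputo derivative, the (approximately) constant $D_\xi$ reduces the problem to the classical constant-coefficient stationary Fokker--Planck equation, and the answer is the Boltzmann distribution. The only difference is that you explicitly justify the step the paper dispatches with ``readily obtained'' --- namely upgrading $\nabla\cdot\mathbf{J}=0$ to $\mathbf{J}\equiv 0$ via no-flux boundary conditions and the reversibility of a gradient-drift diffusion --- which is a welcome tightening rather than a departure.
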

\begin{proof}
First, by definition of the steady state we have $\mathcal{D}^{\alpha}_t\ p(w,t) = 0$ which reduces the fractional FPE to effectively the normal FPE, so we must solve the following PDE:
\begin{equation}
    0 = \nabla \cdot (D(w,t) \nabla p(w,t) - \gamma p(w,t)\nabla \mathcal{L}_m[w])
\end{equation}
Now under the assumption that for all $w_1, w_2 \in \mathcal{W}$ that $D(w_1) \approx D(w_2)$, then the long-term behavior of the diffusion coefficient at length scale $\xi$ can be approximated by the effective diffusion coefficient given in definition \ref{def:effect_diff}, giving
\begin{equation}
    0 = \nabla \cdot (D_\xi \nabla p(w,t) - \gamma p(w,t)\nabla \mathcal{L}_m[w])
\end{equation}
One can also see that the values of $D_\xi$ and $\mathcal{L}_m[w]$ are not dependent on $p$ (that is, the change in the probability of $w$ does not change the loss or geometric properties determining diffusion at $w$) meaning that the SGD-FFPE reduces to a linear partial differential at steady state solutions. The solution is then readily obtained by solving the normal Fokker-Planck equation, which is simply the Boltzmann distribution for the system giving $p_{s}(w) \propto e^{\frac{-\gamma\mathcal{L}_m[w]}{D_\xi}}$ as desired.
\end{proof}

\begin{corollary*}
Letting $\gamma = 1$ for simplicity, if $\mathcal{L}$ is the log-loss, then 
\begin{equation}
    p_{s}(w)^{mD_\xi} \propto p(X_m|w)
\end{equation}
so
\begin{equation}
    p(w|X_m) = \frac{\rho(w)p_{s}(w)^{mD_\xi} }{Z_{mD_\xi}}
\end{equation}
where $Z^{mD_\xi}$ is the partition function. and $\rho$ is the prior.
\end{corollary*}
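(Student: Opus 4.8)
The plan is to chain Lemma~\ref{res:stat_state} with the i.i.d.\ factorization of the likelihood and then apply Bayes' theorem, keeping careful track of where the normalizing constants end up. First I would invoke Lemma~\ref{res:stat_state} with $\gamma = 1$ on a region $\mathcal{W}$ on which $D_\xi$ is (approximately) constant, which gives the local steady state $p_{s}(w) \propto e^{-\mathcal{L}_m[w]/D_\xi}$ for $w \in \mathcal{W}$.

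Next I would rewrite the empirical log-loss in terms of the likelihood. Writing $X_m = \{x_1,\dots,x_m\}$ and using that $L(x,w) = -\log p(x\mid w)$ together with independence of the samples, $\mathcal{L}_m[w] = \frac{1}{m}\sum_{i=1}^{m}\bigl(-\log p(x_i\mid w)\bigr) = -\tfrac{1}{m}\log\prod_{i=1}^{m}p(x_i\mid w) = -\tfrac{1}{m}\log p(X_m\mid w)$. Substituting this into the steady state gives $p_{s}(w) \propto \exp\!\bigl(\tfrac{1}{mD_\xi}\log p(X_m\mid w)\bigr) = p(X_m\mid w)^{1/(mD_\xi)}$. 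Since $mD_\xi$ is a fixed positive real number on $\mathcal{W}$ (because $\lambda(w)$, and hence $D_\xi$, is constant there), raising both sides to the power $mD_\xi$ is legitimate and yields $p_{s}(w)^{mD_\xi}\propto p(X_m\mid w)$, which is the first claimed identity; note that exponentiating absorbs the hidden proportionality constant of the steady state into a new constant still independent of $w$.

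Finally I would apply Bayes' theorem, $p(w\mid X_m) = \rho(w)\,p(X_m\mid w)/p(X_m)$ for an arbitrary prior $\rho$, and substitute the previous display to obtain $p(w\mid X_m)\propto \rho(w)\,p_{s}(w)^{mD_\xi}$. Both the evidence $p(X_m)$ and the $w$-independent constant from the previous step are finite (the latter because $\mathcal{W}$ is compact and $\mathcal{L}_m$ is continuous, so $\int \rho(w)\,p_{s}(w)^{mD_\xi}\,dw<\infty$), so they can be collected into a single normalizer $Z_{mD_\xi}=\int \rho(w)\,p_{s}(w)^{mD_\xi}\,dw$, giving $p(w\mid X_m) = \rho(w)\,p_{s}(w)^{mD_\xi}/Z_{mD_\xi}$ as desired.

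There is no serious analytic obstacle here: the content is entirely in lining up the definitions of the empirical log-loss, the steady state from Lemma~\ref{res:stat_state}, and Bayes' rule. The only points requiring care are (i) the bookkeeping of proportionality constants under the exponentiation by $mD_\xi$, and (ii) remembering that the identities are local, i.e.\ asserted for $w\in\mathcal{W}$, consistent with Lemma~\ref{res:stat_state} describing the steady state only on that subset; a global statement would require patching together such identities across regions of differing $\lambda(w)$.
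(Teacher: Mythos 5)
Your proposal is correct and follows essentially the same route as the paper's own proof: invoke Lemma~\ref{res:stat_state}, identify $e^{-m\mathcal{L}_m[w]}$ with the likelihood $p(X_m\mid w)$ via the i.i.d.\ factorization, raise the steady state to the power $mD_\xi$, and finish with Bayes' theorem. If anything, you are more careful than the paper about tracking the proportionality constant through the exponentiation, and you omit only the paper's side remark about coarse-graining $w\mapsto B(w,\xi)$, which is not load-bearing for the chain of identities.
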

\begin{proof}
First, note that the empirical negative log loss is
\begin{equation}
    \mathcal{L}_m[w] = -\frac{1}{m}\sum^m_{i=1} p(y_i|x_i, w)
\end{equation}
This is a dimensionless quantity, however, we can consider the coarse-graining of the parameter space by some scale $\xi$ so that $w \mapsto B(w, \xi)$. By taking the appropriate choice of measurement scale $\xi$ (given some general regularity assumptions about the structure of the loss surface implicit in singular learning theory) we have that if $w_1, w_2 \in B(w, \xi)$ then $\mathcal{L}_m[w_1] \approx \mathcal{L}_m[w_2]$. Now consider that:
\begin{equation}
    e^{-m\mathcal{L}_m[w]} = \prod_{i=1}^m p(y_i|x_i, w)
\end{equation}
\begin{equation}
    = p(X_m|w)
\end{equation}
Now given the result of lemma \ref{res:stat_state} one gets $p_s(w) = \frac{e^{\frac{-\mathcal{L}_m[w]}{D_\xi}}}{Z_s}$ for partition function $Z_s$. We then get that
\begin{equation}
    (e^{\frac{-\mathcal{L}_m[w]}{D_\xi}})^{mD_\xi}
    = e^{-m\mathcal{L}_m[w]}
\end{equation}
\begin{equation}
    = p(X_m|w)
\end{equation}
Letting $Z_{mD_\xi}$ be the appropriate partition function, the result then follows from application of Bayes' theorem.
\end{proof}

\begin{lemma*}
Suppose the loss function $\mathcal{L}$ is non-convex and non-constant on $W$. Then with spectral dimension $d_s$ as $t \to \infty$ with fractal dimension $\lambda(w(t))$ on $\mathcal{W} \subset W$, the inequality $d_s \leq \lambda(w(t))$ holds (in the small learning rate regime).
\end{lemma*}
\begin{proof}
    Consider two points $w_1, w_2$ be two points visited in the long timescale regime at times $t_1, t_2$ separated by distance $R$. If we suppose that there exists a linear path connecting $w_1$ to $w_2$ along the manifold and we remove all other paths linking the two points we have diffusion on a linear structure. Now using the definition of the walk dimension, following the arc $A$ of this restricted structure gives $R_A(t) \propto t^\frac{1}{d_{\text{walk}}}$ but since this restricted structure has only a single path, it has walk dimension $d_\text{walk} = 2$. Now, suppose that this is true for any pair of points. Notice that this implies that all points are connected by a linear path at arbitrary distances along the loss manifold meaning the loss surface would have $d_{\text{walk}} = 2$. Furthermore this would imply that the loss does not change for any choice of parameter, violating the fact that it is non-constant so we must have $d_{\text{walk}}>2$. Now since $d_w = \frac{2\lambda(w)}{d_s}$ we have $d_s = \frac{2\lambda(w)}{d_{\text{walk}}}$ and clearly if $d_{\text{walk}}> 2$ then $\frac{2\lambda(w)}{d_{\text{walk}}} \leq \lambda(w)$ so $d_s \leq \lambda(w)$ 
\end{proof}
 %Experimentally we observe that the the system is sub-diffusive so we know that the walk dimension is $d_w\geq2$.
\begin{corollary*}
For time $t$ as $t \to \infty$, we have $d_s \leq \bar{\lambda}(w(t))$ where
    \begin{equation}
        \bar{\lambda}(w(t)) = \lim_{\tau \to \infty} \frac{1}{\tau} \int_0^\tau \lambda(w(t)) dt
    \end{equation}
\end{corollary*}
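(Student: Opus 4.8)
The plan is to deduce the corollary directly from Lemma~\ref{lem:spect_ineq} by a Ces\`aro (time-averaging) argument: once the pointwise bound $d_s \le \lambda(w(t))$ holds for all sufficiently large $t$, the same bound is inherited by the long-run time average, because the finite initial segment on which the pointwise bound may fail contributes nothing in the limit $\tau \to \infty$.

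Concretely, first I would invoke Lemma~\ref{lem:spect_ineq} to fix a time $T_0$ (depending on the trajectory and on how quickly the subdiffusive scaling settles into its asymptotic regime) such that $d_s \le \lambda(w(t))$ for every $t \ge T_0$; this is exactly the content of the ``$t\to\infty$'' clause in that lemma. Next I would split the averaging integral as
\[
\frac{1}{\tau}\int_0^\tau \lambda(w(t))\,dt = \frac{1}{\tau}\int_0^{T_0}\lambda(w(t))\,dt + \frac{1}{\tau}\int_{T_0}^{\tau}\lambda(w(t))\,dt .
\]
For the first term I would use that $W$ is compact and the (local) learning coefficient is bounded there --- in fact $0 \le \lambda \le \frac{|w|}{2}$ --- so $\int_0^{T_0}\lambda(w(t))\,dt$ is a finite constant and the first term is $O(1/\tau) \to 0$. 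For the second term I would bound the integrand below by $d_s$ using the choice of $T_0$, giving $\frac{1}{\tau}\int_{T_0}^\tau \lambda(w(t))\,dt \ge d_s\,\frac{\tau - T_0}{\tau} \to d_s$. Passing to the limit (taking $\liminf$ if one does not wish to presuppose a priori that the limit defining $\bar\lambda$ exists) yields $\bar\lambda(w(t)) \ge d_s$, which is the claim.

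Apart from this routine estimate, the points needing care are: (i) that $t \mapsto \lambda(w(t))$ is measurable and integrable along the trajectory, which follows from continuity of $\lambda$ on the compact $W$ together with the (piecewise) continuity of the gradient-flow path $w(t)$; and (ii) that the trajectory eventually remains in a region $\mathcal{W}$ on which Lemma~\ref{lem:spect_ineq} is valid, so that the pointwise inequality is genuinely available for all $t \ge T_0$ rather than only region-by-region. This is where the Near Stability Hypothesis~\ref{hypo:local_linearize} and the asymptotic nature of the spectral dimension do the real work, since $d_s = d_s^\infty$ is by construction the exponent governing the late-time, settled-in behavior of the whole tail of the trajectory. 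I expect step~(ii) --- making precise that the single asymptotic spectral dimension is the relevant quantity over the entire tail rather than a collection of locally varying $d_s(t_i)$ --- to be the main obstacle; everything else is the standard fact that Ces\`aro means inherit asymptotic lower bounds.
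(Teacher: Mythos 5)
Your proposal is correct and follows essentially the same route as the paper's own proof: fix a time $\tau_0$ after which the pointwise bound from Lemma \ref{lem:spect_ineq} holds, split the averaging integral at $\tau_0$, bound the finite initial segment by a constant using the upper bound $\lambda \leq \frac{d}{2}$, and observe that its contribution vanishes as $\tau \to \infty$ while the tail contributes at least $d_s$. Your additional remarks on measurability and on the validity of the single asymptotic $d_s$ over the whole tail are reasonable refinements, but the core Ces\`aro argument is identical to the paper's.
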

\begin{proof}
Let $\tau_0$ be the time such that for all $\tau > \tau_0$, the inequality of lemma \ref{lem:spect_ineq} holds. Consider then a time $T \gg \tau_0$ and consider the integral
\begin{equation}
\int_0^T \lambda(w(t)) dt = \int_0^{\tau_0} \lambda(w(t)) dt + \int_{\tau_0}^T \lambda(w(t)) dt
\end{equation}
and since $\tau_0$ is finite we can take the first portion of this integral to be a constant (since we know that the LLC is bounded above by $\frac{d}{2}$ where $d$ is the number of free parameters):
\begin{equation}
    \int_0^{\tau_0} \lambda(w(t)) dt = C
\end{equation}
By the result of lemma \ref{lem:spect_ineq} we have that for all times greater than $\tau_0$, we must have
\begin{equation}
    \int_{\tau_0}^T \lambda(w(t)) dt \geq \int_{\tau_0}^T d_s dt
\end{equation}
and since $d_s$ is constant
\begin{equation}
    \int_{\tau_0}^T \lambda(w(t)) dt \geq (T-\tau_0)d_s
\end{equation}
which means that by adding $C$ to both sides and dividing by $T$ we get
\begin{equation}
    \frac{1}{T}\int_0^T \lambda(w(t)) dt \geq \frac{(T-\tau_0)d_s}{T} + \frac{C}{T}
\end{equation}
From this we get
\begin{equation}
    \frac{1}{T}\int_0^T \lambda(w(t)) dt \geq d_s + \frac{(C-\tau_0)d_s}{T}
\end{equation}
where the term $\frac{(C-\tau_0)d_s}{T}$ vanishes as $T \to \infty$ since $C$ must be finite.
\end{proof}

\subsection{Extended Results}
Below we give an extra result which explains a domain where the theory can fail, as well as a discussion about complexities regarding how the diffusion process relaxes towards stationary states.

In the following result, consider the large-batch small learning rate regime. What happens if instead of the long runtime being sub-diffusive it has a small linear component?
\begin{proposition}
    Suppose that $R(t) \sim t^{\frac{1}{d_\text{walk}}}+ct$ for small constant $c$ then the FFPE becomes the normal FPE in the long runtime regime
    \begin{equation}
        \frac{\partial p(w,t)}{\partial t} = \nabla \cdot (D(w,t) \nabla p(w,t) - \gamma p(w,t)\nabla V(w))
\end{equation}
and has no stationary distribution.
\end{proposition}
\begin{proof}
    The proof is straightforward. Let $\tau_2$ be the crossover time where for all $t > \tau_2$, $ct > t^{\frac{1}{d_\text{walk}}}$ so on long timescales the small linear term dominates the sub-diffusive term, so $R(t)$ for $t > \tau_2$ can be effectively approximated as $R(t)\sim ct + R(\tau_2) + o(t^\frac{1}{d_\text{walk}})$ and as $t \to \infty$ the constant term $ct$ dominates. However, a stationary distribution must be independent of time. This cannot be the case however as at any point in time the diffusive process has non-trivial movement away from initialization so the distribution $p(w,t)$ spreads continuously over all timescales.
\end{proof}

The thing that causes the problem with linear diffusion is if the space is unbounded. If one bounds the space with a reflective boundary one can recover a stationary state but the dynamics become more complicated. One way to approach studying this system would be to assume that the process eventually reaches a global minima and that such minima form a connected submanifold. One could then consider certain directions on the manifold to be confining, and others to be free. Processes on this submanifold can be studied using tools like Morse-Bott theory.

\section{Homogenization}\label{app:homogen}
Ultimately the theory presented here relies on the process of homogenization, which is a well-known technique in the study of diffusion. We will give a basic informal overview here, but a full treatment can be found in \citep{homogenization}. We will then discuss how the method used for estimating the local learning coefficient in \citep{lau2024locallearningcoefficientsingularityaware} is related to homogenization.

Homogenization is a process used to understand diffusive processes where the underlying governing structure can have small but rapid variations on small scales. These fluctuations might matter for a diffusing particle on short length/time scales but they should effectively average out at some larger scale. A bit more formally, if we imagine something like a chemical concentration $c^\epsilon(x,t)$ which is diffusing according to the PDE
\begin{equation}
    \frac{\partial c^\epsilon}{\partial t} = \nabla \cdot (\mathcal{D}(\frac{x}{\epsilon}) \nabla c^\epsilon)
\end{equation}
where the diffusion $\mathcal{D}$ coefficient varies rapidly when $\epsilon \ll 1$. However, if $\mathcal{D}$ is bounded, then homogenization theory tells us that there is some other function $c^0$ given by $\epsilon \to 0$ such that there is some effective PDE:
\begin{equation}
        \frac{\partial c^0}{\partial t} = \nabla \cdot (\hat{\mathcal{D}}(\frac{x}{\epsilon}) \nabla c^0)
\end{equation}
where $\hat{\mathcal{D}}$ is an effective diffusion coefficient which only varies over a much larger scale. This is effectively taking the PDE and averaging out the fluctuations over a particular scale to get something that is easier to model. When performing a homogenization one normally picks a scale that they are ``averaging over". This scale can be picked somewhat arbitrarily but making the scale too large or too small can negatively impact how accurately one captures the dynamics of the system. If one takes the scale too small, homogenization is not effective. If one takes the scale too large, you start to ignore how the distribution of fluctuations can change in different areas of the media, leading to an inaccurate theory.

There is a sense in which the local learning coefficient estimation introduced in \citep{lau2024locallearningcoefficientsingularityaware} is related to homogenization. For a particular value $w^*$ in the parameter space (which is assumed to be a local minima) and a ball $B_\delta(w^*)$ of radius $\delta$ about $w^*$, they define the learning coefficient estimator as
\begin{equation}
    \hat{\lambda}(w^*, \delta) = m \beta [\mathbb{E}_{B_\delta(w^*)}[L_m(w)-L_m(w^*)]]
\end{equation}
where $w \in B_\delta(w^*)$ and $\beta = \frac{1}{\log m}$. The choice of $\delta$ is effectively the scale over which one is homogenizing, and the estimate of the LLC is akin to the average fluctuation over that area. This is also why when trying to accurately estimate the LLC it is recommended to not make $\delta$ too large.

\section{Role of the Fractional Derivative}\label{app:frac_deriv}
\subsection{The Fractional Derivative}
The \textit{Caputo fractional derivative}
\begin{equation}
    \mathcal{D}^\alpha_t f(t) = \frac{1}{\Gamma(1-\alpha)} \int_0^t \frac{f'(t)}{(t-\tau)^\alpha} d\tau
\end{equation}
is essentially like a derivative with memory of past derivatives, weighted by a power law decay in time controlled by $\alpha$. To see this, one can consider two extreme cases. First, taking $\alpha \to 0$ you get the total net change of $f(t) - f(0)$. Taking $\alpha \to 1$ you recover something more akin to the ``slope" between the time $t$ and the start time. $\alpha$ effectively controls how quickly you ignore the past.

If we want to see how it induces power law subdiffusion consider the linear function $f(t) = at + b$. Assuming $0 < \alpha < 1$, we find
\begin{equation}
    \mathcal{D}^\alpha_t f(t) = \frac{a}{\Gamma(2-\alpha)} t^{1-\alpha}
\end{equation}
So notice that as $\alpha \to 0$ the process becomes more linear, so $\alpha$ controls how ``sublinear" the process is.

\subsection{Fractal Dimension and Fractional Derivative}
The relationship between the fractal dimension and the fractional derivative operator has been a subject of investigation for nearly 3 decades, starting with \citep{fractals_fractional_calc}. The authors used numerical simulations to study the relationship between the fractional derivative and the fractal dimensions of particular curves, finding a linear relationship between the order of the fractional operator and the fractal derivative. Since then, extensive theoretical results have been proven for different types of special functions (see \citep{Liang2024} for an overview). It was proven in \citep{Songping2004ONTF} that there is a linear relationship between the Minkowski–Bouligand dimension of the Weierstrass function and the Minkowski–Bouligand dimension of its corresponding fractional calculus. We hypothesize that the fractional derivative in the FFPE for SGD accounts for the change in $\lambda(w)$ as one moves through the parameter space. More concretely:
\begin{hypothesis}[Shared Slopes]
Let $\alpha(t)$ be the fractional derivative exponent at time $t$. The value of $\alpha(t) \propto \frac{d \lambda(w_t)}{dt}$.
\end{hypothesis}
Since $\alpha(t)$ is effectively a local property of a point that is related to the derivative about the point, this should be unsurprising as the learning coefficient directly describes degenerate directions of the space. 

%\subsection{Relationship with Developmental Stages}\label{sec:dev_stages}
%The concept of stage boundaries and developmental stages was introduced in \citet{wang2024loss}. In our experiments, for most hyperparameter settings the LLC changes proportionately with the weights. However, one can select hyperparameters that induce sudden jumps in the weight displacement that then have a corresponding inverse jump in the LLC. Furthermore we find that these sorts of jumps are more common with adaptive optimizers like AdamW.

\begin{figure}[H]
\centering
\begin{subfigure}[b]{1\textwidth}
\centering
\includegraphics[width=12cm,height=12cm, keepaspectratio]{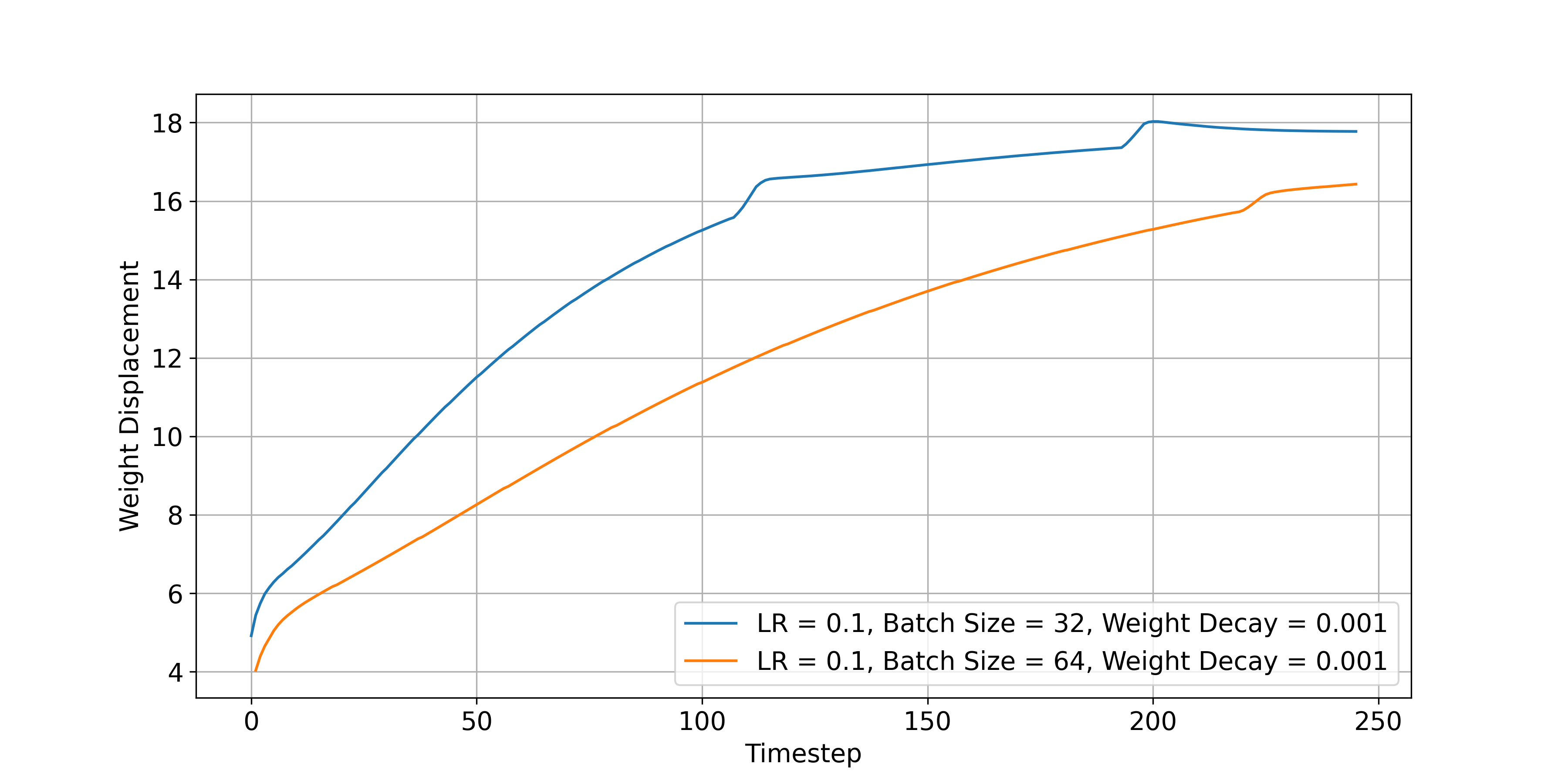}
\caption{Jump in Weights with SGD Optimizer}
\end{subfigure}
\begin{subfigure}[b]{1\textwidth}
\centering
\includegraphics[width=12cm,height=12cm, keepaspectratio]{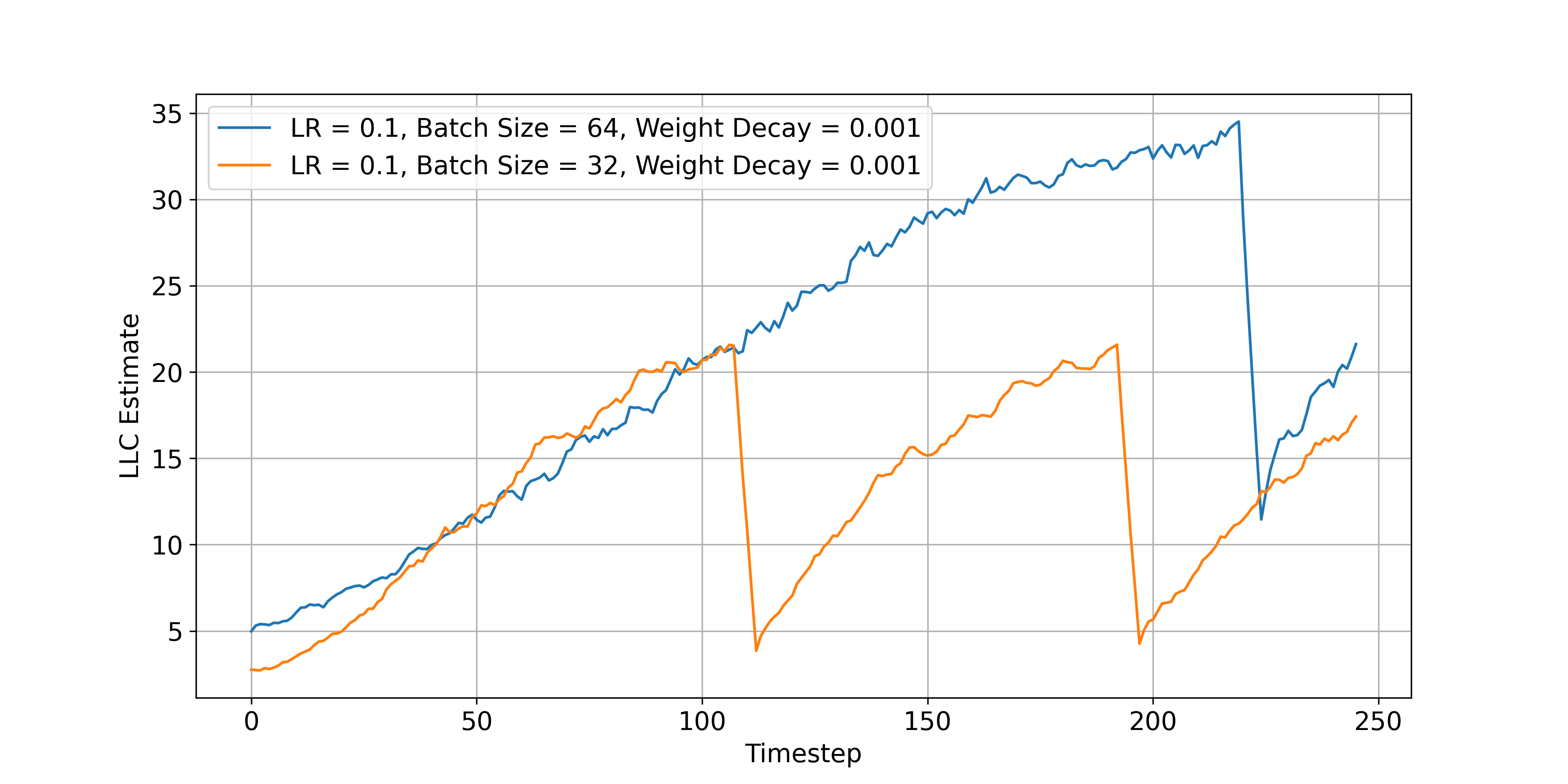}
\caption{Drop in LLC}
\label{fig:LLC_jumps}
\end{subfigure}
\caption{The corresponding changes in weight vs. the LLC.}
\end{figure}

As mentioned in section \ref{sec:fract_dynamic} we see more complex dynamics when we operate in the grokking regime. Experimentally we see (figure \ref{fig:LLC_jumps}) that the appropriate choice of hyperparameters result in sudden large jumps in weight space (and the LLC) when the batch size is sufficiently small. The general sub-diffusive behaviour of these systems is captured by the fractional derivative in time $\mathcal{D}^{\alpha}_t$. However, the large jumps indicate the need for a fractional derivative in space to fully account for grokking behavior. This could be done by introducing a fractional Laplacian operator to equation \ref{FFPE} however we don't explore this analytically here. We do note however that the introduction of the space fractional derivative is effectively the same as a Levy noise Langevin equation.

\begin{figure}[H]
\centering
\includegraphics[width=12cm,height=12cm, keepaspectratio]{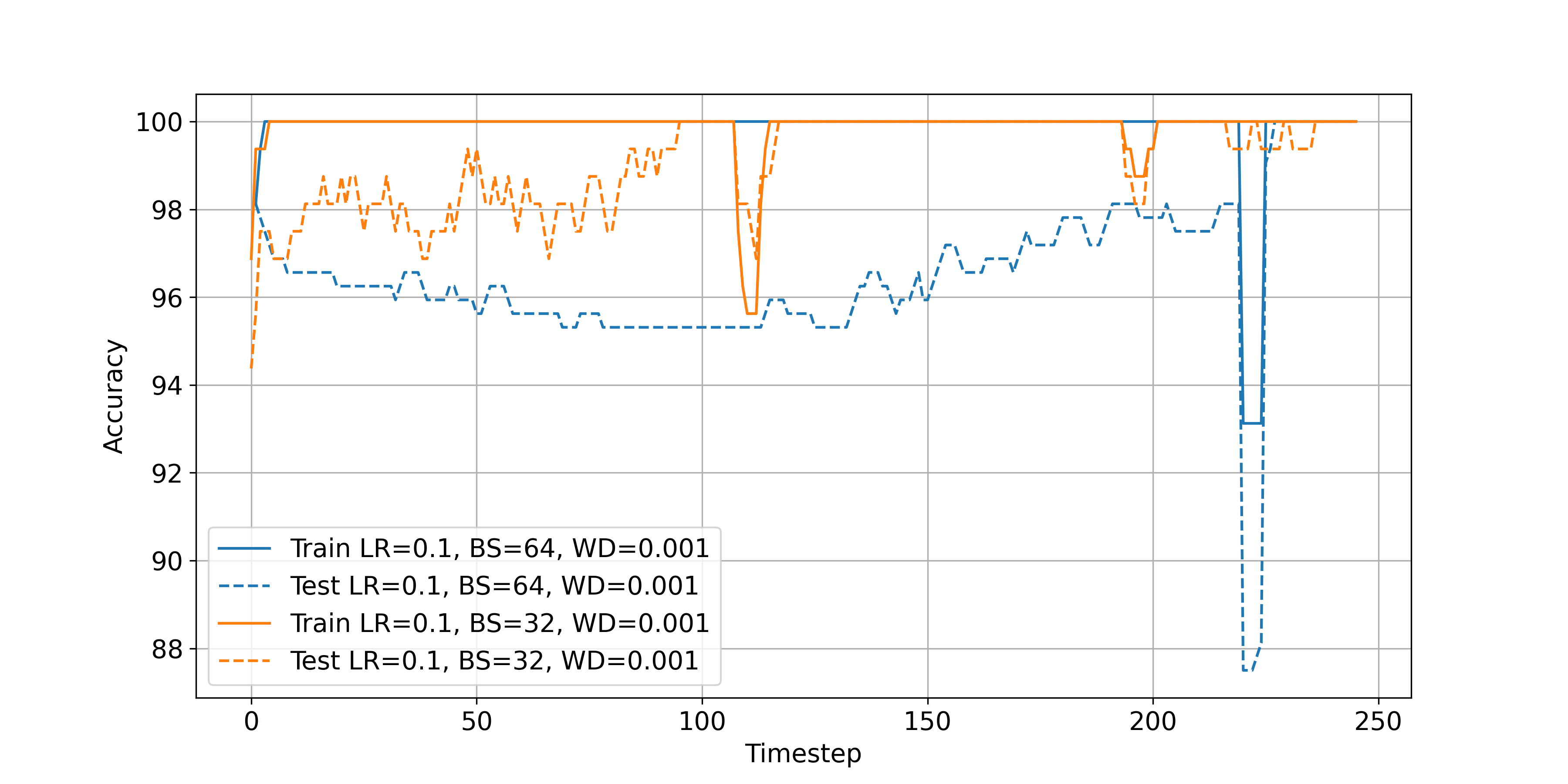}
\caption{Train/test accuracy over time. Notice that sudden jumps in the accuracy correspond to sudden jumps in the weights and the LLC.}
\label{fig:sgd_jump}
\end{figure}

We believe that this is evidence that the concept of stage boundaries and developmental stages introduced in \citep{wang2024loss} is seemingly a very natural way to discuss stages of learning. They suggest delineating phases of learning by critical points in the (noise mitigated) LLC evolution curve. Our experiments indicate that the rate of change of the local learning coefficient should roughly capture the impact of the time and space fractional derivatives. Discontinuities (or very sharp changes) seemingly account for the action of the spatial fractional derivative, while more stable changes seemingly relate to actions of the time fractional derivative.

\section{Experiment Details}\label{app:exp_details}

For all experiments, we use the following configuration for the learning coefficient estimation:
\begin{table}[h!]\label{app:params}
\caption{Hyperparameters for LLC Estimation}
\centering
\begin{tabular}{|l|l|}
\hline
\textbf{Hyperparameter}       & \textbf{Value}         \\ \hline
\texttt{optimizer\_lr}        & 1e-5                   \\ \hline
\texttt{optimizer\_localization} & 100.0              \\ \hline
\texttt{sampling\_method}     & SGLD                   \\ \hline
\texttt{num\_chains}          & 1                      \\ \hline
\texttt{num\_draws}           & 400                    \\ \hline
\texttt{num\_burnin\_steps}   & 0                      \\ \hline
\texttt{num\_steps\_bw\_draws} & 1                      \\ \hline
\end{tabular}

\label{tab:hyperparams}
\end{table}

\subsection{MNIST}
To investigate our theory using the MNIST dataset, we take a subset of 10000 images, and create a 50/50 train-test split. We then conduct two different sets of experiments. The first set of experiments are ran on 50 identical models with different random initializations, each trained for 100 epochs with batch size 256 and a learning rate of 0.001. For the other set of experiments we run against a set of 18 different architectures which vary in depth and layer widths, training these for 250 epochs but with the other parameters fixed (a table of architectures is provided in appendix \ref{app:params}). We found for our purposes that it is sufficient to use a basic set of hyperparameters for the estimator (appendix \ref{app:params}). We compute the LLC every 100 steps, as well as log the displacement of the network from its initial position. We then take the final LLC to be the average over the last 10 estimates. We also performed extensive ablation experiments over optimizers and parameters on MNIST, whose details can be found in appendix \ref{app:ablations}.

\subsection{Tiny ImageNet}
Tiny ImageNet \citep{Le2015TinyIV} is a subset of $200$ classes of the full ImageNet dataset, which have been down-sampled to $64 \times 64$ pixels. In our experiments we rescale the images back to $224 \times 224$ as well as apply the standard Imagenet normalization. We conduct experiments on the following models pretrained on ImageNet:
\begin{itemize}
    \item \textit{ResNet18} \citep{he2015deepresiduallearningimage}
    \item \textit{ResNet34} \citep{he2015deepresiduallearningimage}
    \item \textit{VGG11} \citep{simonyan2015deepconvolutionalnetworkslargescale}
\end{itemize}
To conduct our experiments we finetune the above mentioned models after replacing the original output layer with an output layer with $200$ neurons. In order to train this network we follow general fine-tuning practices. That is, we freeze the original weights, and fit the new classification head using the Adam optimizer for a maximum 20 epochs with a learning rate of $0.001$ and a weight decay of $0.0001$, with a batch size of $128$. If the loss does not decrease more than $0.0001$ over 3 epochs, we stop training, switching to SGD with $0$ weight decay and a learning rate of $0.00001$ for $2000$ steps with batch size $128$. We note here that our vision experiments are conducted slightly differently than the language model or MNIST experiments. This was done to test the theory on the ``fine-tuning" stage of model development.

\subsection{TinyStories}
The TinyStories dataset \citep{eldan2023tinystoriessmalllanguagemodels} was selected as it allows us to explicitly test our theory on late stage training without training models from scratch, but where there are multiple reasonably sized pretrained models which we can compute the LLC for multiple times throughout training. In particular we run experiments on the following models from HuggingFace trained on TinyStories: 
\begin{itemize}
    \item \textit{roneneldan/TinyStories-1M} \citep{huggingfaceRoneneldanTinyStories1MHugging}
    \item \textit{nickypro/tinyllama-15M} \citep{huggingfaceNickyprotinyllama15MHugging}
    \item \textit{roneneldan/TinyStories-33M} \citep{huggingfaceRoneneldanTinyStories33MHugging}
\end{itemize}
Each of these models are trained for 1000 steps with a batch size of $16$, with a learning rate of $0.00001$, with the LLC computed every 100 steps.

\subsection{Posterior Concentration}
To test the posterior concentration predictions we use a simple toy model and dataset where one can reasonably approximate the Bayesian posterior. We use the moons dataset \citep{scikit-learn} with 512 samples, a noise ratio of 0.2, and a batch size of 128. Using this we train a 2 hidden layer ReLU network where each hidden layer has 64 neurons. Each model is trained using SGD with a learning rate of 0.01 for a total of 200 epochs. We do this for 500 random initializations of the network on the same dataset. At the end of training for each model, we compute the LLC, discarding any non-converged training runs. Since our theory is largely about the local posterior, we take the solutions found by SGD and use these to seed SGLD. In particular, since the Bayesian posterior will concentrate around the model with the lowest loss and lowest learning coefficient, we use the SGD samples which have the lowest loss and the lowest LLC. For each run of SGLD we take draw 1000 samples with 200 burn in steps, and 10 steps per sample with a learning rate of 0.00001.

\section{MNIST Ablations}\label{app:ablations}
Ablations were ran across optimization various hyperparameters for a fully connected network trained on MNIST to better understand the effects hyperparameter choices have on the diffusion characteristics. Experiments are ran for both SGD and Adam to test if the theory is effective for adaptive optimizers. We highlight some of these experiments here.

\begin{table}[h!]
\centering
\begin{tabular}{lccccc}
\hline
\textbf{Optimizer} & $\mathbf{d_s}$ (mean) & $\mathbf{d_s}$ (std) & $\lambda_{\text{final}}$ (mean) & $\lambda_{\text{final}}$ (std) & Test Acc (mean) \\
\hline
adam & 0.4061 & 0.9068 & 3.0957 & 5.7533 & 90.4297 \\
sgd  & 7.8165 & 10.2494 & 12.5270 & 11.8393 & 94.0592 \\
\hline
\end{tabular}
\end{table}

\begin{figure}[h]
\centering
\begin{subfigure}{.5\textwidth}
  \centering
  \includegraphics[width=6cm,height=6cm, keepaspectratio]{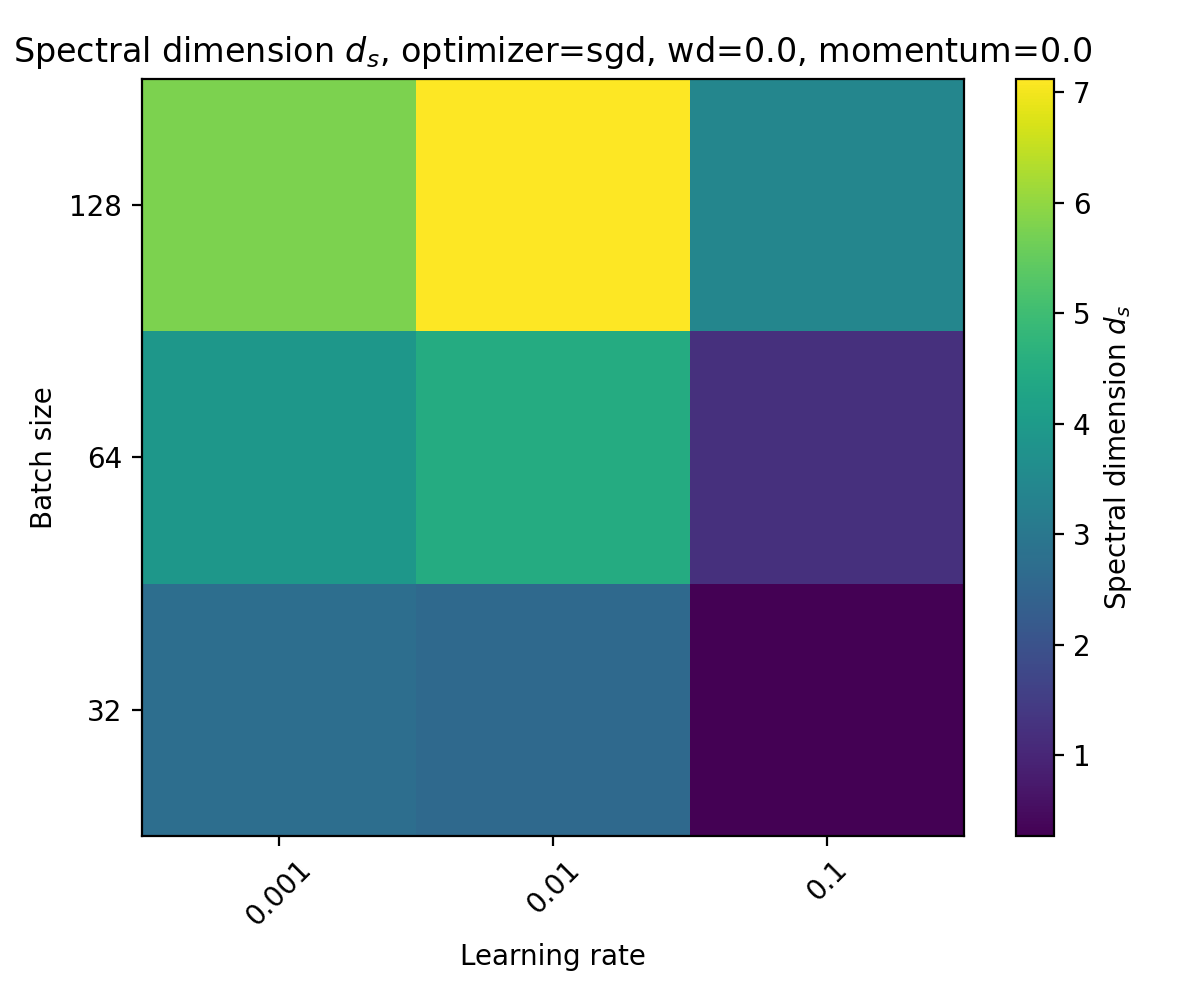}
  \caption{Spectral dimension}
\end{subfigure}%
\begin{subfigure}{.5\textwidth}
  \centering
  \includegraphics[width=6cm,height=6cm, keepaspectratio]{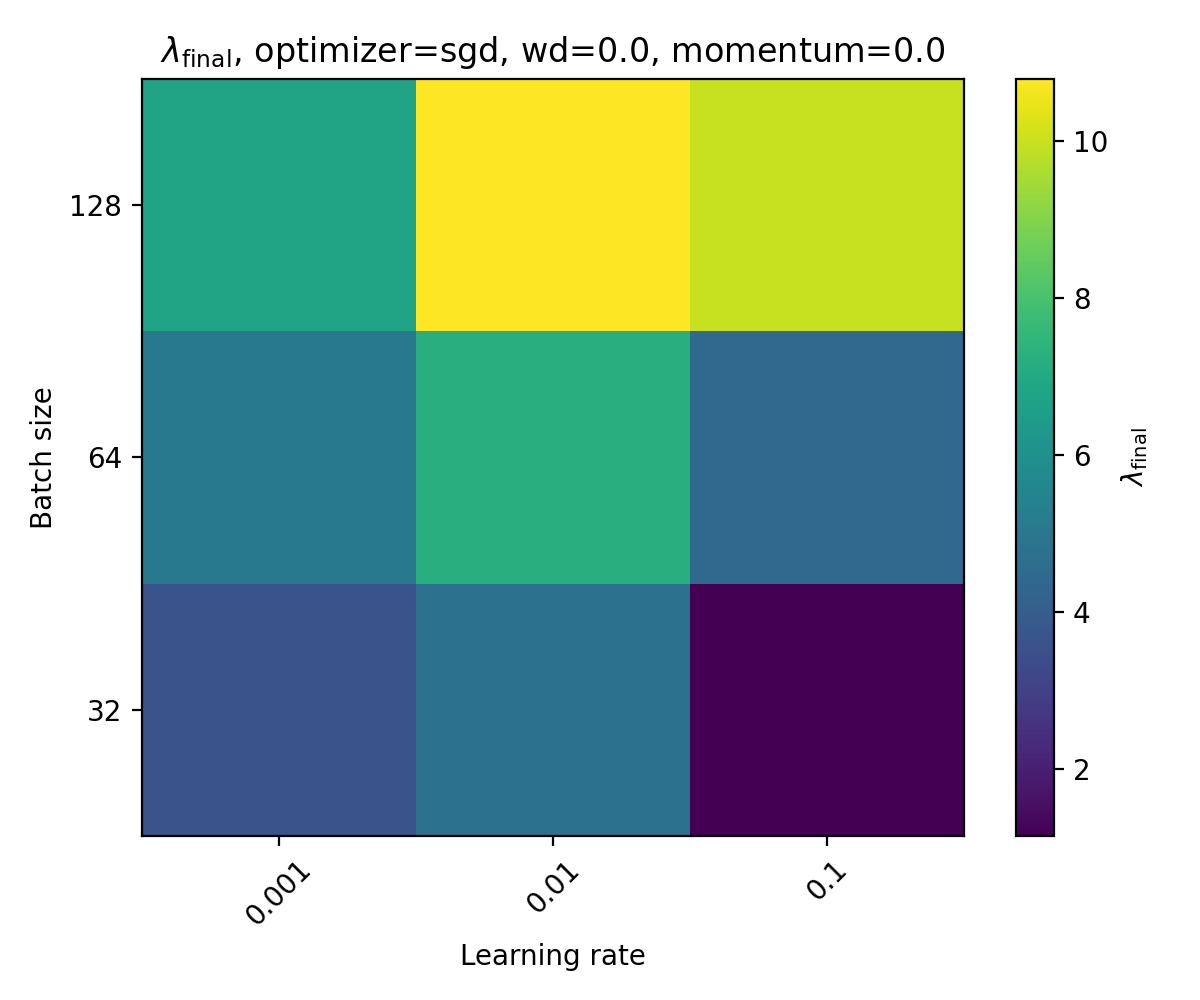}
  \caption{Learning coefficient (final)}
\end{subfigure}
\caption{Learning coefficient and spectral dimension for vanilla SGD with varying batch size and learning rate.}
\label{fig:sgd_heatmaps}
\end{figure}

\begin{figure}[H]
\centering
\includegraphics[width=6cm,height=6cm, keepaspectratio]{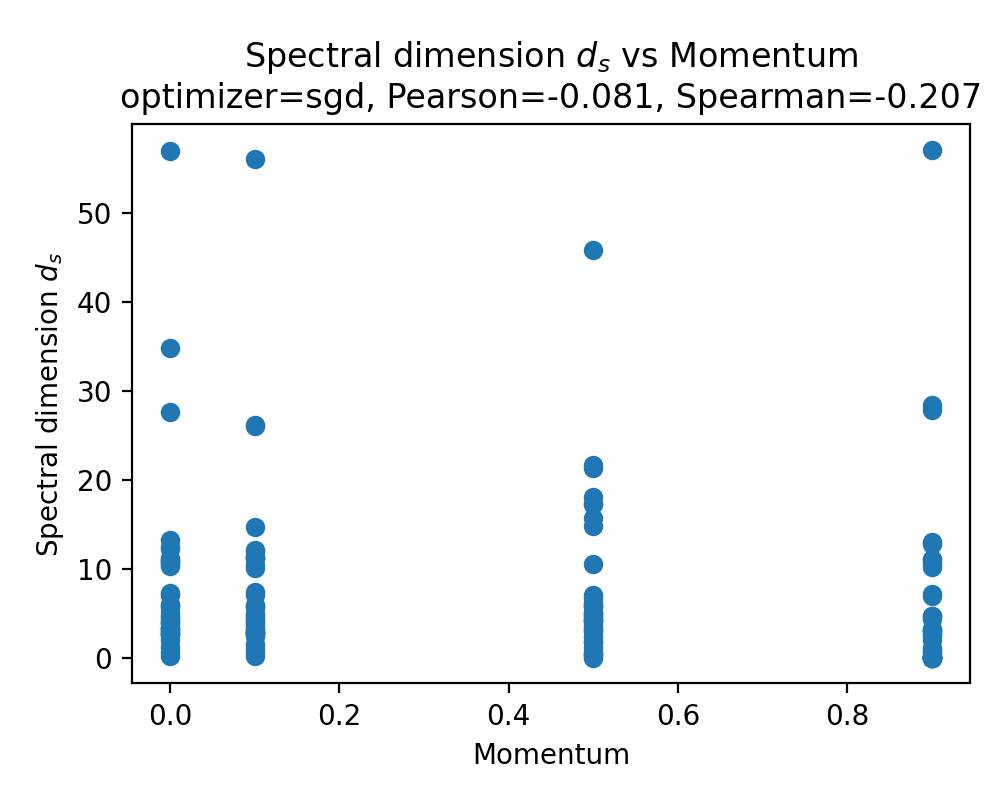}
\caption{Correlation between momentum and spectral dimension.}
\label{fig:spect_momentum}
\end{figure}

\begin{figure}[h]
\centering
\begin{subfigure}{.5\textwidth}
  \centering
  \includegraphics[width=6cm,height=6cm, keepaspectratio]{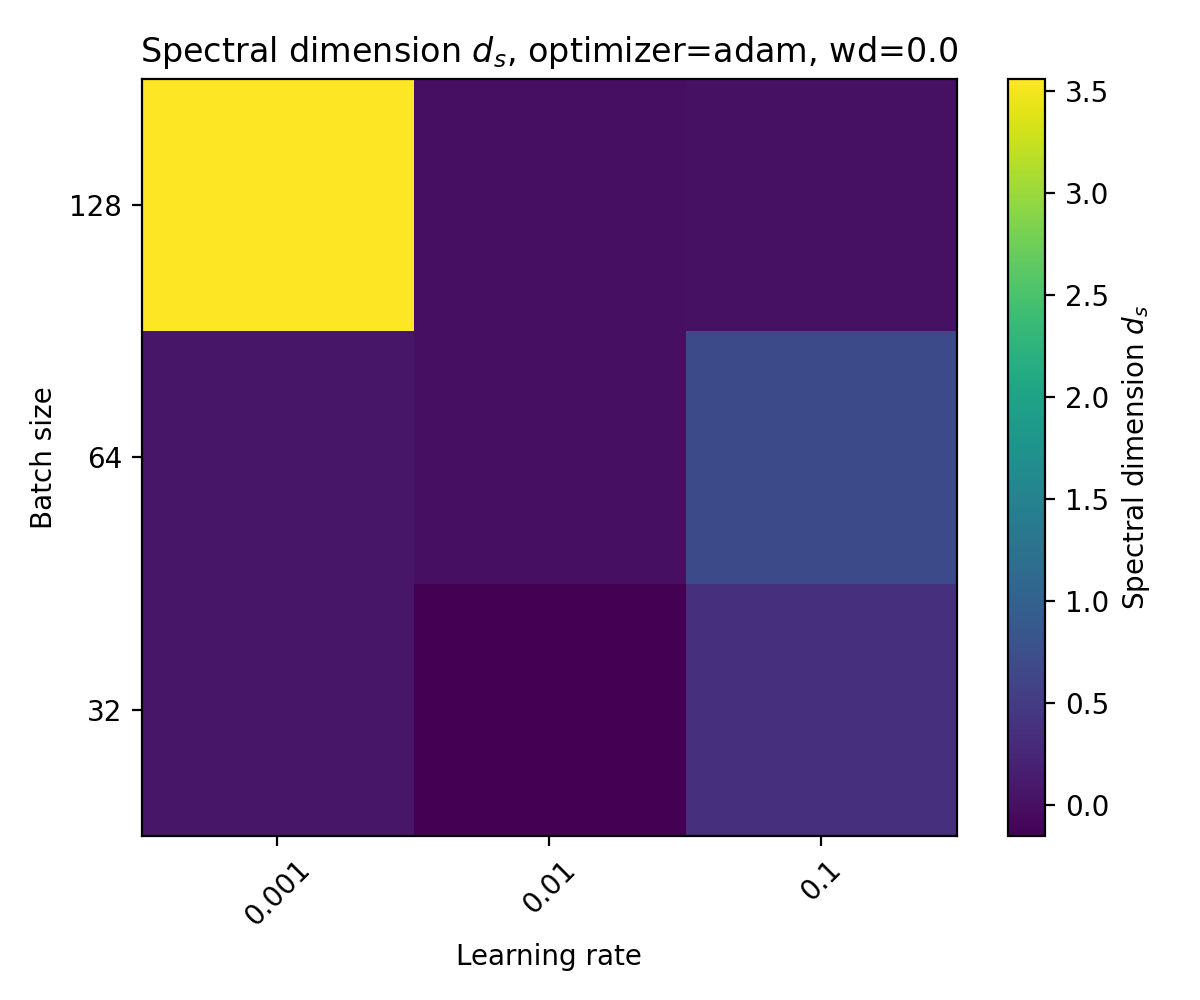}
  \caption{Spectral dimension}
\end{subfigure}%
\begin{subfigure}{.5\textwidth}
  \centering
  \includegraphics[width=6cm,height=6cm, keepaspectratio]{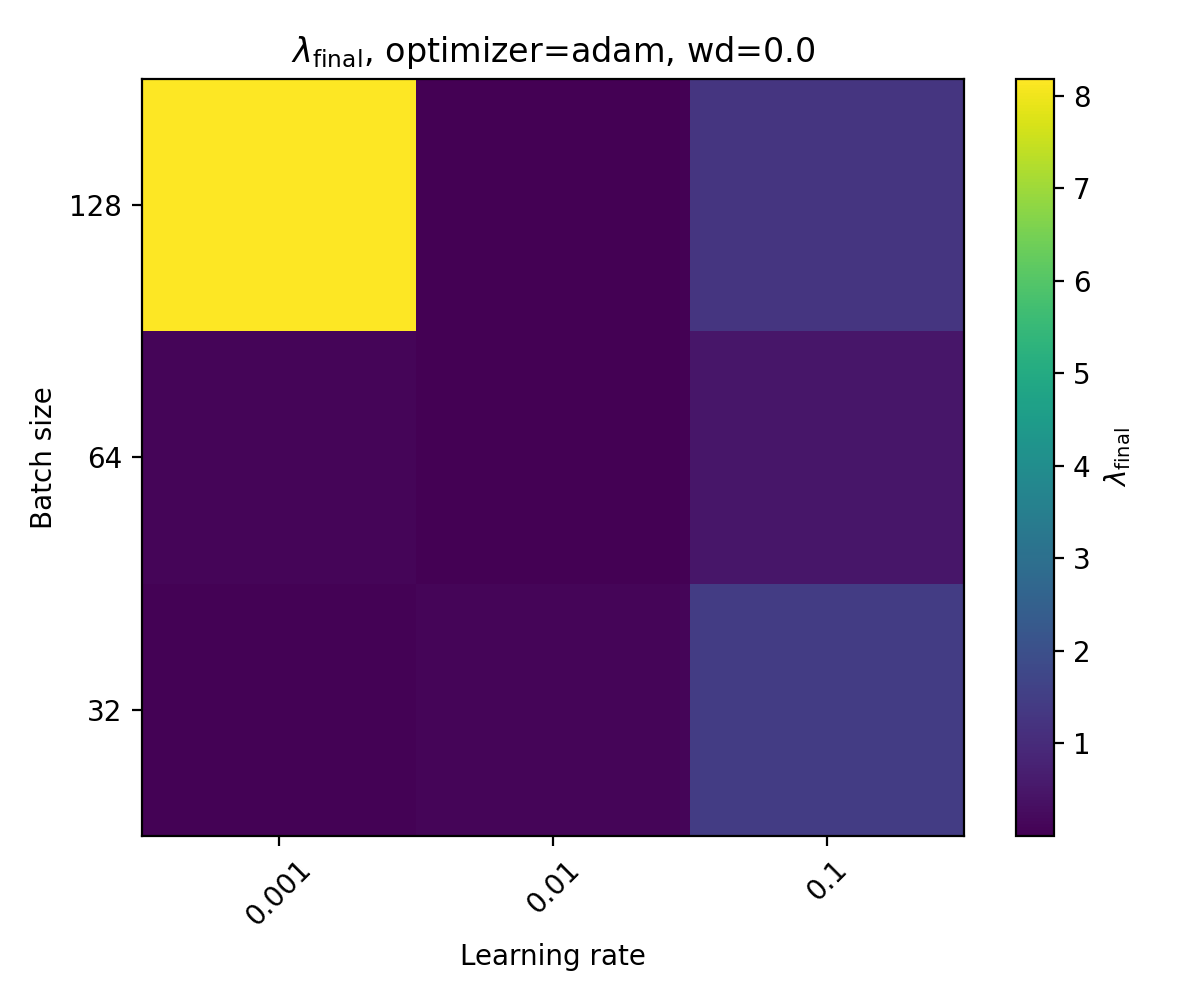}
  \caption{Learning coefficient (final)}
\end{subfigure}
\caption{Learning coefficient and spectral dimension for Adam with varying batch size and learning rate.}
\label{fig:adam_heatmaps}
\end{figure}

\begin{figure}[h]
\centering
\begin{subfigure}{.5\textwidth}
  \centering
  \includegraphics[width=6cm,height=6cm, keepaspectratio]{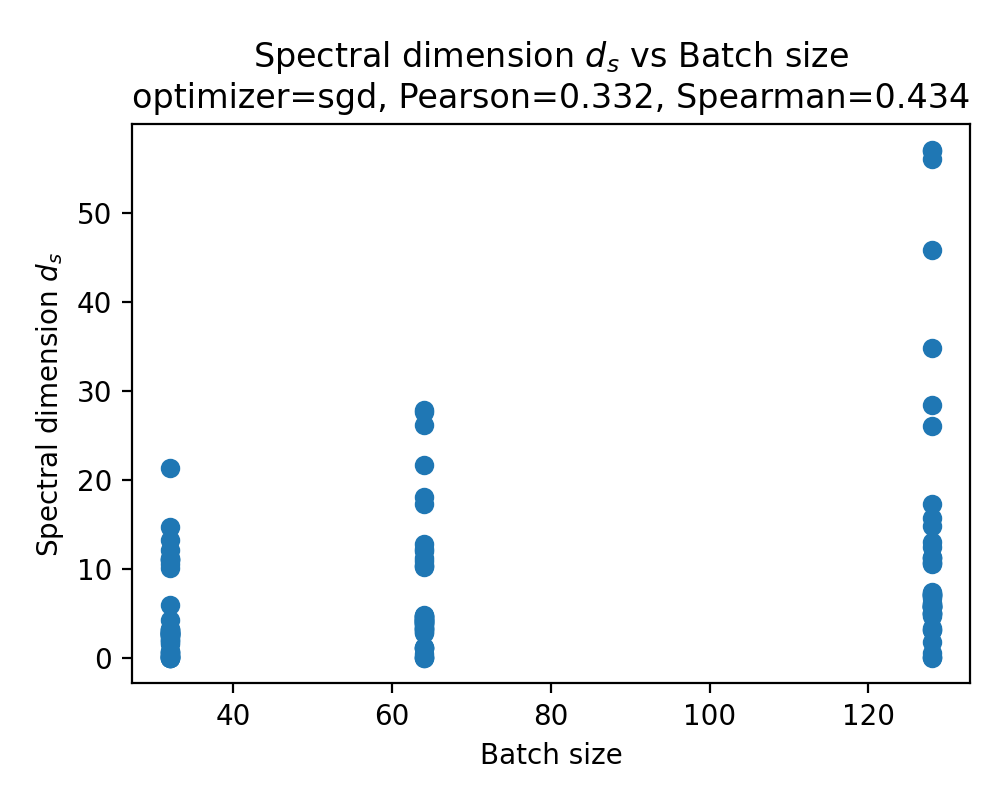}
  \caption{Spectral dimension}
\end{subfigure}%
\begin{subfigure}{.5\textwidth}
  \centering
  \includegraphics[width=6cm,height=6cm, keepaspectratio]{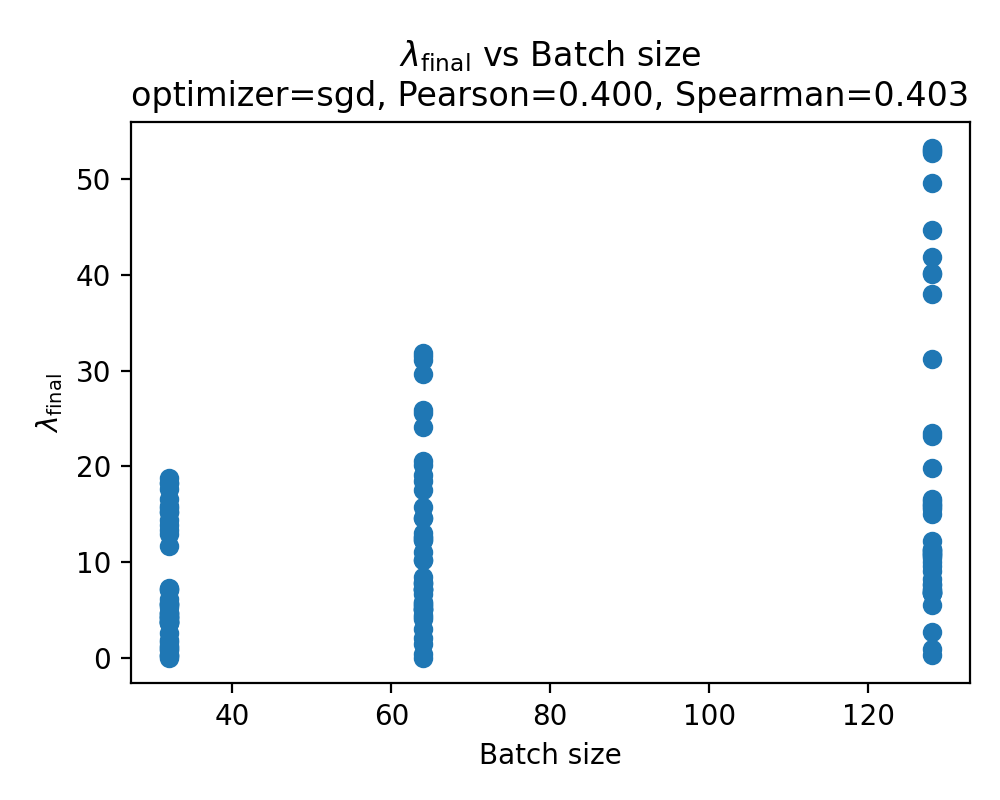}
  \caption{Learning coefficient (final)}
\end{subfigure}
\caption{ Correlation of Learning coefficient and spectral dimension for with batch size for SGD}
\label{fig:sgd_bs_correlations}
\end{figure}

\begin{figure}[h]
\centering
\begin{subfigure}{.5\textwidth}
  \centering
  \includegraphics[width=6cm,height=6cm, keepaspectratio]{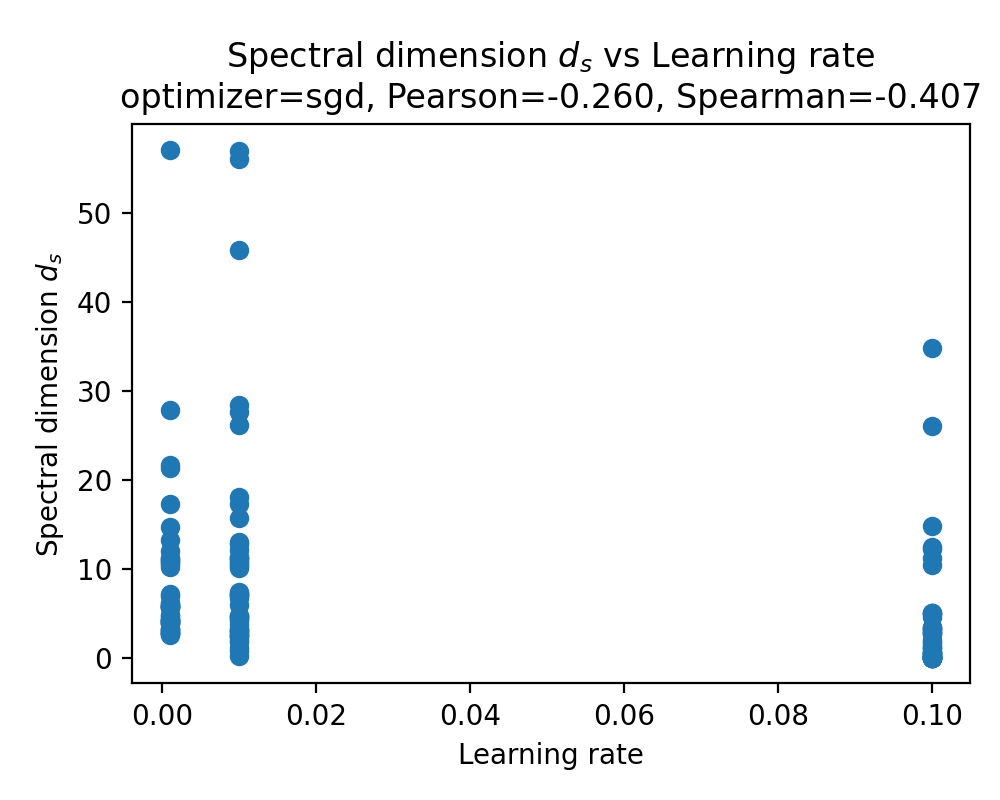}
  \caption{Spectral dimension}
\end{subfigure}%
\begin{subfigure}{.5\textwidth}
  \centering
  \includegraphics[width=6cm,height=6cm, keepaspectratio]{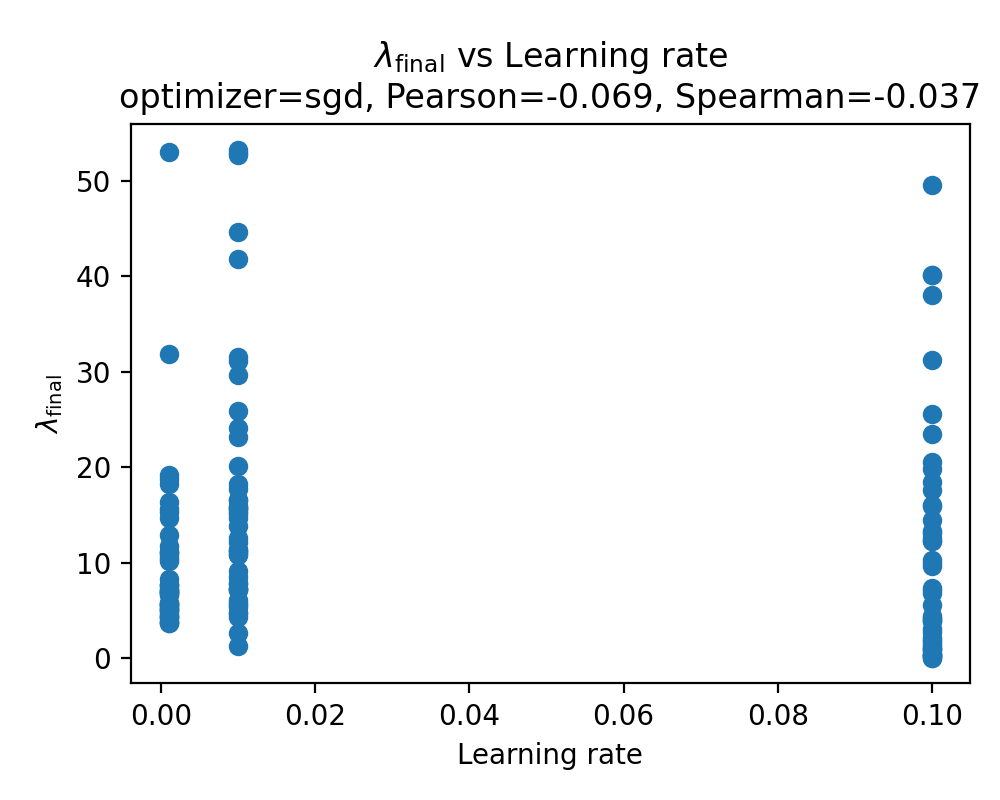}
  \caption{Learning coefficient (final)}
\end{subfigure}
\caption{ Correlation of Learning coefficient and spectral dimension with learning rate for SGD}
\label{fig:sgd_lr_correlations}
\end{figure}

We note that interestingly when using Adam, the spectral dimension seems to have a stronger correlation with performance than the learning coefficient as can be seen in figures \ref{fig:ds_acc_correlations} and \ref{fig:llc_acc_correlations}. Another interesting phenomena that supports our theory is that there is relatively little correlation between $\lambda$ and the learning rate, but there is relatively substantial correlation between the spectral dimension $d_s$ and the learning rate (figure \ref{fig:sgd_lr_correlations}) which aligns well with the theory. We note that the correlation between $\lambda$ and the batch size reflects the sensitivity of the empirical LLC estimator of \citep{lau2024locallearningcoefficientsingularityaware} to the batch size.

\begin{figure}[h]
\centering
\begin{subfigure}{.5\textwidth}
  \centering
  \includegraphics[width=6cm,height=6cm, keepaspectratio]{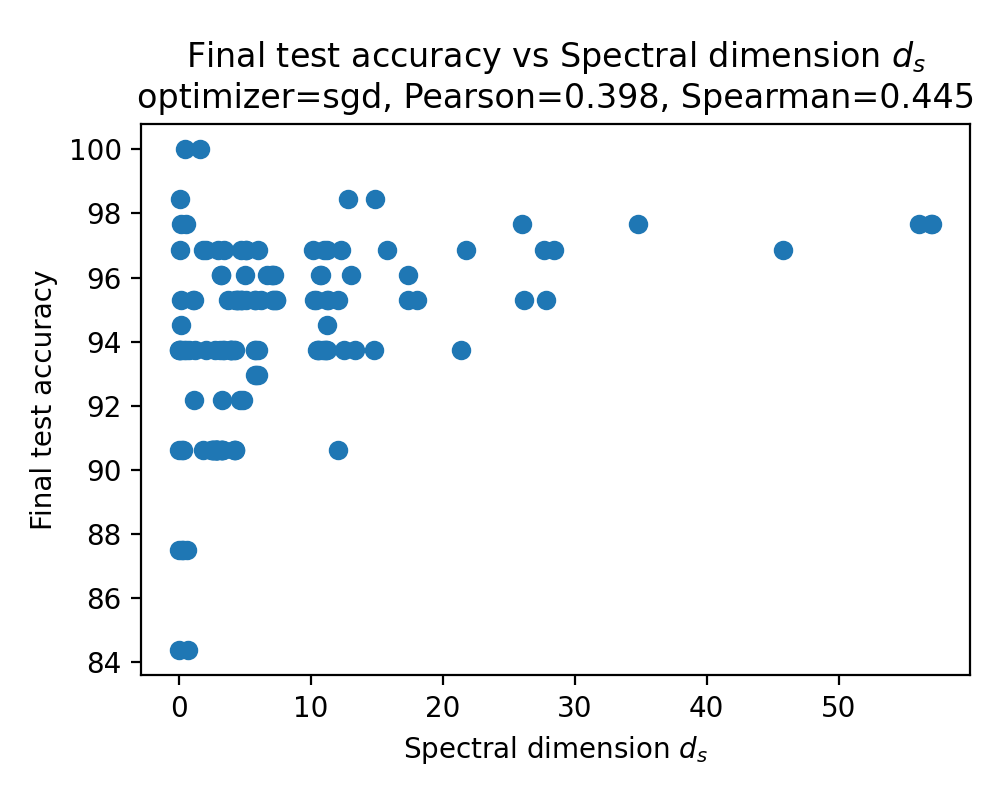}
  \caption{SGD test accuracy}
\end{subfigure}%
\begin{subfigure}{.5\textwidth}
  \centering
  \includegraphics[width=6cm,height=6cm, keepaspectratio]{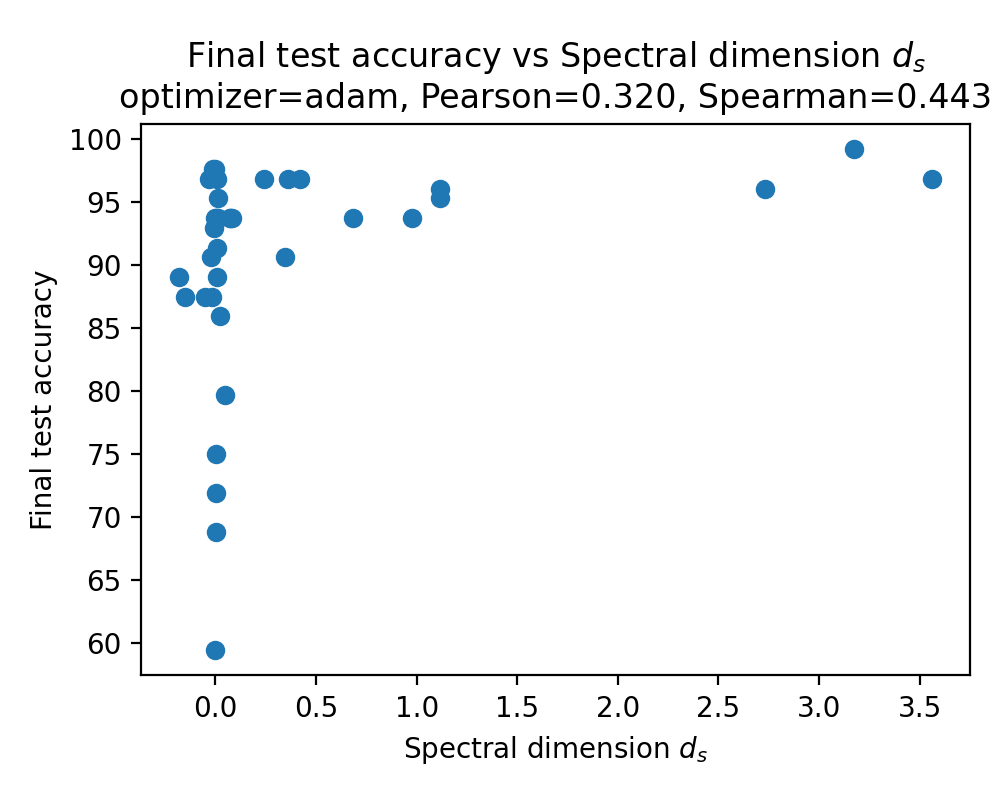}
  \caption{Adam test accuracy}
\end{subfigure}
\caption{ Correlation of spectral dimension with test accuracy for Adam and SGD.}
\label{fig:ds_acc_correlations}
\end{figure}

\begin{figure}[h]
\centering
\begin{subfigure}{.5\textwidth}
  \centering
  \includegraphics[width=6cm,height=6cm, keepaspectratio]{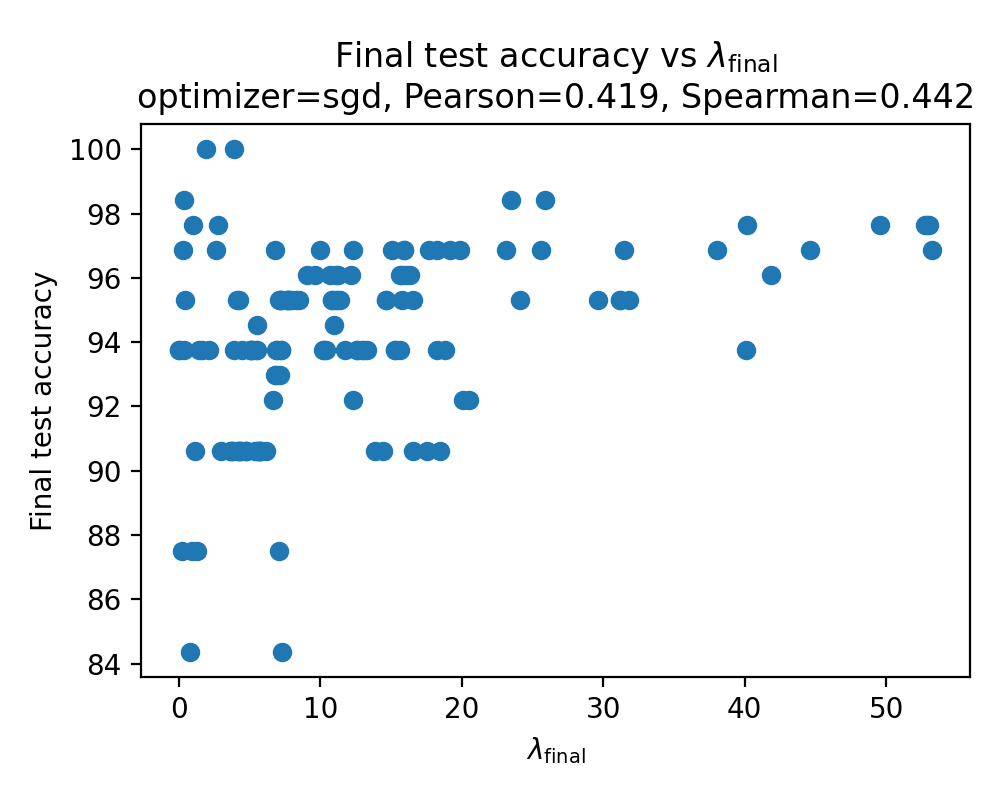}
  \caption{SGD test accuracy}
\end{subfigure}%
\begin{subfigure}{.5\textwidth}
  \centering
  \includegraphics[width=6cm,height=6cm, keepaspectratio]{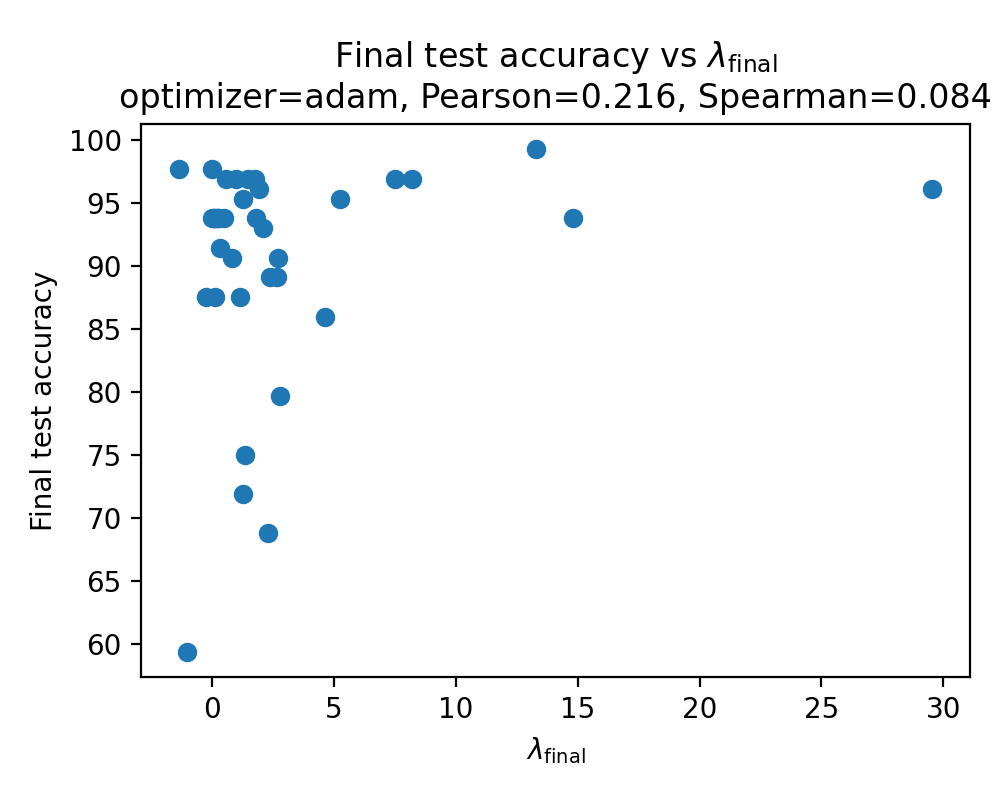}
  \caption{Adam test accuracy}
\end{subfigure}
\caption{ Correlation of the final learning coefficient with test accuracy for Adam and SGD.}
\label{fig:llc_acc_correlations}
\end{figure}

\begin{figure}[h]
\centering
\begin{subfigure}{.5\textwidth}
  \centering
  \includegraphics[width=6cm,height=6cm, keepaspectratio]{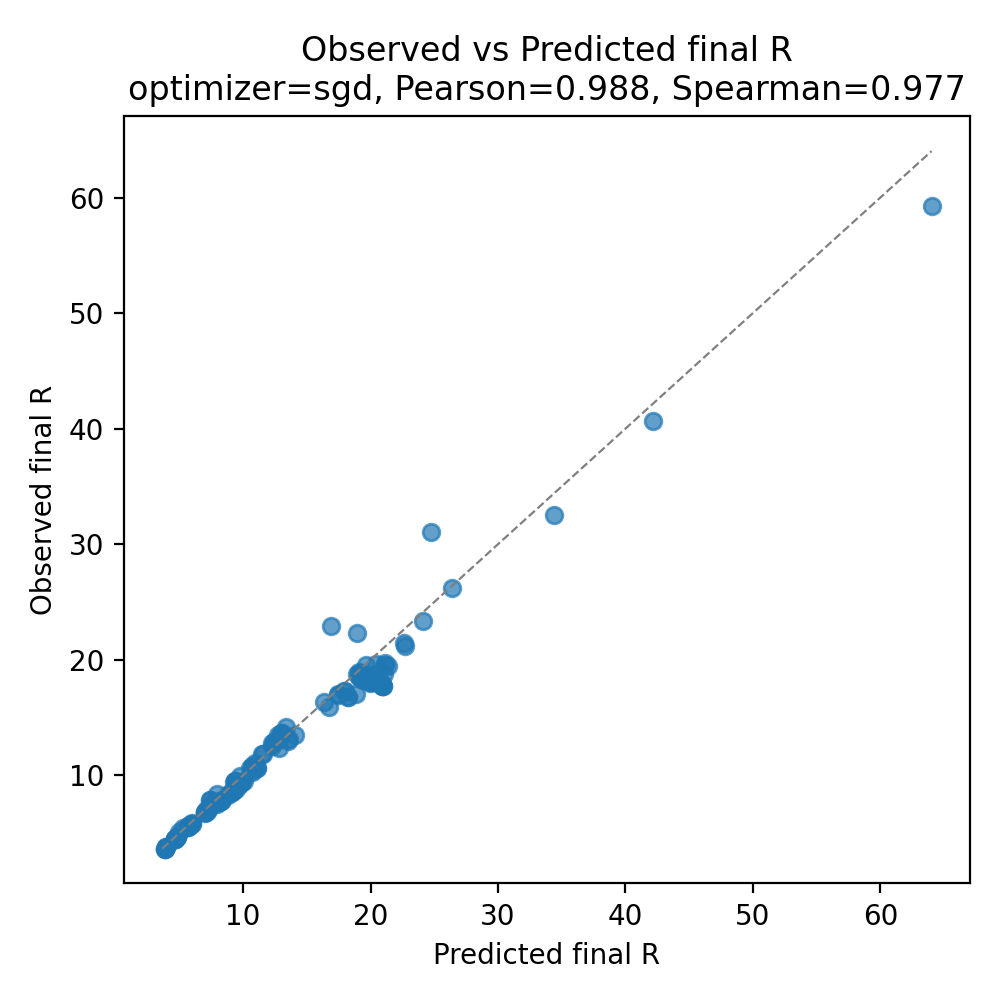}
  \caption{Predicted vs. true displacement for SGD}
\end{subfigure}%
\begin{subfigure}{.5\textwidth}
  \centering
  \includegraphics[width=6cm,height=6cm, keepaspectratio]{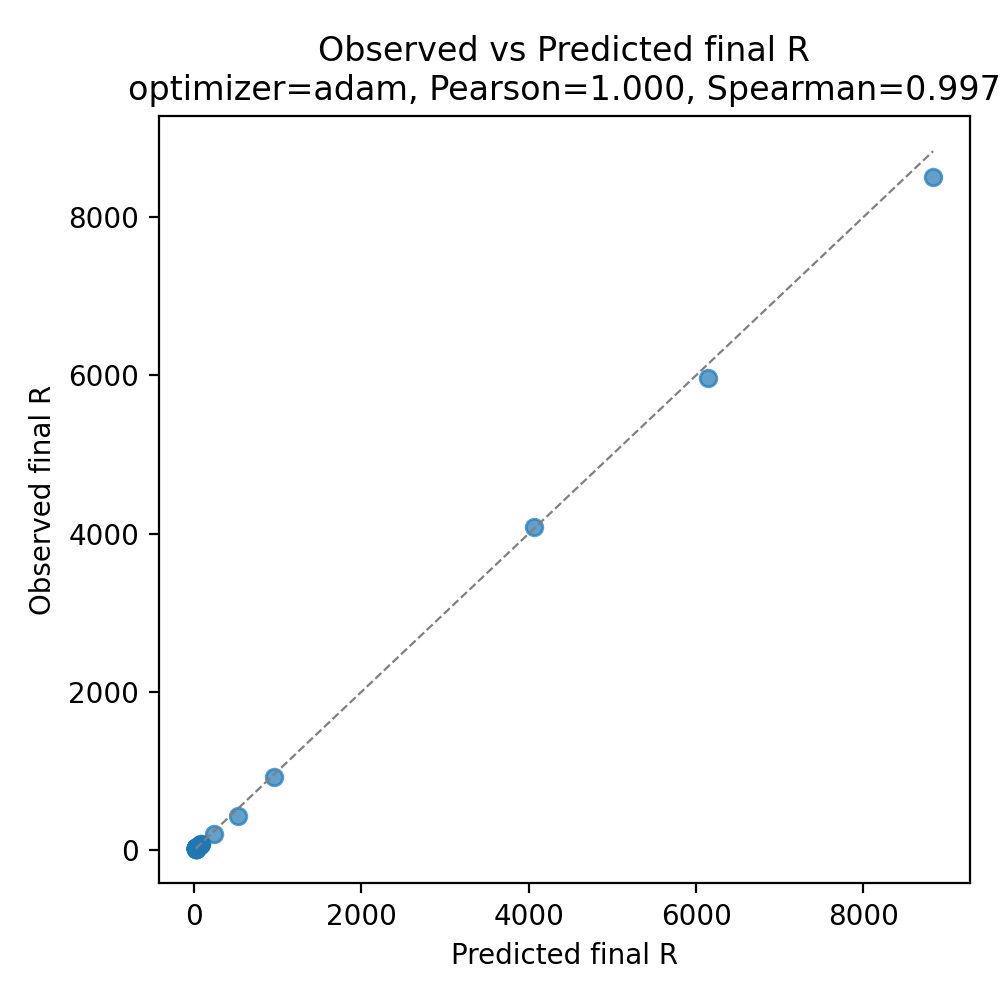}
  \caption{Predicted vs. true displacement for Adam}
\end{subfigure}
\caption{Predicted vs. true displacements.}
\label{fig:pred_v_true}
\end{figure}

\begin{figure}[H]
\centering
\includegraphics[width=6cm,height=6cm, keepaspectratio]{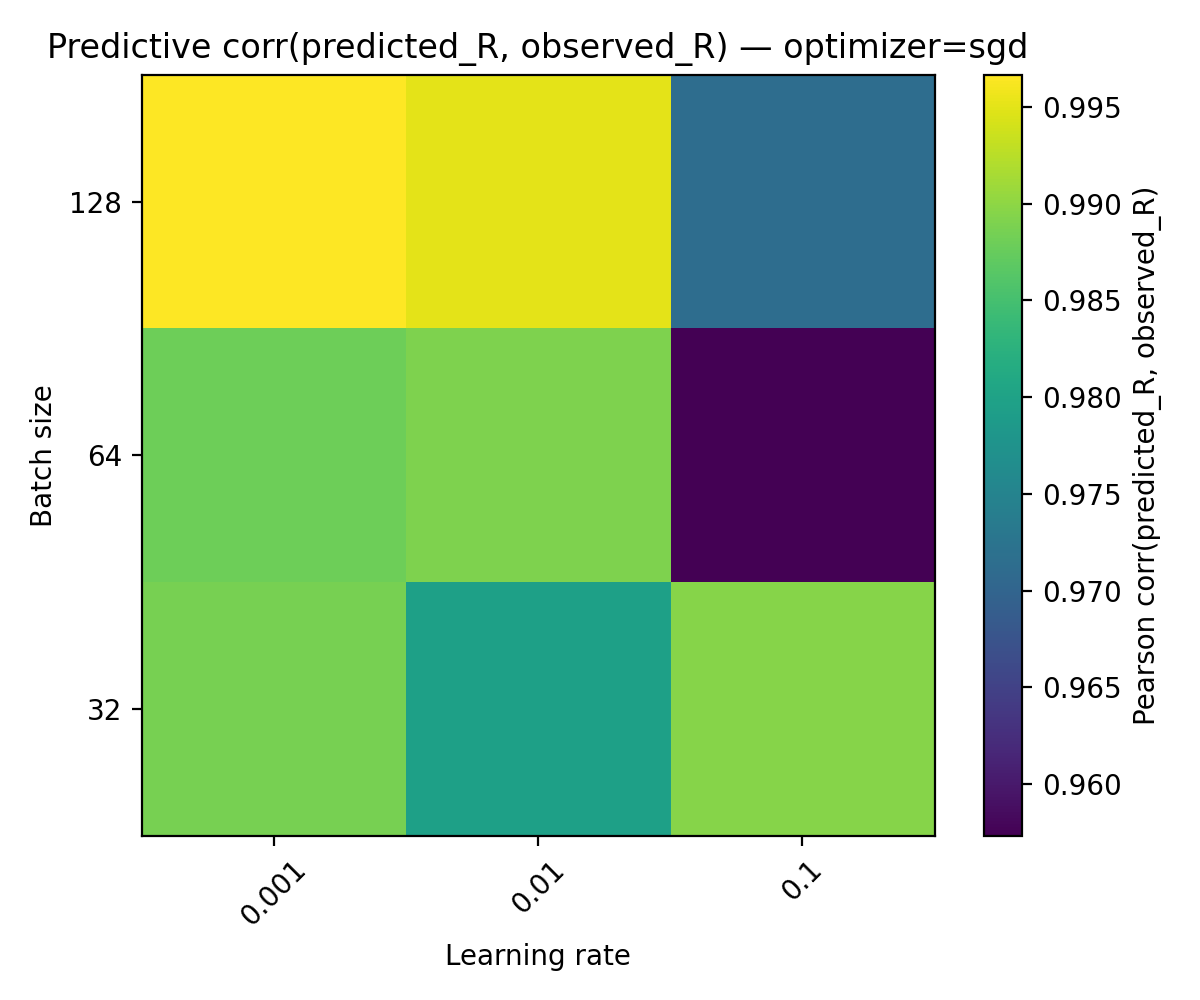}
\caption{The correlation between the predicted displacement and the true displacement.}
\label{fig:pred_core}
\end{figure}

\begin{figure}[H]
\centering
\includegraphics[width=6cm,height=6cm, keepaspectratio]{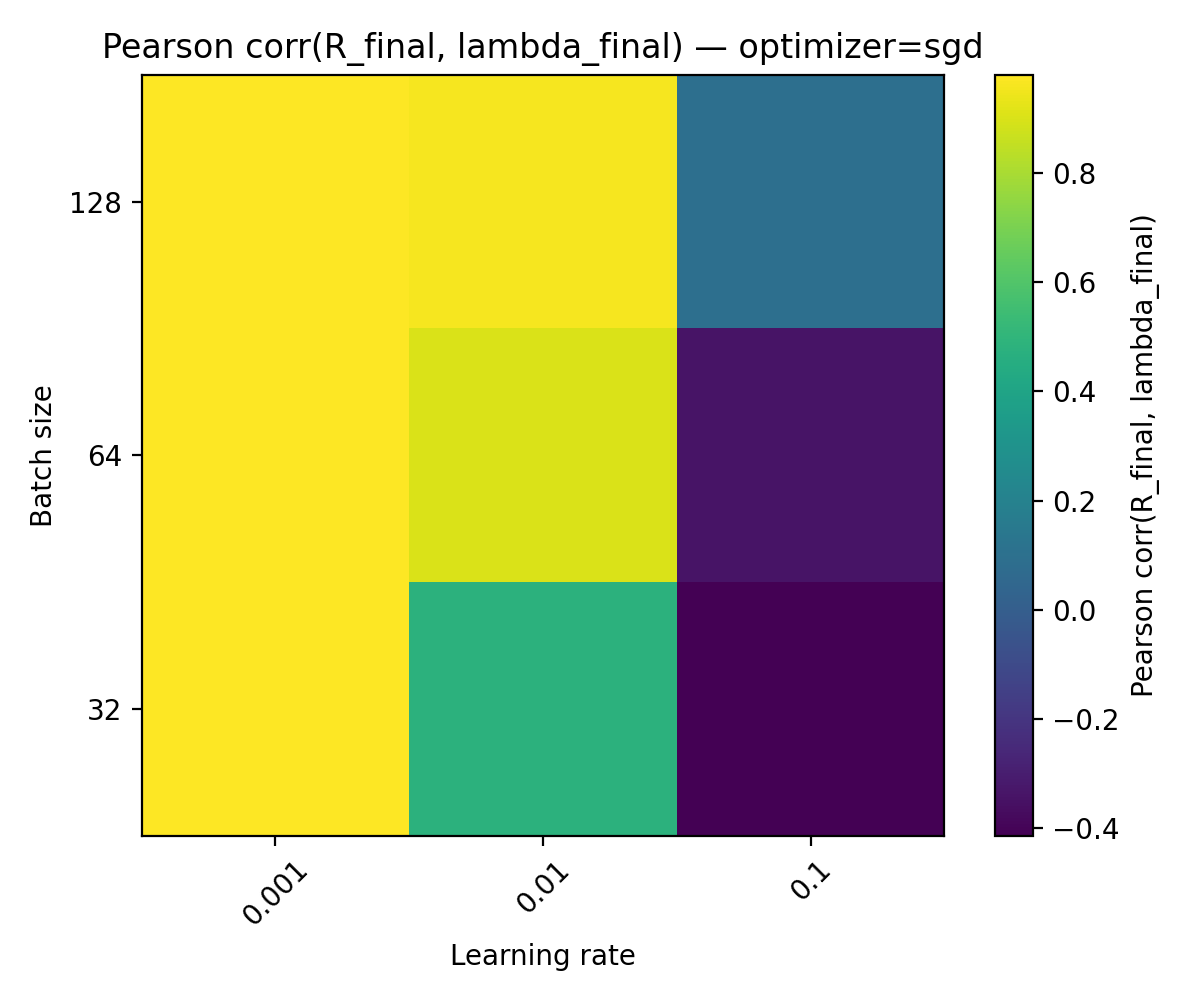}
\caption{The correlation between the final displacement and the final $\lambda$ vs. batch size and learning rate.}
\label{fig:pred_core}
\end{figure}

\begin{figure}[h]
\centering
\begin{subfigure}{.5\textwidth}
  \centering
  \includegraphics[width=6cm,height=6cm, keepaspectratio]{ablation_ims/corr_R_vs_lambda_final_opt-sgd.png}
\caption{The correlation between the final displacement and the final $\lambda$ vs. batch size and learning rate for SGD.}
\end{subfigure}%
\begin{subfigure}{.5\textwidth}
  \centering
    \includegraphics[width=6cm,height=6cm, keepaspectratio]{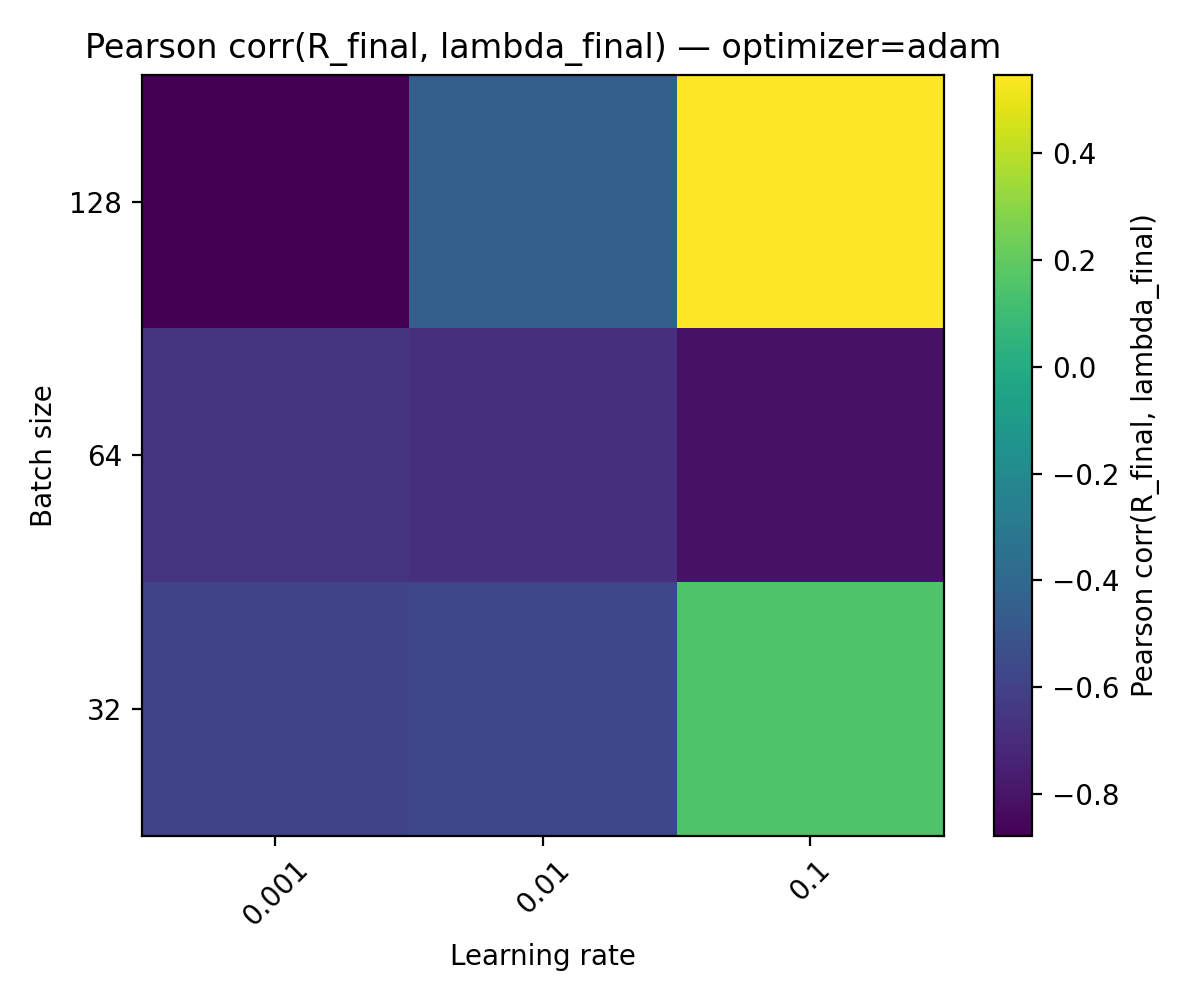}
    \caption{The correlation between the final displacement and the final $\lambda$ vs. batch size and learning rate for adam.}
\end{subfigure}
\caption{Correlations between the displacement and learning coefficient.}
\label{fig:displacement_correlations}
\end{figure}

\subsubsection{SLT and Adam}
Note that in figure \ref{fig:displacement_correlations} it seems that while the learning coefficient correlates very strongly with SGD, Adam does not. We suggest that this is because the LLC measures a quantity associated with the same Riemannian metric over the data as is used by SGD while adam effectively changes the metric structure via preconditioning. It is well known that variable-metric methods, including adaptive optimizers like Adam, can be interpreted as performing gradient descent in a Riemannian metric defined by a positive-definite preconditioner matrix. In the case of Adam, this preconditioner is the diagonal matrix built from the running average of squared gradients. Recent work has made this connection explicit, showing that Adam can be viewed as an approximate natural-gradient method using a diagonal empirical Fisher information matrix as a data-dependent metric, so the singularity structure of the Adam metric is likely different.

\section{Additional Experiments and Results}\label{app:regimes}
\subsection{Diffusion Prediction Accuracy}
If the weight diffusion is indeed fractal, we should expect that using the spectral dimension as estimated by equation \ref{eq:spect_est} we should be able to accurately predict the movement of the weights. While this is seen in the runs presented in table \ref{tab:model_results} they are not presented in the main body for MNIST due to the volume of models trained We can see the histogram of the $R^2$ scores in figure \ref{fig:r2}. An important thing to note here is that these estimations don't explicitly account for the early super-diffusive behaviour seen during initial training, so large periods of super diffusion should decrease the accuracy. However, in other settings where we can account for early super-diffusion behaviours, the predictions become nearly exact. An instance of this is in the case of the Tiny ImageNet models where we can explicitly factor out the adaptive training component and simply fit to the SGD component at the end of training (as our theory explicitly cares about late training stages of SGD). This shift in dynamics can be seen in figure \ref{fig:resnet18_disp}. However, in instances where we can start with SGD, if the model is already near equilibrium, we can see the sub-diffusive behaviour very early with vanilla SGD. An example of this can be seen in figure \ref{fig:lm33m}.

\begin{figure}[H]
\centering
\includegraphics[width=12cm,height=12cm, keepaspectratio]{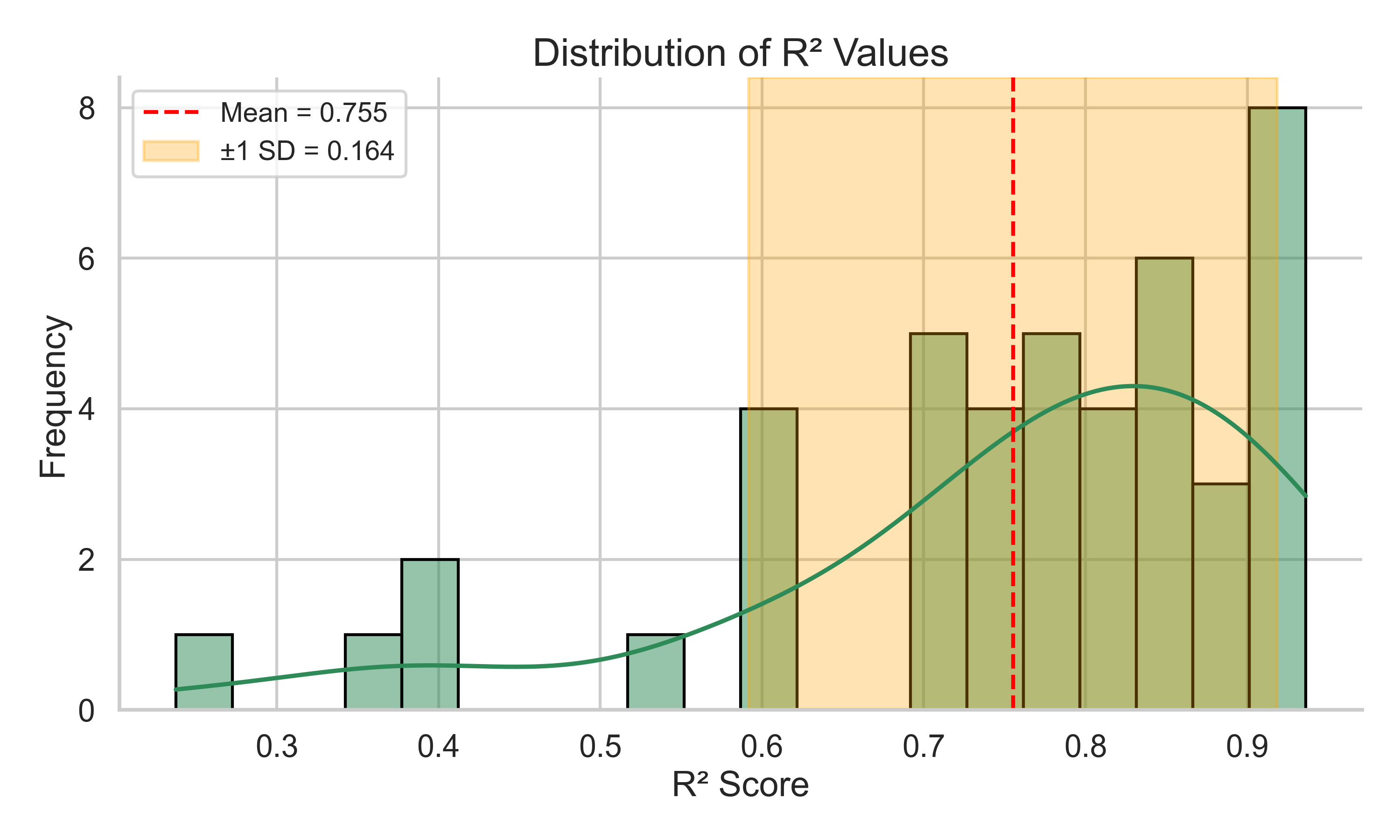}
\caption{The distribution of the $R^2$ scores for identical MNIST models.}
\label{fig:r2}
\end{figure}

\begin{figure}[H]
\centering
\includegraphics[width=12cm,height=12cm, keepaspectratio]{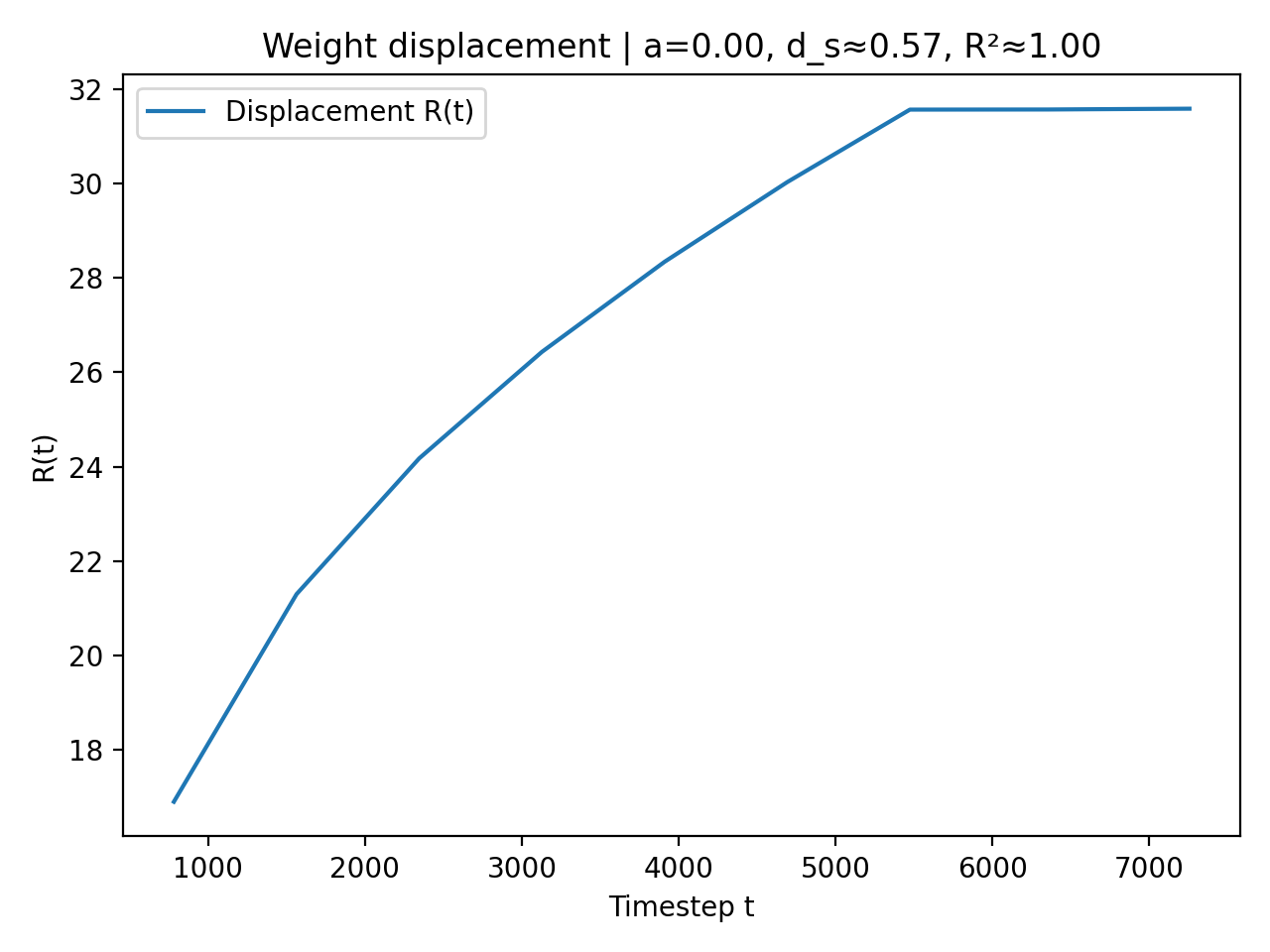}
\caption{Displacement of the ResNet18 model on Tiny ImageNet. A sharp transition from Adam to SGD can be seen near step 5000, at which point the dynamics become distinctly subdiffusive.}
\label{fig:resnet18_disp}
\end{figure}

\begin{figure}[H]
\centering
\includegraphics[width=12cm,height=12cm, keepaspectratio]{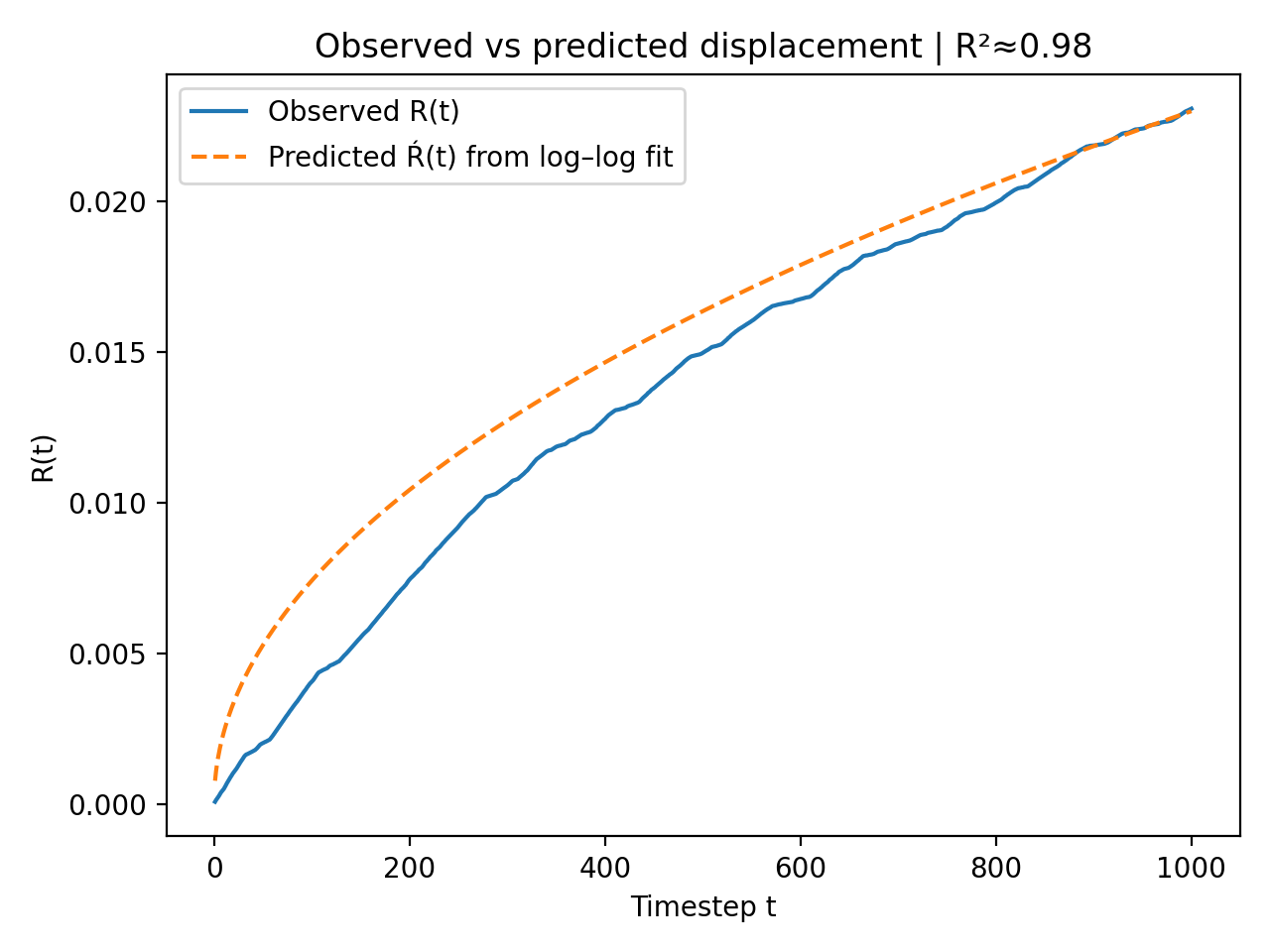}
\caption{Displacement of the TinyStories-33m model along with the displacement predicted by the theory.}
\label{fig:lm33m}
\end{figure}

\subsubsection{Tinystories using Adam}

\begin{table}[h]
    \centering
    \begin{tabular}{c c c c c}
        \hline
        Model name & $\lambda$ & $d_s$ & $\alpha$ & $r^2$ \\
        \hline
        TinyStories-1M & 32.05 & 36.6 & 0.57 & 0.99 \\
        TinyLlama-15M & 77.02 & 48.06 & 0.31 & 0.97 \\
        TinyStories-33M & 40 & 31.95 & 0.39 & 0.98 \\
        \hline
    \end{tabular}
    \caption{Results for language models trained using the Adam optimizer.}
    \label{tab:model_results_adam}
\end{table}
Note that while the theory does seem to hold for Adam in some cases, it is less consistent, reflecting the more complex way in which Adam interacts with the geometry.

\subsection{Average Spectral Dimension}
We also check the result of corollary \ref{cor:av_spect}. This can be seen in figure \ref{fig:av_spect}
\begin{figure}[ht]
\centering
\includegraphics[width=12cm,height=12cm, keepaspectratio]{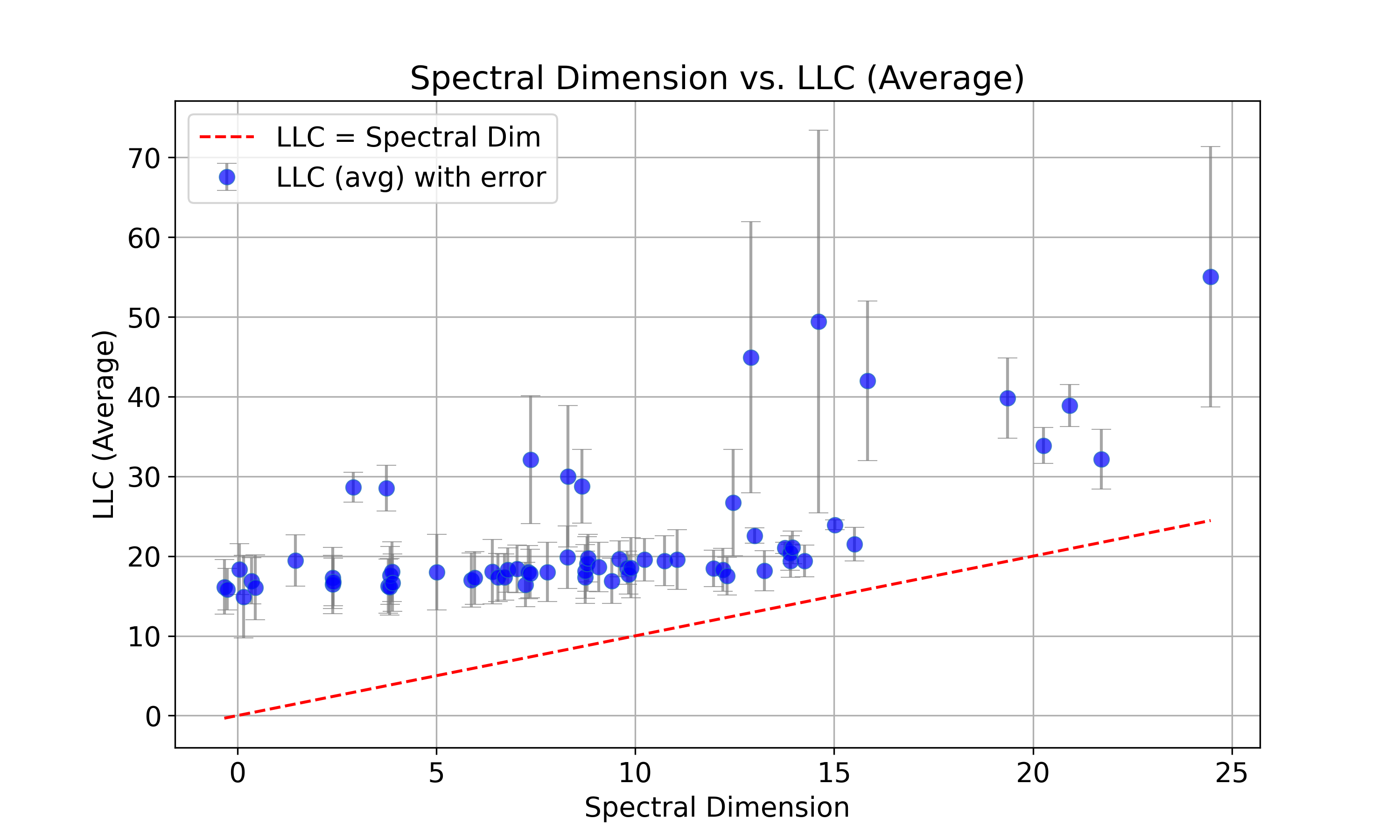}
\caption{The average LLC vs. the spectral dimension. These results align with corollary \ref{cor:av_spect}.}
\label{fig:av_spect}
\end{figure}

\subsection{CIFAR Experiments}
We ran additional experiments using various convolutional architectures on CIFAR10 to explore how the super-diffusive component impacts the theoretical results. We find that while these models display stronger super-diffusive behaviours (increasing in intensity with model size) the super-diffusive behaviour attenuates quickly enough that the theoretical results still hold, even when the super-diffusive behaviour is not factored out. Note as well that the runtimes were not as long, so one might expect the super-diffusive component to have a larger impact.
%We now explore the regimes of SGD, and show that they are seemingly correlated to the number of parameters of a model. 
%In particular, we expect that the results of lemma \ref{lem:spect_ineq} don't always hold in the strongly under-parametrized regime (where diffusion becomes logarithmic) or the strongly over-parametrized regime (where the diffusion starts to become linear). While we did not observe violation of lemma \ref{lem:spect_ineq} using SGD in practice, we suspect that such examples could be constructed by the appropriate choice of model architecture and hyperparameters. 
In the following we restrict ourselves to a subset of 10000 samples from the CIFAR10 dataset where all models are initialized using Xavier initialization, with 0 bias and ReLU activations.

\begin{table}[ht]
\caption{Spectral Dimension vs. LLC across runs for convolutional architectures}
  \centering
  \begin{tabular}{r c c l r r c}
    \toprule
    Num Params & Spect.~Dim & LLC & Channels & Batch & Epochs & LR \\
    \midrule
    1184 & 0.085 & $2.585 \pm 0.779$ & 32 & 64 & 50 & 0.001 \\
    19936 & 3.374 & $5.929 \pm 0.963$ & 32, 64 & 64 & 50 & 0.001 \\
    94304 & 4.887 & $11.829 \pm 1.960$ & 32, 64, 128 & 64 & 50 & 0.001 \\
    372928 & 7.483 & $16.176 \pm 2.175$ & 64, 128, 256 & 64 & 50 & 0.001 \\
    \bottomrule
  \end{tabular}
  \label{tab:spec-llc}
\end{table}

\begin{figure}[H]
\centering
\includegraphics[width=12cm,height=12cm, keepaspectratio]{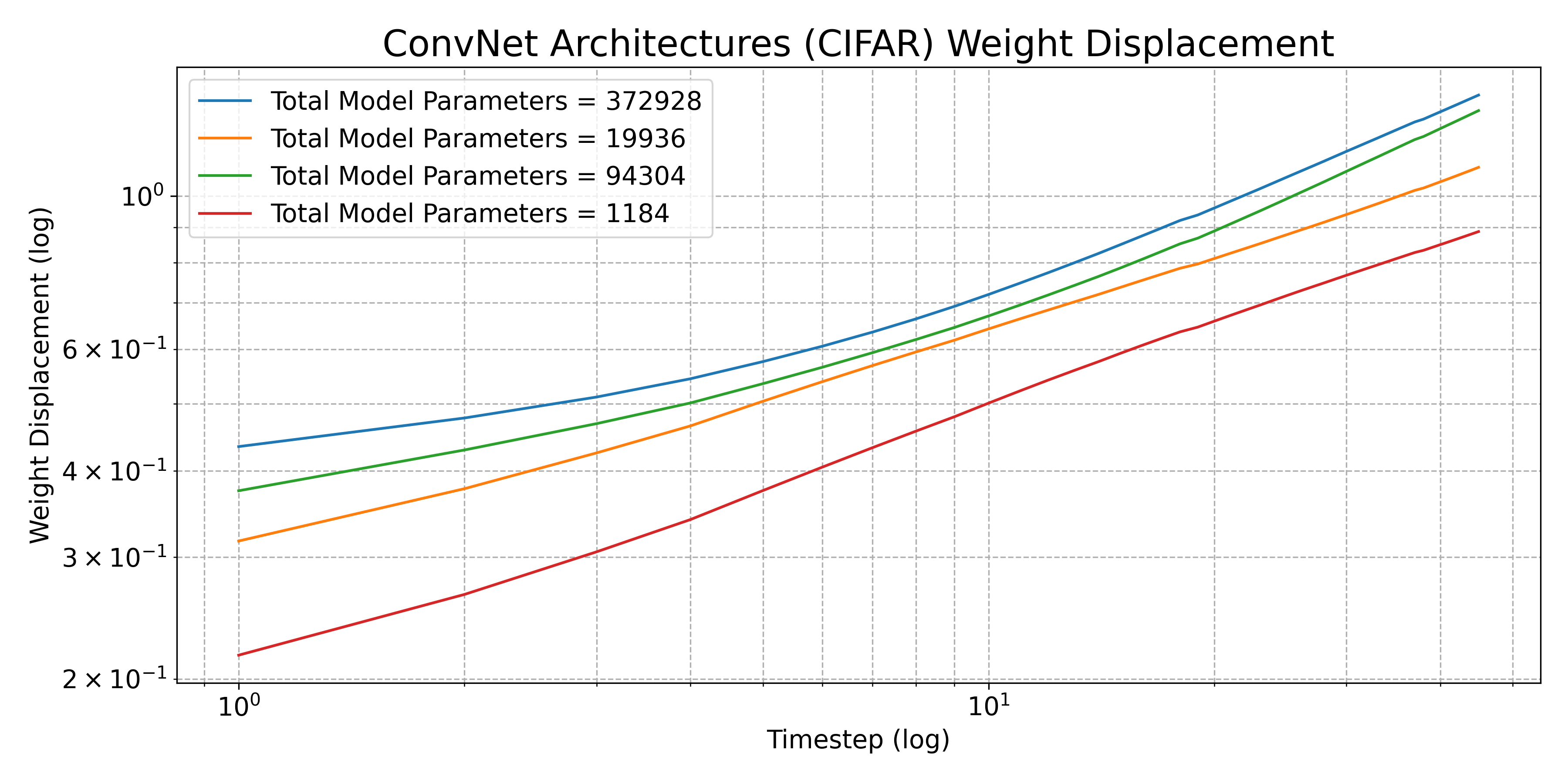}
\caption{Weight displacement over time across a subset of the CIFAR10 dataset for Convolutional Architectures.}
\label{fig:weight_movement_cifar}
\end{figure}

We also investigated the weight displacement for residual architectures. These can be seen in table \ref{tab:spec-llc-resnet} where the first number is the number of channels in between residual blocks.

\begin{table}[ht]
  \caption{Spectral Dimension vs. LLC across runs for residual architectures}
  \centering
  \begin{tabular}{r c c l r r c}
    \toprule
    Num Params & Spect.~Dim & LLC (final) & Channels & Batch & Epochs & LR \\
    \midrule
    1216 & 0.085 & $2.585 \pm 0.779$ & 32 & 64 & 50 & 0.001 \\
    208448 & 3.481 & $14.729 \pm 4.394$ & 16, 32, 64 & 64 & 50 & 0.001 \\
    167424 & 4.534 & $10.408 \pm 2.127$ & 32, 64 & 64 & 50 & 0.001 \\
    831616 & 5.795 & $24.756 \pm 7.614$ & 32, 64, 128 & 64 & 50 & 0.001 \\
    872640 & 9.652 & $37.731 \pm 8.332$ & 16, 32, 64, 128 & 64 & 50 & 0.001 \\
    \bottomrule
  \end{tabular}

  \label{tab:spec-llc-resnet}
\end{table}

\begin{figure}[H]
\centering
\includegraphics[width=12cm,height=12cm, keepaspectratio]{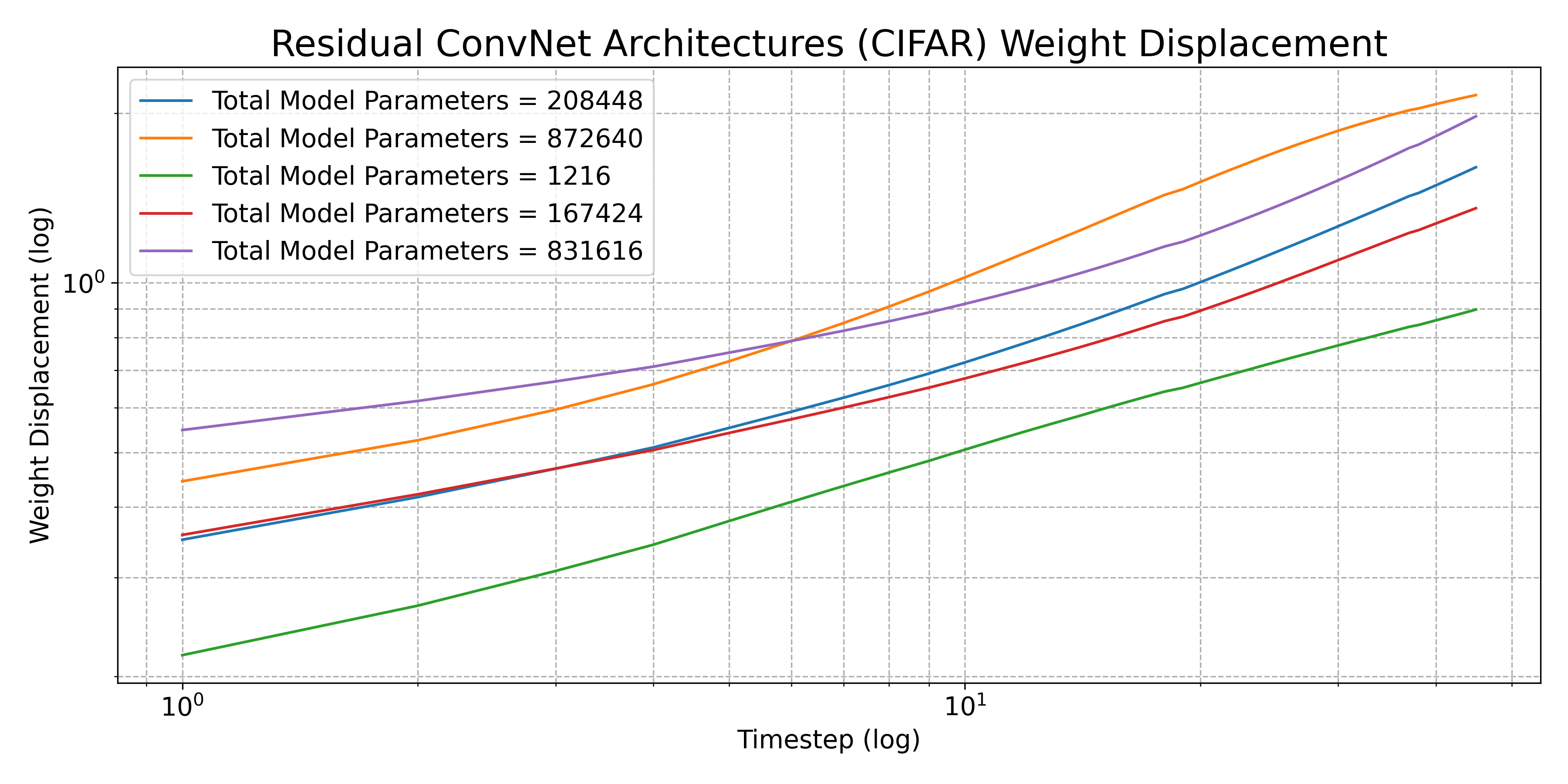}
\caption{Weight displacement over time across a subset of the CIFAR10 dataset for residual architectures.}
\label{fig:resnet_movement}
\end{figure}
If we look at both figures \ref{fig:resnet_movement} and \ref{fig:weight_movement_cifar} we can see that larger models exhibit more super-diffusive behavior. 
%%%%%%%%%%%%%%%%%%%%%%%%%%%%%%%%%%%%%%%%%%%%%%%%%%%%%%%%%%%%
\newpage
\section{MNIST Tables}\label{app:params}

%& 256 & 250 & 0.001
\begin{table}[ht]
    \caption{Spectral Dimension vs. LLC with hyperparameters for varying model architectures.}
  \centering
  \begin{tabular}{r c c l r r c}
    \toprule
    Num Params & Spect.~Dim & LLC (Final) & Layers \\
    \midrule
    242304 & 2.910 & 30.514 & 784, 256, 128, 64  \\
    125568 & 3.749 & 31.424 & 784, 128, 64, 128, 64  \\
    117376 & 7.373 & 40.128 & 784, 128, 64, 32, 64, 32, 64  \\
    244384 & 8.310 & 38.866 & 784, 256, 128, 64, 32, 16  \\
    127296 & 8.661 & 33.361 & 784, 128, 64, 128, 64, 32  \\
    126112 & 12.458 & 33.385 & 784, 128, 64, 128, 64, 16  \\
    121472 & 12.909 & 61.920 & 784, 128, 64, 32, 64, 32, 64, 32, 64 \\
    118016 & 12.996 & 23.551 & 784, 128, 64, 128  \\
    109184 & 13.763 & 20.324 & 784, 128, 64 1 \\
    125568 & 14.603 & 73.419 & 784, 128, 64, 32, 64, 32, 64, 32, 64, 32, 64 \\
    234752 & 15.021 & 23.299 & 784, 256, 128  \\
    143680 & 15.838 & 51.999 & 784, 128, 64, 128, 64, 128, 64, 32 \\
    141952 & 19.357 & 44.844 & 784, 128, 64, 128, 64, 128, 64  \\
    134400 & 20.268 & 36.134 & 784, 128, 64, 128, 64, 128 \\
    143872 & 20.925 & 41.541 & 784, 128, 64, 128, 64, 256  \\
    244032 & 21.719 & 35.879 & 784, 256, 128, 64, 32  \\
    158336 & 24.465 & 71.359 & 784, 128, 64, 128, 64, 128, 64, 128, 64 \\
    \bottomrule
  \end{tabular}
  \label{tab:spec-llc-mnist1}
\end{table}
In table \ref{tab:spec-llc-mnist2} we can see instances where the spectral dimension is negative. This is related to the model having a negative LLC (which occurs generally when there are no near-stable solutions). When negative LLCs occur, it is almost always near initialization where the model also displays super-diffusive displacement (as it is likely moving strongly along a steep gradient). The negative spectral dimension is a remnant of how the spectral dimension is computed in our case. One can reasonably remove occurrences of a negative LLC and perform the estimation which would capture the behavior of the model on the fractal landscape it traverses away from initialization (which is shown to be theoretically valid in appendix \ref{app:early_superdiffusion}). For transparency we do not do this as it illustrates that SGD has complex multifaceted dynamics. 

\newpage
\begin{table}[H]
\caption{Spectral Dimension vs. LLC across runs with different initializations.}
  \centering
  \begin{tabular}{r c c l r r c}
    \toprule
    Num Params & Spect.~Dim & LLC & Channels/Layers & Batch & Epochs & LR \\
    \midrule
    110912 & -0.327 & 19.553 & 784, 128, 64, 32 & 256 & 100 & 0.001 \\
    110912 & -0.252 & 18.440 & 784, 128, 64, 32 & 256 & 100 & 0.001 \\
    110912 & 0.048 & 21.552 & 784, 128, 64, 32 & 256 & 100 & 0.001 \\
    110912 & 0.163 & 20.071 & 784, 128, 64, 32 & 256 & 100 & 0.001 \\
    110912 & 0.356 & 19.785 & 784, 128, 64, 32 & 256 & 100 & 0.001 \\
    110912 & 0.455 & 20.136 & 784, 128, 64, 32 & 256 & 100 & 0.001 \\
    110912 & 1.462 & 22.654 & 784, 128, 64, 32 & 256 & 100 & 0.001 \\
    110912 & 2.396 & 21.117 & 784, 128, 64, 32 & 256 & 100 & 0.001 \\
    110912 & 2.401 & 20.125 & 784, 128, 64, 32 & 256 & 100 & 0.001 \\
    110912 & 2.407 & 19.733 & 784, 128, 64, 32 & 256 & 100 & 0.001 \\
    110912 & 3.790 & 19.628 & 784, 128, 64, 32 & 256 & 100 & 0.001 \\
    110912 & 3.824 & 19.682 & 784, 128, 64, 32 & 256 & 100 & 0.001 \\
    110912 & 3.839 & 21.184 & 784, 128, 64, 32 & 256 & 100 & 0.001 \\
    110912 & 3.882 & 21.821 & 784, 128, 64, 32 & 256 & 100 & 0.001 \\
    110912 & 3.899 & 20.301 & 784, 128, 64, 32 & 256 & 100 & 0.001 \\
    110912 & 5.013 & 22.756 & 784, 128, 64, 32 & 256 & 100 & 0.001 \\
    110912 & 5.881 & 20.381 & 784, 128, 64, 32 & 256 & 100 & 0.001 \\
    110912 & 5.968 & 20.591 & 784, 128, 64, 32 & 256 & 100 & 0.001 \\
    110912 & 6.410 & 22.074 & 784, 128, 64, 32 & 256 & 100 & 0.001 \\
    110912 & 6.556 & 20.343 & 784, 128, 64, 32 & 256 & 100 & 0.001 \\
    110912 & 6.713 & 20.250 & 784, 128, 64, 32 & 256 & 100 & 0.001 \\
    110912 & 6.801 & 21.056 & 784, 128, 64, 32 & 256 & 100 & 0.001 \\
    110912 & 7.036 & 21.386 & 784, 128, 64, 32 & 256 & 100 & 0.001 \\
    110912 & 7.240 & 19.237 & 784, 128, 64, 32 & 256 & 100 & 0.001 \\
    110912 & 7.311 & 21.323 & 784, 128, 64, 32 & 256 & 100 & 0.001 \\
    110912 & 7.360 & 20.886 & 784, 128, 64, 32 & 256 & 100 & 0.001 \\
    110912 & 7.794 & 21.724 & 784, 128, 64, 32 & 256 & 100 & 0.001 \\
    110912 & 8.293 & 23.803 & 784, 128, 64, 32 & 256 & 100 & 0.001 \\
    110912 & 8.739 & 20.638 & 784, 128, 64, 32 & 256 & 100 & 0.001 \\
    110912 & 8.749 & 21.454 & 784, 128, 64, 32 & 256 & 100 & 0.001 \\
    110912 & 8.796 & 22.440 & 784, 128, 64, 32 & 256 & 100 & 0.001 \\
    110912 & 8.812 & 22.730 & 784, 128, 64, 32 & 256 & 100 & 0.001 \\
    110912 & 9.081 & 21.760 & 784, 128, 64, 32 & 256 & 100 & 0.001 \\
    110912 & 9.408 & 19.758 & 784, 128, 64, 32 & 256 & 100 & 0.001 \\
    110912 & 9.605 & 21.916 & 784, 128, 64, 32 & 256 & 100 & 0.001 \\
    110912 & 9.795 & 20.633 & 784, 128, 64, 32 & 256 & 100 & 0.001 \\
    110912 & 9.838 & 20.142 & 784, 128, 64, 32 & 256 & 100 & 0.001 \\
    110912 & 9.898 & 22.295 & 784, 128, 64, 32 & 256 & 100 & 0.001 \\
    110912 & 10.230 & 22.213 & 784, 128, 64, 32 & 256 & 100 & 0.001 \\
    110912 & 10.742 & 22.556 & 784, 128, 64, 32 & 256 & 100 & 0.001 \\
    110912 & 11.060 & 23.296 & 784, 128, 64, 32 & 256 & 100 & 0.001 \\
    110912 & 11.966 & 20.767 & 784, 128, 64, 32 & 256 & 100 & 0.001 \\
    110912 & 12.204 & 20.973 & 784, 128, 64, 32 & 256 & 100 & 0.001 \\
    110912 & 12.308 & 19.877 & 784, 128, 64, 32 & 256 & 100 & 0.001 \\
    110912 & 13.253 & 20.701 & 784, 128, 64, 32 & 256 & 100 & 0.001 \\
    110912 & 13.889 & 22.577 & 784, 128, 64, 32 & 256 & 100 & 0.001 \\
    110912 & 13.907 & 21.487 & 784, 128, 64, 32 & 256 & 100 & 0.001 \\
    110912 & 13.952 & 23.129 & 784, 128, 64, 32 & 256 & 100 & 0.001 \\
    110912 & 14.258 & 21.386 & 784, 128, 64, 32 & 256 & 100 & 0.001 \\
    110912 & 15.516 & 23.590 & 784, 128, 64, 32 & 256 & 100 & 0.001 \\
    \bottomrule
  \end{tabular}
  \label{tab:spec-llc-mnist2}
\end{table}
\newpage
\section{Compute Resources}\label{app:comp}
All experiments were run on a single Nvidia RTX A4000 GPU, a single Intel Xeon W-2223 CPU, and 32GB of physical RAM. The individual runs vary greatly in compute times, ranging from $\approx$ 5 minutes to $\approx$ 1 hour, with the total compute time at $\approx$ 30 hours.

\section{LLM Usage Disclosure}
LLMs were used in this work primarily to locate results (formulas, etc) in papers and textbooks.

\end{document}